\documentclass[mnsc,nonblindrev]{informs3} 
\OneAndAHalfSpacedXI 

\usepackage{endnotes}
\usepackage[ruled]{algorithm}
\usepackage{algorithmicx,algpseudocode,amsfonts,subfigure}
\usepackage{graphicx,hyperref,comment,appendix}
\usepackage[capitalize,noabbrev]{cleveref}
\usepackage{tikz}
\usepackage{subfigure,caption,microtype,booktabs} 

\usepackage{amsthm,amsmath,amssymb,soul,bm}
\usepackage{comment,enumitem}
\setlist{leftmargin=*}
\usepackage{soul}

\newtheorem{assumption}{Assumption}
\newtheorem{condition}{Condition}
\newtheorem{theorem}{Theorem}
\newtheorem{corollary}{Corollary}
\newtheorem{lemma}{Lemma}
\newtheorem{proposition}{Proposition}
\newtheorem{remark}{Remark}

\renewcommand{\Re}{\mathbb{R}}
\newcommand{\ie}{\textit{i.e.}}
\newcommand{\satexwk}{\texttt{SELECTwK}}
\newcommand{\eg}{\textit{e.g.}}
\newcommand{\E}{\mathbb{E}}
\renewcommand{\Pr}{\text{Pr}}
\newcommand{\regret}{\texttt{Regret}}

\usepackage{natbib}
\bibpunct[, ]{(}{)}{,}{a}{}{,}%
\def\bibfont{\small}%
\usepackage{bm,comment}

\ECRepeatTheorems

\EquationsNumberedThrough    

\allowdisplaybreaks
\begin{document}


\RUNAUTHOR{}

\RUNTITLE{Satisficing Regret Minimization in Bandits with Knapsacks}



\TITLE{Satisficing Regret Minimization in Bandits with Knapsacks}

\ARTICLEAUTHORS{

\AUTHOR{Qing Feng}
\AFF{School of Operations Research and Information Engineering, Cornell University\\
	\EMAIL{qf48@cornell.edu}}

\AUTHOR{Tianyi Ma}
\AFF{School of Operations Research and Information Engineering, Cornell University\\
	 \EMAIL{tm693@cornell.edu}}

\AUTHOR{Ruihao Zhu}
\AFF{SC Johnson College of Business, Cornell University\\
	\EMAIL{ruihao.zhu@cornell.edu}} 

} 

\ABSTRACT{%
We study satisficing regret minimization in bandits with knapsacks (BwK), where a learner sequentially selects randomized policies, \ie, distributions over arms, to obtain mean rewards above a satisficing threshold as frequently as possible while respecting finite resource budgets.
The performance is measured by satisficing regret, the cumulative deficits of the selected policies' mean rewards below this threshold.
Resource constraints introduce two difficulties: an algorithm can incur linear satisficing regret even with small standard regret as deficits to the threshold cannot be compensated, and random consumption can cause early resource depletion even when policies are feasible in expectation.
To address these challenges, we propose \satexwk, a general algorithmic template that uses a BwK learning oracle algorithm with sub-linear expected (standard) regret upper bound to search for and test candidate satisficing policies (\ie, policies whose mean rewards exceed satisficing threshold and whose expected resource consumption is feasible over the time horizon). 
Our regret analysis, supported by an LP-duality argument, establishes that \satexwk~attains constant expected satisficing regret when a satisficing policy exists.
We further introduce a stronger regret condition that bounds the expected sum of positive reward deficits to the optimal one at each individual time step, under which \satexwk~achieves a sharper constant bound together with a sub-linear bound in the time horizon.
As a complement, \satexwk~still preserves the oracle's sub-linear expected standard regret bound if the satisficing threshold is unattainable. 
Finally, we establish the oracle conditions and corresponding satisficing regret bound for existing finite-armed BwK and dynamic pricing algorithms, and conduct numerical experiments on dynamic pricing with limited inventory to demonstrate the empirical performance of \satexwk.
}%


\KEYWORDS{online learning, bandits with knapsacks, satisficing, regret} 

\maketitle

%

\section{Introduction}

Bandits with knapsacks (BwK) provide a framework for sequential decision-making under uncertainty and resource constraints. In BwK, a learner repeatedly selects an arm, receives a noisy reward, and consumes a random amount of each resource. The mean rewards and resource consumptions of the arms are initially unknown and must be learned from the observed outcomes. The learner aims to maximize the expected cumulative reward while keeping the total consumption of each resource within its budget. To achieve this, the learner must balance exploring different arms to learn their outcomes and exploiting the information collected to raise rewards, while accounting for the resources that remain available for future decisions.

While maximizing cumulative reward is a natural objective, in many applications a decision-maker may be satisfied with performance that meets a prescribed target. This preference reflects the notion of \textit{satisficing} \citep{Lufkin21}, coined by \cite{Simon56}, where an acceptable solution is sufficient without requiring the best possible one. In this paper, we study satisficing bandits with knapsacks, where the learner seeks to select policies, \ie, distributions over arms, whose mean rewards exceed a prescribed satisficing threshold as frequently as possible while respecting finite resource budgets. This objective finds its place in a variety of decision-making settings where satisfactory performance must be maintained with limited resources, as illustrated by the following examples:

\begin{itemize}
\item \textbf{Cloud Service:} Consider an operator choosing configurations for a cloud service. In each period, the operator deploys a configuration, such as a resource allocation or batching rule, and observes its service quality and resource consumption. The operator seeks to meet a prescribed service-level objective (SLO), such as meeting 99\% availability measured over a calendar week or having at least 95\% of requests receive a response within 100 milliseconds \citep{Google26}. Configurations meeting this target are acceptable, whereas pursuing higher service quality may require additional resources without providing a meaningful improvement to users.
Moreover, the operator may have a fixed total computing budget over the planning horizon, and a configuration that pursues an optimal service quality may consume resources too quickly and influence decisions in later periods. 

\item \textbf{Inventory Management:} 
Consider a retailer choosing stocking quantities for successive selling periods. Product shortages can lead to lost sales and discourage customers from returning \citep{fitzsimons2000stockout,jing2011stockout}. 
However, improving availability generally imposes pressure on purchasing budget as large orders in earlier periods leave fewer funds for subsequent purchases \citep{Oracle26}. Searching for the optimal stocking quantity that maximizes service level is also very costly. Retailers may thus work toward an acceptable service-level target, such as a 90\%--95\% availability, rather than seeking the highest possible one \citep{levy2023retailing}, while respecting the purchasing budget over the planning horizon. 

\end{itemize}

To measure performance against the targets illustrated by these examples, we use \textit{satisficing regret}, defined as the expected cumulative positive deficit of the selected policies' mean rewards relative to the satisficing threshold.
In the setting without resource constraints, \cite{Michel23} establish constant expected satisficing regret for finite-armed bandits when an arm with mean reward above the threshold exists.
\cite{feng2025satisficing} further show that a blackbox learning oracle algorithm with sub-linear expected standard regret can be used to attain constant expected satisficing regret across different bandit problem classes when the optimal mean reward exceeds the threshold.
When the threshold is unattainable, their algorithm preserves the oracle's expected standard regret bound.

However, extending these results to bandits with knapsacks introduces two difficulties.
First, a resource-feasible algorithm can attain a near-optimal cumulative reward while repeatedly selecting policies (distributions of arms) whose mean rewards fall below the satisficing threshold.
The reason is that high rewards in some periods can compensate for low rewards in other periods when computing standard regret, whereas satisficing regret penalizes each shortfall separately.
Moreover, even identifying a \textit{satisficing policy}, \ie, a policy with mean reward exceeding satisficing threshold and expected resource consumption feasible along time horizon, is not sufficient to ensure constant satisficing regret.
Randomness in resource consumption may still cause resources to be depleted before the end of the horizon.
Therefore, besides identifying policies with satisficing mean rewards, the learner must account for their resource consumption and protect against early depletion.

In this paper, we propose \satexwk, a general algorithmic template that uses a bandits-with-knapsacks learning oracle to identify and test candidate satisficing policies while controlling resource consumption. 
We establish constant expected satisficing regret in the \textit{realizable} case, \ie, satisficing policies exist, under the condition that the learning oracle attains sub-linear expected standard regret, 
and further investigate how a stronger guarantee on the learning oracle improves the constant dependence.

\subsection{Main Contribution}
In this work, we propose \satexwk, a novel algorithmic template for \underline{S}atisficing R\underline{E}gret Minimization via Samp\underline{L}ing and Low\underline{E}r \underline{C}onfidence Bound \underline{T}esting \underline{w}ith \underline{K}napsack. More specifically:
\begin{itemize}
\item In Section~\ref{sec:prob_formulate}, we formulate a satisficing regret minimization problem for bandits with resource constraints. The learner needs to select randomized policies whose mean rewards exceed the satisficing threshold, \ie, a satisficing policy, as frequently as possible while keeping the total consumption of each resource within a resource budget. We define satisficing regret as the expected cumulative deficits of the selected policies' mean rewards to the satisficing threshold, and introduce an LP relaxation to upper bound this optimal expected cumulative reward. In the realizable case, the learner aims to maintain low cumulative satisficing regret. Otherwise, in the non-realizable case, the learner aims to maintain sub-linear standard regret. We further discuss why resource constraints fundamentally change the problem. Sub-linear standard regret does not imply low satisficing regret, since rewards above the LP optimal reward can compensate for low rewards at other time steps. Moreover, always selecting satisficing policies does not ensure constant satisficing regret, since the randomness in resource consumption may cause early resource depletion.

\item We describe the design of \satexwk~in Section~\ref{sec:algorithm_description}. For a given class of bandit-with-knapsack problem and a corresponding learning algorithm with a sub-linear standard regret bound, \satexwk~first adjusts the resource budget with a margin to hedge against consumption randomness, then it runs in rounds with the help of the learning algorithm. In each round, \satexwk~first runs the learning algorithm for a number of time steps and allocate the adjusted resources to the algorithm by time. Then, it computes the average of the policy trajectory and proposes a candidate policy by mixing null arm (which deterministicly incur zero reward and zero resource consumption) into the average policy. After it conducts forced sampling by pulling this candidate policy for a pre-specified number of time steps, it continues to pull this policy and keeps monitoring the lower confidence bound (LCB) of its mean reward and the upper confidence bound (UCB) of its mean resource consumption. \satexwk~terminates the current round and starts the next as long as the LCB falls below the satisficing level or any UCB exceeds the per-time-step resource budget.

\item In Section~\ref{sec:regret_analysis}, we analyze the regret of \satexwk~when the learning oracle has a sub-linear expected standard regret bound. We establish a lemma that uses LP duality to control the expected positive part of the oracle's cumulative standard regret. With this lemma, we show that \satexwk~attains constant expected satisficing regret in the realizable case. In the non-realizable case, \satexwk~attains a sub-linear expected standard regret bound that matches the learning oracle's bound up to logarithmic factors.

\item In Section~\ref{sec:Improved Regret Bound}, we introduce a strong regret condition that controls the expected sum of positive reward deficits at individual time steps relative to the LP optimal reward. Under this stronger condition, we address two limitations of the satisficing regret bound established in Section~\ref{sec:regret_analysis}. Specifically, \satexwk~attains a sharper constant expected satisficing regret bound in terms of the exceeding gap (\ie, the difference between the optimal reward and the threshold value), together with a bound that is sub-linear in the time horizon and remains valid when the exceeding gap is small.

\item In Section~\ref{sec:examples}, we instantiate \satexwk~for three classes of bandit optimization problems with resource constraints: finite-armed bandits with knapsacks, dynamic pricing with continuous prices under inverse demand, and dynamic pricing with finite prices and unknown demand. We establish expected strong regret bounds and the resulting expected constant satisficing regret bound for the first two classes. For the third class, we provide the corresponding bound under the expected standard regret condition.

\item In Section~\ref{sec:synthetic_experiment}, we conduct numerical experiments to demonstrate the performance of \satexwk~on a synthetic instance of dynamic pricing on single resource with selling rate linear in price. We use TS-fixed, an algorithm based on Thompson sampling designed for finite-armed bandits with knapsack proposed by \cite{ferreira2018online}, as the learning oracle and the benchmark. We consider two cases where the optimal reward is reached by the price that exhausts time budget, \ie, the resource constraint does not bind at the optimum, and is reached by the price that exhausts resource budget, \ie, the resource constraint binds at the optimum. Our results reveal that in both cases, \satexwk~does achieve constant expected satisficing regret when it is realizable and its standard regret is comparable to that of the benchmark when it is non-realizable.  

\end{itemize}

\subsection{Related Works}\label{sec:related_work}

The most relevant works are \cite{Michel23} and \cite{feng2025satisficing}, which establish constant expected satisficing regret when the optimal mean reward exceeds the threshold. \cite{Michel23} focus on finite-armed bandits, whereas \cite{feng2025satisficing} propose an algorithmic template that uses a learning oracle with sub-linear standard regret to achieve constant expected satisficing regret across different bandit problem classes. Related finite-armed results include \cite{BubeckPR13} and \cite{GarivierM19}, who establish constant regret under prior knowledge about the optimal mean reward, and \cite{TamatsukuriT19}, who establish constant regret when there is exactly one satisficing arm. Bounded satisficing regret has also been established in multi-objective bandits by \cite{HuyukT21} and finite-state Markov decision processes by \cite{hajiabolhassan2026online}. More recently, \cite{rouyer2024understanding} refine the dependence of satisficing regret on the exceeding gap, and \cite{zhang2026peril} show that constant satisficing regret may no longer be attainable in nonstationary bandits.

The notion of satisficing has also been studied under different regret definitions. An early formulation of satisficing regret, given by \cite{RussoV22}, considers a time-discounted setting. \cite{ReverdyL14,AbernethyAZ16} maximize the probability that realized rewards exceed a threshold, and a more general Bayesian setting is further considered by \cite{ReverdySL16}. Instead, our satisficing regret is defined using mean rewards. The notion of satisficing is also studied in the pure exploration setting through thresholding bandits \citep{LocatelliGC16,MukherjeeNSR17} and good arm identification \citep{KanoHSM19,li2025identifying}. 

Resource constraints complicate the problem as actions will affect the resources available for future decisions, \ie, repeatedly selecting a policy with high mean reward may deplete resources before the end of the time horizon. In the dynamic and stochastic setting, \cite{kleywegt1998dynamic} study whether to accept randomly arriving items when their reward and size distributions are known. Bandit models introduce the additional need to learn rewards and consumption costs: \cite{tran2012knapsack} consider unknown rewards with known pulling costs, whereas \cite{ding2013multi} allow the costs to be random and unknown. \cite{combes2015bandits} further derive regret lower bounds and asymptotically optimal algorithms for bandits with budget.

\cite{BKS18} introduce the general bandit with knapsack (BwK) framework, in which each arm pull generates a random reward and consumes multiple resources. A randomized policy over arms is used to balance reward and resource consumption, and an LP relaxation is then introduced. Algorithm with near-optimal worst-case regret is provided by \cite{BKS18}, and \cite{flajolet2015logarithmic} and \cite{li2021symmetry} further establish instance-dependent logarithmic regret bounds for BwK. Beyond cumulative regret, \cite{sankararaman2021bandits} bound the number of rounds in which mean reward of the selected policy falls below the optimal one by a given amount. In our work, we study cumulative satisficing regret relative to a given threshold and establish a constant expected bound in realizable cases.

Our work is also closely related to the literature on dynamic pricing with unknown demand and limited inventory. In the single-product setting, \cite{besbes2009dynamic} and \cite{babaioff2015dynamic} study demand learning with limited inventory, while \cite{wang2014close} close the performance gap between parametric and nonparametric learning. With multiple products, pricing decisions must also account for shared resource capacities, leading to the blind network revenue management problem studied by \cite{besbes2012blind}. In the multi-product setting, \cite{chen2019nonparametric} develop a nonparametric self-adjusting control, while \cite{ferreira2018online} combine Thompson sampling with linear programming for network revenue management. Primal-dual methods provide another approach to guiding resource allocation. \cite{chen2022primal} develop a primal-dual learning algorithm for personalized pricing with an inventory constraint, and \cite{miao2024demand} introduce demand balancing in primal-dual optimization for blind network revenue management. While these works use dual variables to guide resource allocation, we use LP duality to bound the expected negative part of cumulative standard regret.

\section{Problem Formulation}\label{sec:prob_formulate}

\subsection{Notations}
 We define $[n]$ to be the set $\{1,2,\ldots,n\}$ for any positive integer $n.$ For $k\in [1, \infty]$, we use $\| x\|_k$ to denote the $\ell_k$-norm of a vector $ x\in\Re^d.$ We use $\log_a(\cdot)$ to denote the logarithm with base $a.$ When $a$ is left unspecified, this is the natural logarithm. We adopt the asymptotic notations $O(\cdot),\Omega(\cdot),$ and $\Theta(\cdot)$ as defined in \citep{CLRS09}. When logarithmic factors are omitted, we write them as $ \tilde{O}(\cdot),\tilde{\Omega}(\cdot),$ and $\tilde{\Theta}(\cdot)$. We also use $\textup{polylog}(y)$ to denote a polynomial in $\log(y)$. With some abuse, these notations are used when we try to avoid the clutter of writing out constants explicitly. The indicator variable is denoted as $\bm{1}[\cdot].$ 

 \subsection{Model}

\noindent\textbf{Bandits with Resource Constraints.} Consider a class of bandit optimization problems with resource constraints $(\mathcal{X},\mathcal{R},\mathcal{C})$. Here, $\mathcal{X}$ is the arm set, resources are indexed by $[d]$, $\mathcal{R}\subseteq\{r:\mathcal{X}\to\mathbb{R}\}$ is the class of admissible reward functions, and $\mathcal{C}\subseteq\{c:\mathcal{X}\to\mathbb{R}^d \}$ is the class of admissible expected resource consumption functions. Pulling arm $X\in\mathcal{X}$ in a time step brings a random reward with mean $r(X)$ and a random consumption with mean $c_j(X)$ for each resource $j\in[d]$. The learner knows $r\in\mathcal{R}$ and $c\in\mathcal{C}$, but the exact reward function $r$ and resource consumption function $c$ is initially unknown to the learner.

At the beginning of the time horizon with length $T$, the learner has $B_j$ units of resource $j$ for any $j\in[d]$. In each time step $t\in[T]$, the learner decides a \textit{policy}, \ie, a distribution of arms $\pi_t\in\Delta(\mathcal{X})$. (Here we use $\Delta(\mathcal{X})$ to denote the set of all distributions over the arm set $\mathcal{X}$.) Then an arm $X_t$ is sampled from the distribution $\pi_t$, and the learner pulls $X_t$, and observes and receives an i.i.d. noisy reward $Y_t\in[0,1]$ with mean $r(X_t)$. At the same time, for each $j\in[d]$, an i.i.d. resource consumption $C_t^j(X_t)\in[0,1]$ with mean $c_{j}(X_t)$ is deducted from the remaining resource $j$. If the total consumption of any resource exceeds the initial supply, the learner has to set $\pi_t=\bm{1}_{0}$, \ie, deterministically pull the null arm that incurs zero reward and zero resource consumption.

\noindent\textbf{Satisficing and Standard Regret.} For simplicity, we use $r(\pi)=\mathbb{E}_{X\sim\pi}[r(X)]$ to denote the mean reward of a policy $\pi$, and $c_j(\pi)=\mathbb{E}_{X\sim\pi}[c_j(X)]$ as the mean consumption for resource $j$ of a policy $\pi$. Similar to the setting without resource constraints, in the setting with resource constraints, the learner is given a satisficing threshold value $S$, and the goal of the learner is to select policies with mean reward above the threshold $S$ as frequently as possible using limited resources. To achieve this goal, we measure the performance of the algorithm using \textit{satisficing regret}, defined as the expected cumulative deficit of the mean reward of the policy against the threshold value $S$. The formal definition of satisficing regret is given by
\begin{equation*}
\regret_S=\E\left[\sum_{t=1}^T \max\{S-r(\pi_t),0\}\right].
\end{equation*}

\begin{remark}[Definition of Satisficing Regret with Resource Constraints]

In the settings of BwK, we define the satisficing regret using the mean reward of policies instead of that of individual arms, which is different from the satisficing regret defined for the setting without resource constraints. The reason for this is as follows: The general goal of satisficing regret minimization with resource constraints is to identify satisficing policies whose mean reward is above the satisficing threshold value $S$ and its mean resource consumptions are within the initial resource budget divided by $T$. In many cases, satisficing policies cannot be achieved by deterministically pulling a single arm, thus it is often necessary for satisficing policies to randomize over multiple arms. Our results can be transformed to the satisficing regret defined based on mean reward of individual arms under certain condition, \eg, when the arm set $\mathcal{X}$ and reward functions $r$ are convex, as assumed in~\cite{babaioff2015dynamic}.

\end{remark}

Besides satisficing regret, we also consider the standard regret for bandits with resource constraints. Since the optimal dynamic policy is difficult to derive even with full knowledge of reward and resource consumption, we define the per-time-step LP relaxation on the value of optimal dynamic policy, given by
\begin{equation}
\begin{aligned}
Z^*(T,\bm{B})=&\underset{\pi\in\Delta(\mathcal{X})}{\max}\ r(\pi)\\
&\text{s.t.}\ c_j(\pi)\leq B_j/T,\ \forall j\in[d].
\label{eq:problem_formulation-lp}
\end{aligned}
\end{equation}
Lemma 3.1 in \cite{BKS18} shows that $T\cdot Z^*(T,\bm{B})$ is an upper bound on the cumulative expected reward of the optimal dynamic policy. Following notations in the literature of multi-armed bandits and BwK, we define the standard (cumulative) regret as the difference between the expected reward of an algorithm and the upper bound on the optimal expected reward, given by
\begin{equation}\label{eq:standard_regret}
\regret=T\cdot Z^*(T,\bm{B})-\E\left[\sum_{t=1}^T r(\pi_t)\right].
\end{equation}

If $S\leq Z^*(T,\bm{B})$, then there exists a policy $\pi$ such that $r(\pi)\geq S$ and $c_j(\pi)\leq B_j/T$ for all $i\in[d]$, thus we consider this case \textit{realizable}, and in this case our goal is to attain a constant satisficing regret. Otherwise if $S>Z^*(T,\bm{B})$, then such policy does not exist, thus we consider this case \textit{non-realizable}, and in this case our goal is to maintain a sub-linear standard regret.

We assume that the budget for resource scales linearly with the time horizon. Such assumption is common in the literature of BwK \citep{BKS18, lyu2023bandits}.

\begin{assumption}[Linear in $T$ Resource]
There exists a constant $b>0$ such that $B_j\geq bT$ for all $j\in[d]$.
\label{assum:linear-resource}
\end{assumption}

\subsection{Model Discussion} \label{sec:model_discussion}

After introducing the problem formulation, we would like to provide some discussions on the impact of resource constraints to the problem. Specifically, with the presence of resource constraints, some properties that are straightforward in the unconstrained setting no longer hold with the presence of resource constraints, suggesting that the additional resource constraints fundamentally change the problem thus require new analysis.

{\noindent\bf 1. Satisficing Regret Is Not a Weaker Regret Notion than Standard Regret.}
In the setting without resource constraints, satisficing regret is upper bounded by standard regret whenever the satisficing level does not exceed the optimal mean reward. Specifically, letting $r^*=\max_{X\in\mathcal{X}}r(X)$, we have $\max\{S-r(\pi_t),0\}\leq r^*-r(\pi_t)$ for every selected policy $\pi_t$ when $S\leq r^*$. Therefore, a sub-linear expected standard regret bound directly implies a sub-linear satisficing regret bound. However, this relation no longer holds with the presence of resource constraints. The reason for this is that the optimal reward $Z^*(T,\bm{B})$ in the LP relaxation does not upper bound the mean reward of every policy selected by a feasible algorithm. Specifically, a feasible algorithm only needs to satisfy the cumulative resource constraints and may select a policy $\pi$ with $r(\pi)>Z^*(T,\bm{B})$ at individual time steps, even though such a policy has $c_j(\pi)>B_j/T$ for at least one resource $j$ and is therefore infeasible for the LP relaxation. When computing standard regret, mean rewards above $Z^*(T,\bm{B})$ in these time steps can compensate for low mean rewards in other time steps along the trajectory. In contrast, satisficing regret penalizes each shortfall below the threshold $S$ separately.

For instance, consider an even time horizon $T$, a single resource with budget $B=T/2$, and two arms: an arm with deterministic reward and resource consumption both equal to $1$, and the null arm with zero reward and zero resource consumption. The optimal reward in the LP relaxation is $Z^*(T,B)=1/2$, and we set the satisficing threshold to $S=1/4$, so that the instance is realizable. Consider an algorithm that selects the null arm at every odd time step and the other arm at every even time step. This algorithm consumes exactly $T/2$ units of resource, with the budget depleted only at the final time step, and attains a cumulative reward of $T/2$. Therefore, its standard regret is zero. However, each of the $T/2$ selections of the null arm incurs a satisficing regret of $1/4$, resulting in a total satisficing regret of $T/8$. This example shows that even an algorithm with zero standard regret can incur linear satisficing regret in the realizable case.

{\noindent\bf 2. Always Selecting Satisficing Policies Does Not Imply Constant Regret.} In the setting without constraints (\eg, \citealt{feng2025satisficing}), the algorithm only needs to select satisficing arms, \ie, arms whose mean rewards are above the threshold $S$, as frequently as possible. However, only focusing on selecting arms or policies with satisficing mean rewards does not work with the presence of resource constraints. This is because even if the mean reward of a policy is satisficing, the expected resource consumption of the policy may be large, and always selecting such policies may cause some resource to be depleted early, forcing the learner to select the null arm afterwards and potentially incurring linear in $T$ regret. Therefore under resource constraints, the learner has to keep track of not only the mean reward but also the mean resource consumption of a policy. We define a policy $\pi$ to be satisficing w.r.t. resource $\bm{B}$ if its mean reward $r(\pi)\geq S$, and its expected resource consumption does not exceed $\bm{B}/T$, \ie, $c_j(\pi)\leq B_j/T$ for all $j\in[d]$.

While intuitively the learner needs to select satisficing policies (w.r.t. resource $\bm{B}$) as frequently as possible, we would like to remark that due to randomness in resource consumption, always selecting satisficing policies w.r.t. $\bm{B}$ does not guarantee a constant satisficing regret. The reason for this is that even though always selecting satisficing policies w.r.t. $\bm{B}$ can satisfy the resource constraints in expectation, the randomness in resource consumption may cause some resource to be depleted early with positive probability. Furthermore, since the variance of resource consumption often scale with $T$
, the expected number of time steps when some resource is depleted can scale with $T$, resulting in a satisficing regret that also scales with $T$. This is in sharp contrast to the setting without resource constraints, where always selecting satisficing arms incurs zero satisficing regret. Therefore we conclude that the randomness in resource consumption brings further challenge to the problem, thus besides identifying and selecting satisficing policies w.r.t. resources $\bm{B}$, new techniques are needed to hedge against randomness in resource consumptions.

\section{\satexwk: Satisficing Regret Minimization with Resource Constraints}\label{sec:algorithm_description}

In this section, we present our algorithm template \satexwk~for satisficing regret minimization in bandits with resource constraints. We first provide a detailed description of our algorithm in Section~\ref{sec:algorithm_template}. Then, based on the discussion of why our problem cannot be solved by existing algorithms, we will discuss the novelty of our algorithm in Section~\ref{sec:algorithm_novelty}.

\subsection{Algorithm Template}\label{sec:algorithm_template}

Similar to \cite{feng2025satisficing}, in order to efficiently identify candidate policies with low regret, we assume access to a learning algorithm $\texttt{ALG}(\cdot)$ for the corresponding class of bandits with knapsack (BwK) problem $(\mathcal{X},\mathcal{R},\mathcal{C})$. At a high level, we assume that under any instance of the problem class $(\mathcal{X},\mathcal{R},\mathcal{C})$, any time horizon $t$ and resource $\bm{B}$, the learning algorithm $\texttt{ALG}(t,\bm{B})$ incurs a regret sub-linear in $t$. The formal definition is provided in Condition~\ref{condition:std_regret_bound}.

\begin{condition}[Standard Regret Bound]
For problem class $(\mathcal{X},\mathcal{R},\mathcal{C})$, any time horizon $t\geq 1$ and any $\bm{B}$ such that $B_j\geq bt/2$ for all $j\in[d]$, there exists a sequence of learning algorithms $\texttt{ALG}$ such that the regret of $\texttt{ALG}(t,\bm{B})$ under any reward function $r\in\mathcal{R}$ is upper bounded by
\begin{equation*}
\E\left[t\cdot Z^*(t,\bm{B})-\sum_{s=1}^t r(\pi_s)\right] \leq C_1t^{\alpha}\log^{\beta}(t).
\end{equation*}
where $C_1\geq 1,1/2\leq\alpha<1,\beta\geq 0$ are constants only dependent on the problem class.
\label{condition:std_regret_bound}
\end{condition}
\noindent However, directly extending the analysis based on Condition~\ref{condition:std_regret_bound} in \cite{feng2025satisficing} to our setting with resource constraints is challenging, for which we will provide a detailed discussion in the forthcoming \cref{sec:regret_analysis}.

Given a class of BwK problem $(\mathcal{X},\mathcal{R},\mathcal{C})$ and its learning algorithm $\texttt{ALG}(t,\bm{B})$, we are now ready to introduce our algorithm template \satexwk. Specifically, \satexwk~runs in the following steps.

\noindent\textbf{Step 0. Pre-processing:} To hedge against consumption randomness for the whole algorithm, we first compute the adjusted resource $B'_j=B_j-\tilde{\Theta}(\sqrt{T})$. For the rest of the paper, we say a policy $\pi$ to be satisficing if $\pi$ is satisficing w.r.t. the adjusted resources $\bm{B}'$, \ie, $r(\pi)\geq S$ and $c_j(\pi)\leq B_j'/T$ for all $j\in[d]$, unless otherwise stated.

After pre-processing, \satexwk~runs in rounds to iteratively identify and test candidate satisficing policies. In each round $i$, \satexwk~proceeds with the following steps:

\noindent\textbf{Step 1. Identify a Candidate Satisficing Policy:} Letting $\gamma_i=2^{-i(1-\alpha)/\alpha}$, \satexwk~runs the learning algorithm $\texttt{ALG}(t_i,\bm{b}_i)$ for the first $t_i=\lceil \gamma_i^{-1/(1-\alpha)} \rceil = \lceil2^{i/\alpha} \rceil$ steps of round $i$ with resources $\bm{b}_{i}=\bm{B}'\cdot t_i/T$. After observing the policy selection of $\texttt{ALG}(t_i,\bm{b}_i)$, \satexwk~identifies a candidate satisficing policy using the following two steps. First, the algorithm computes an average of the policies selected by $\texttt{ALG}(t_i,\bm{b}_i)$. Then, a candidate satisficing policy $\hat{\pi}_i$ is computed by taking the mixture of the average policy with the null arm $0$;

\noindent \textbf{Step 2. Forced Sampling on the Candidate Policy} To validate whether $\hat{\pi}_i$ is truly a satisficing policy, we need to collect enough data from pulling $\hat{\pi}_i$ before performing any statistical test. However, because $\hat{\pi}_i$ is newly derived, it is likely that no data about $\hat{\pi}_i$ is collected when running the learning algorithm. To circumvent this, we perform a forced sampling on $\hat{\pi}_i$ by selecting it for $T_i=\lceil \gamma^{-2}_i\rceil$ times, and in this way we collect $T_i$ samples of realized reward of the policy $\hat{\pi}_i$; 

\noindent\textbf{Step 3. Lower Confidence Bound and Upper Confidence Bound Tests:} In this step, \satexwk~kicks off the LCB-UCB test as more data is collected from pulling $\hat{\pi}_i$, where we use LCB tests to validate whether $r(\hat{\pi}_i)\geq S$, and use UCB tests to validate whether $c_j(\hat{\pi}_i)\leq B_j'/T$ for all $j\in[d]$. We use $k$ (initialized to 0) to denote the number of additional samples of noisy reward generated by selecting $\hat{\pi}_i$ up to the current time step, $\hat{r}^{\text{tot}}_i$ to denote the running total reward and $\hat{c}_{ji}^{\text{tot}}$ to denote the running total resource consumption of resource $j$ collected from Step 2 and the current step so far. At the same time, it keeps comparing the LCB of $\hat{\pi}_i$'s mean reward, given by $\hat{r}^{\text{tot}}_i/(T_i+k) - \sqrt{4\log(T_i+k)/(T_i+k)}$, against the threshold value $S$, and also compares the UCB of $\hat{\pi}_i$'s mean resource consumption for each $j\in[d]$, given by $\hat{c}^{\text{tot}}_{ji}/(T_i+k) + \sqrt{4\log(d(T_i+k))/(T_i+k)}$, against the $B_j'/T$, after acquiring every piece of new data. \satexwk~terminates this round and enters the next whenever either the LCB of reward is less than $S$, or the UCB of any resource $j$ consumption is more than $B_j'/T$.

\begin{algorithm}[!ht]
\SingleSpacedXI
\caption{General Satisficing Bandits with Knapsacks (\satexwk)}
\label{alg:randomized_s_regret}
\begin{algorithmic}
\State \textbf{Input:} time horizon $T$, satisficing level $S$, resources $\bm{B}$ for $j\in[d]$, null arm $0$
\State Set $\gamma_i=2^{-i(1-\alpha)/\alpha}$, $B_j'=B_j-\frac{10\alpha}{1-\alpha}\sqrt{T\log(2dT^2)}$
\For{$i=1,2,\dots$}
\State Set $t_i\leftarrow\lceil 1/\gamma_i^{1/(1-\alpha)}\rceil$, $\bm{b}_i=\bm{B}'\cdot {t_i}/{T}$, $T_i\leftarrow\lceil\gamma_i^{-2}\rceil$
\State Run $\texttt{ALG}(t_i,\bm{b}_i)$, let $\{\tilde{\pi}_s\}_{s\in[t_i]}$ be the sequence of policy pulls
\State Compute
\begin{equation*}
w_i=\min\left\{1, \frac{10T}{\min_jB_j'}\sqrt{\frac{\log(8dT_i)}{T_i}}\right\}
\end{equation*}
\State Set 
\begin{equation*}
\hat{\pi}_i=\dfrac{1-w_i}{t_i}\sum_{s=1}^{t_i} \tilde{\pi}_s+ w_i\bm{1}_{0}
\end{equation*}
\State Set $T_i\leftarrow\lceil\gamma_i^{-2}\rceil$, let $t$ be the current time step, and pull policy $\hat{\pi}_i$ for the next $T_i$ time steps
\State Set $k\leftarrow0$, $\hat{r}_i^\text{tot}\leftarrow\sum_{s=t}^{t+T_i-1}Y_s$, and $\hat{c}_{ji}^\text{tot}\leftarrow\sum_{s=t}^{t+T_i-1}C_s^j$ for $j\in[d]$
\While{$LCB_r(\hat{\pi}_i)\geq S$ and $UCB_j(\hat{\pi}_i)\leq B'_j/T$ for $j\in[d]$}
    \State Set $k\leftarrow k+1$
    \State Pull $\hat{\pi}_i$, then observe reward $Y_t$ and costs $C^j_t$, where $t$ is the current time step
    \State Update $\hat{r}_i^\text{tot}\leftarrow\hat{r}_i^\text{tot}+Y_t$, and $\hat{c}_{ji}^\text{tot}\leftarrow\hat{c}_{ji}^\text{tot}+C_t^j$ for $j\in[d]$
    \State Set
        \begin{equation*}
        \begin{aligned}
            LCB_r(\hat{\pi}_i) &\leftarrow \frac{\hat{r}_i^\text{tot}}{T_i+k} - \sqrt{\frac{4\log(T_i+k)}{T_i+k}}\\
            UCB_j(\hat{\pi}_i) &\leftarrow \frac{\hat{c}_{ji}^\text{tot}}{T_i+k} + \sqrt{\frac{4\log(d(T_i+k))}{T_i+k}}
        \end{aligned}
        \end{equation*}
\EndWhile
\EndFor
\end{algorithmic}
\end{algorithm}

\subsection{Novelty of \satexwk}\label{sec:algorithm_novelty}

In this section, we discuss the key novelties of our algorithm. We first discuss why existing algorithms such as \cite{feng2025satisficing} cannot effectively handle our setting with resource constraints. \cite{feng2025satisficing} proposes an algorithmic framework for satisficing regret minimization without resource constraints. Specifically, the algorithm iteratively identifies a candidate satisficing arm by randomly drawing an arm from the trajectory of a bandit learning algorithm. Then the algorithm validates whether the candidate arm is truly satisficing by first conducting forced sampling and then run LCB tests on the arm's mean reward. However, the additional resource constraints bring additional challenges to the problem, both in terms of identifying satisficing policies and validating satisficing policies, and existing algorithms no longer work in this setting.

{\noindent\bf 1. Challenges of Identifying Satisficing Policies:} In the setting with resource constraints, a natural idea is still to make use of a learning algorithm with sub-linear expected (standard) regret in the corresponding BwK problem class. However, uniformly drawing a policy selected by the learning algorithm as in \cite{feng2025satisficing} is clearly unable to guarantee finding a satisficing policy w.r.t. resource $\bm{B}$ with large enough probability. The reason for this is that the learning algorithm may select policies with high reward and high resource consumptions along with others with low reward and low resource consumptions. While algorithm achieves near-optimal cumulative reward with limited resources, it is likely that none of the policies is satisficing in terms of reward and resource consumption at the same time.

One immediate alternative way of identifying candidate satisficing policies is to take the average over the policies selected by the learning algorithm. Unfortunately, this approach is also unable to guarantee finding a satisficing policy w.r.t. resource $\bm{B}$. The reason for this is that any learning algorithm for BwK only guarantees the \textit{realized} resource consumptions to be within the designated resources, and does not guarantee the cumulative mean resource consumptions of the selected policies to also be within corresponding resources. When the realized resource consumptions happen to be significantly lower than the cumulative mean consumptions, directly taking the average over the selected policies may result in excessive resource consumptions in the average policy. Therefore directly taking the average of the selected policies does not guarantee obtaining a satisficing policy with large enough probability.

As a side note, a sub-linear standard regret bound does not directly guarantee that the averaged policy has a satisficing mean reward with sufficiently high probability, since the standard regret of an oracle trajectory can be negative. To bypass this challenge, we need to control the expected positive part of the oracle's cumulative standard regret. In the forthcoming \cref{sec:regret_analysis}, we establish this control under the same oracle condition with the help of an LP-duality argument.

{\noindent\bf 2. Challenges of Validating Satisficing Policies:} To ensure fast removal of policies that do not satisfy the resource constraints, \ie, $c_j(\pi)>B_j/T$ for some $j\in[d]$, we need to conduct UCB tests on the mean consumption of each resource $j\in[d]$ of the candidate policy. When the resource consumption of a policy is marginally smaller than $B_j/T$ for some $j\in[d]$, the policy can be easily removed by the UCB tests even though it satisfies the resource constraints. Therefore besides identifying a satisficing policy, we would also like to ensure the resource consumptions of the policy to be lower than $\bm{B}/T$ by a significant margin, so that with large enough probability, the policy will not be removed in the UCB tests. The stronger requirements for the mean resource consumption present even more challenge in constructing a proper candidate satisficing policy, and existing literature is unable to effectively resolve this challenge.

\noindent\textbf{3. Challenges of Exploiting Satisficing Policies:} Finally, as mentioned in \cref{sec:model_discussion}, due to randomness in resource consumption, always selecting a satisficing policy  w.r.t. $\bm{B}$ does not guarantee a constant satisficing regret bound. Even if the algorithm is able to quickly identify a satisficing policy w.r.t. $\bm{B}$, always selecting the policy until depletion of some resource could still incur a regret that scales in $T$ because of possibly early depletion of resources.

In light of all the challenges mentioned above, we implement multiple new techniques in \satexwk. In what follows, we provide a detailed discussion on these new techniques in each step of the algorithm.

\noindent\textbf{Step 0.} In the pre-processing step, \satexwk~reserves $\tilde{\Theta}(\sqrt{T})$ units of resource to hedge against randomness in resource consumption. Specifically, we first computes the adjusted resource $B_j'=B_j-\tilde{\Theta}(\sqrt{T})$, and for the rest of the algorithm, we search for and validate satisficing policies w.r.t. $\bm{B}'$ instead of the original resource capacities $\bm{B}$. Since the realized resource consumption deviates from the cumulative mean consumption of selected policies by $\tilde{O}(\sqrt{T})$ with high probability, as long as the algorithm maintains an expected cumulative resource consumption within $\bm{B}'$, we can ensure that the realized resource consumption does not exceed $\bm{B}$ with high probability. Therefore by using a more conservative target resource capacity $\bm{B}'$, we resolve our third challenge above and guarantee that resources will not be depleted with high probability.

\noindent\textbf{Step 1.} In order to efficiently identify satisficing policies with minimal regret, we use a learning algorithm with sub-linear expected (standard) regret in the corresponding BwK problem class. Specifically, in round $i$, we run the oracle in $t_i$ time steps, the resource capacity allocated to the oracle is $t_i/T\cdot \bm{B}'$, proportional to the number of time steps of running the oracle. 

To generate a candidate policy from the trajectory of the oracle, \satexwk~first takes the average of the policies in the trajectory. Then a candidate policy is computed by taking the mixture of the average policy and the null arm $\bm{1}_{0}$, with the weight of the null arm being $w_i=\tilde{\Theta}(\gamma_i)$. Here, taking the mixture with the null arm $\bm{1}_{0}$ mainly serves the two purposes.
\begin{enumerate}
\item As mentioned in the first challenge above, directly taking the average of selected policies may result in excessive expected resource consumption of the constructed policy, and by mixing the average policy with the null arm, we reduce the expected resource consumption of the constructed policy and effectively reduce the possibility of having excessive expected resource consumption of the constructed policy.
\item By mixing the null arm, we are able to ensure that the expected resource consumption of the candidate satisficing policy to be smaller than $\bm{B}'/T$ by a margin of at least $\tilde{\Omega}(\gamma_i)$ with a large enough probability. With this we are able to resolve the second challenge above, since a policy whose expected resource consumption smaller than $\bm{B}'/T$ is more likely to survive in the LCB-UCB test if the policy is truly satisficing.
\end{enumerate}
Finally, as $i$ gets large, $\gamma_i$ converges to $0$, and the weight of the null arm $w_i$ will also converge to $0$. This suggests that as the rounds proceed, mixing the null arm will have a diminishing impact on the expected revenue of the candidate policy.

\noindent\textbf{Step 2.} In order to collect enough information on policy reward and resource consumption, we conduct a forced sampling of the candidate policy until the confidence interval is of order $\tilde{\Theta}(T_i^{-1/2})=\tilde{\Theta}(\gamma_i)$. Similar to Step 2 of the algorithm in \cite{feng2025satisficing}, the purpose of this step is to collect enough data for the candidate policy, so that a small enough confidence interval on the policy's expected revenue and resource consumption can be constructed. In this way, the policy will be less likely to be removed by the upcoming LCB-UCB tests if it is truly satisficing w.r.t. resources $\bm{B}'$.

\noindent\textbf{Step 3.} In order to validate whether the candidate policy $\hat{\pi}_i$ is truly satisficing, we conduct a series of LCB tests for the policy's mean reward against the satisficing threshold $S$, and UCB tests for the policy's mean resource consumptions against $\bm{B}'/T$. Either when the LCB of the expected revenue falls below $S$, or the UCB of the expected consumption of any resource $j\in[d]$ exceeds $B_j'/T$, the algorithm immediately terminates the current round and starts a new round. Intuitively, the length of running the oracle $t_i$ increases as the rounds proceed, the expected revenue of the candidate policy $r(\hat{\pi}_i)$ will gradually converge to $Z^*(T,\bm{B}')$, thus if $Z^*(T,\bm{B}')>S$, the candidate policy will have a higher probability of having an expected revenue above $S$ under mild conditions of the learning oracle (we will provide a detailed discussion on this in the forthcoming \cref{sec:regret_analysis}). In terms of resource consumption, by construction in Step 1, the mean resource consumptions of the candidate policy $\hat{\pi}_i$ is smaller than $\bm{B}'/T$ by a margin of at least $\tilde{\Omega}(\gamma_i)$ with a large enough probability. Furthermore, in Step 2, we have collected $T_i$ samples of realized resource consumptions of $\hat{\pi}_i$, the length of the confidence interval of mean resource consumption is $\tilde{\Theta}(1/\sqrt{T_i})=\tilde{\Theta}(\gamma_i)$. Therefore with a large enough probability, the UCB of mean resource consumptions will not exceed $\bm{B}'/T$, thus the probability of the candidate being removed due to excessive resource consumption is small. 

\section{Regret Analysis}\label{sec:regret_analysis}

In this section, we analyze the regret of \satexwk~under Condition~\ref{condition:std_regret_bound}, which requires the learning oracle to attain a sub-linear expected standard regret bound. We show that \satexwk~attains constant expected satisficing regret in the realizable case, and sub-linear expected standard regret in the non-realizable case.

To establish these results, we need to first bound the expected positive reward deficit of the candidate policy relative to the LP optimal reward. In the setting without resource constraints, \cite{feng2025satisficing} show that Condition~\ref{condition:std_regret_bound} is sufficient to attain constant expected satisficing regret in the realizable case. However, directly extending the analysis in \cite{feng2025satisficing} to our setting with resource constraints is challenging. In the setting without resource constraints, the standard regret of every oracle trajectory is nonnegative. Therefore, a sub-linear expected standard regret bound can be used together with Markov's inequality to show that the candidate arm has a satisficing mean reward with high probability once the oracle runs for sufficiently many time steps. In our setting, however, the standard regret $R_t=tZ^*(t,\bm{B})-\sum_{s=1}^t r(\pi_s)$ of some trajectories can be negative. Consequently, when taking expectation, a very large positive regret on some trajectories can be offset by negative regret on others, and then \(\mathbb{E}[\max\{R_t,0\}]\) cannot be directly bounded based on an upper bound on $\mathbb{E}[R_t]$. Thus Markov's inequality is no longer applicable. To address this difficulty, we need to control the contribution of trajectories with negative standard regret.

We make use of the resource constraints satisfied by the learning oracle to control this contribution. Intuitively, if the average of cumulative reward along an oracle trajectory exceeds the LP optimal reward, the cumulative mean consumption along this trajectory must exceed the allocated budget in at least one resource. Since the realized resource consumption remains within the allocated budget, this can occur only when the realized consumption of that resource falls below its cumulative mean consumption. Using LP duality, we relate the magnitude of negative standard regret to these deviations in resource consumption and bound its expected contribution by leading term $\sqrt{t_i}$. Combining this bound with Condition~\ref{condition:std_regret_bound} gives an upper bound on the expected positive part of standard regret, as stated in the following lemma.

\begin{lemma}
Under Condition~\ref{condition:std_regret_bound}, for every round $i$,
\begin{equation*}
\E\left[\max\left\{\sum_{s=1}^{t_i}r(\tilde\pi_s) - t_iZ^*(T,\bm B'),0\right\}\right]\leq \frac{T}{\min_jB_j'}\sqrt{t_i},
\end{equation*}
therefore, 
\begin{equation*}
\E\left[\max\left\{t_iZ^*(T,\bm B') - \sum_{s=1}^{t_i}r(\tilde\pi_s),0\right\}\right]\leq C_1t_i^\alpha\log^\beta(t_i)+\frac{T}{\min_jB_j'}\sqrt{t_i}.
\end{equation*}
\label{lemma:positive_standard_regret}
\end{lemma}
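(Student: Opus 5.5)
We bound the negative part of the oracle's regret by LP duality, then recover the positive part from Condition~\ref{condition:std_regret_bound} via $\max\{x,0\}=x+\max\{-x,0\}$.

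\textbf{Step 1: budget scaling.} Fix round $i$ and write $\bar\pi=\frac1{t_i}\sum_{s=1}^{t_i}\tilde\pi_s$. Since $\bm b_i=\bm B' t_i/T$, the LP \eqref{eq:problem_formulation-lp} satisfies $Z^*(t_i,\bm b_i)=Z^*(T,\bm B')$, because the per-step budgets $b_{ij}/t_i=B_j'/T$ coincide. Since $r(\bar\pi)=\frac1{t_i}\sum_s r(\tilde\pi_s)$ and $c_j(\bar\pi)=\frac1{t_i}\sum_s c_j(\tilde\pi_s)$, the first quantity equals $t_i\max\{r(\bar\pi)-Z^*(T,\bm B'),0\}$.

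\textbf{Step 2: LP duality.} Write the Lagrangian dual of \eqref{eq:problem_formulation-lp} with budgets $\bm B'/T$, which has one equality constraint for the simplex. Strong duality holds, at least for finite $\mathcal X$ or a compact/convex setting, as assumed implicitly by the paper. It gives multipliers $\lambda^*\ge 0$ such that for every $\pi\in\Delta(\mathcal X)$,
\[
r(\pi)\le Z^*(T,\bm B')+\sum_j\lambda_j^*\bigl(c_j(\pi)-B_j'/T\bigr).
\]
To bound $\|\lambda^*\|_1$, take $\pi=\bm 1_0$. The null arm has zero reward and zero consumption, so
\[
0\le Z^*-\sum_j\lambda_j^*B_j'/T,
\]
hence $\sum_j\lambda_j^*\le Z^*T/\min_jB_j'\le T/\min_jB_j'$, using $Z^*\le1$. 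Applying the inequality to $\bar\pi$ gives
\[
t_i\bigl(r(\bar\pi)-Z^*\bigr)_+\le \sum_j\lambda_j^*\Bigl(\sum_{s}c_j(\tilde\pi_s)-b_{ij}\Bigr)_+ .
\]

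\textbf{Step 3: controlling mean overconsumption (main obstacle).} We must bound $\E\bigl[(\sum_s c_j(\tilde\pi_s)-b_{ij})_+\bigr]$ for each $j$. The oracle never lets realized consumption exceed $b_{ij}$ by more than one unit (the last pull before stopping). After depletion it plays $\bm 1_0$, which contributes zero mean consumption. Hence
\[
\sum_s c_j(\tilde\pi_s)-b_{ij}\le M_j+1,
\]
where $M_j=\sum_s\bigl(c_j(\tilde\pi_s)-C_s^j\bigr)$ is a martingale with increments in $[-1,1]$ with respect to the natural filtration. Therefore
\[
\E[(M_j+1)_+]\le 1+\E|M_j|\le 1+\sqrt{\E M_j^2}\le 1+\sqrt{t_i}.
\]
Summing over $j$ with weights $\lambda_j^*$ gives roughly $\frac{T}{\min_jB_j'}(\sqrt{t_i}+1)$. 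To match the stated constant exactly, I would tighten as follows. Use the conditional variance bound $\E M_j^2\le\sum_s c_j(\tilde\pi_s)\le t_i$, and absorb the overshoot by noting that the realized total consumption is at most $b_{ij}$ at the step where the oracle stops. The delicate points are this bookkeeping, and the case of several resources, where one can either keep the $\lambda$-weighted sum or use that $\sum_j\lambda_j^*B_j'/T\le 1$ to avoid a factor $d$.

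\textbf{Step 4: positive part.} Using $\max\{x,0\}=x+\max\{-x,0\}$ with $x=t_iZ^*(T,\bm B')-\sum_s r(\tilde\pi_s)$, we get
\[
\E\bigl[\max\{x,0\}\bigr]=\E[x]+\E\bigl[\max\{-x,0\}\bigr].
\]
Since $Z^*(t_i,\bm b_i)=Z^*(T,\bm B')$, the term $\E[x]$ is the oracle's expected regret. Condition~\ref{condition:std_regret_bound} applies because $b_{ij}\ge bt_i/2$, which holds since $B_j'\ge bT/2$ for large $T$. So $\E[x]\le C_1t_i^\alpha\log^\beta t_i$, and Step 3 bounds the remaining term, which yields the second inequality.
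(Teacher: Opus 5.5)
Your argument is essentially the paper's proof. It uses the same null-arm bound $\|\bm\lambda^*\|_1\le T/\min_jB_j'$ on the dual multipliers. It dominates the reward overshoot by $\bm\lambda^*$-weighted martingale deviations of consumption, each with second moment at most $t_i$. It then closes with the same identity $\max\{x,0\}=x+\max\{-x,0\}$ and Condition~\ref{condition:std_regret_bound}. The only cosmetic difference is that the paper forms the single martingale $D_s=\sum_j\lambda_j^*(c_j(\tilde\pi_s)-C_s^j)$ instead of taking positive parts resource by resource.

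The last-step overshoot you worry about is simply assumed away in the paper, which uses $\sum_{s}C_s^j\le b_{ji}$; under that convention your bound is exactly the stated one. If you prefer to keep the overshoot, your tightening works once you use $\E[M_j]=0$ and conditional variances at most $1/4$. These give $\E[(M_j+1)_+]\le\sqrt{\E[M_j^2]+1}\le\sqrt{t_i/4+1}\le\sqrt{t_i}$ for $t_i\ge 2$. The cruder bound $\E[M_j^2]\le t_i$ that you wrote does not by itself remove the $+1$.
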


The proof of Lemma~\ref{lemma:positive_standard_regret} is provided in Section~\ref{sec:positive_standard_regret} of the Appendix. When $T$ is sufficiently large, the lemma implies that the expected positive part of cumulative standard regret is bounded by $\tilde{O}(t_i^\alpha)$. Consequently, after averaging the oracle trajectory and accounting for the reward loss from mixing in the null arm, the candidate policy is obtained and satisfies
$$
\mathbb{E}\!\left[\max\left\{Z^*(T,\bm{B}')-r(\hat{\pi}_i),0\right\}\right]
\leq \tilde{O}(t_i^{\alpha-1}).
$$
We can then apply Markov's inequality to this nonnegative quantity. Specifically, when $Z^*(T,\bm{B}')>S$, the mean reward of the candidate satisficing policy $r(\hat{\pi}_i)$ exceeds the satisficing threshold $S$ with high probability for any large enough $i$. Therefore, with the help of Lemma~\ref{lemma:positive_standard_regret}, Condition~\ref{condition:std_regret_bound} alone is sufficient to obtain the reward guarantee for the candidate policy needed for our analysis. In the realizable case, we define the exceeding gap $\Delta_S^*$ as the gap between the per-time-step upper bound $Z^*(T,\bm{B})$ and the satisficing threshold $S$, given by 
$$
\Delta_S^*=Z^*(T,\bm{B})-S.
$$

\noindent{\bf Satisficing Regret Bound in the Realizable Case under Condition~\ref{condition:std_regret_bound}:} We use the following theorem to provide a constant satisficing regret bound when the learning oracle has a sub-linear expected standard regret bound (\ie, Condition~\ref{condition:std_regret_bound}).

\begin{theorem}\label{thm:real_bwk_weak}
    Under Condition~\ref{condition:std_regret_bound}, if $Z^*(T,\bm{B})\geq S$, then the satisficing regret of \satexwk~is upper bounded by 
    \begin{equation}\label{eq:sat_bwk_weak}
        \texttt{Regret}_S\leq {C_1}^{\frac{2-\alpha}{1-\alpha}} \left(\dfrac{1}{\Delta_S^*}\right)^{\frac{1}{1-\alpha}} \cdot \textup{polylog}\left(\dfrac{C_2d}{\Delta_S^*}\right).
    \end{equation}
\end{theorem}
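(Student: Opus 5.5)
We follow the round-based template of \cite{feng2025satisficing}: decompose the satisficing regret by rounds, show that after a critical round $i_0$ each round succeeds with constant probability, and sum a geometric series. First we set up a good event $\mathcal{E}$, on which no resource is depleted before time $T$. Azuma--Hoeffding applied to realized minus mean consumption gives a deviation of at most $\sqrt{T\log(2dT^2)}$ with probability $1-1/T$. For oracle steps, the oracle respects its allocation $\bm b_i$, which sums to at most $\bm B'$ over the $O(\log T)$ rounds with geometrically growing $t_i$. For test steps, the UCB test keeps the cumulative mean consumption of $\hat\pi_i$ below $B_j'/T$ per step up to the confidence width, and the $\frac{10\alpha}{1-\alpha}\sqrt{T\log(2dT^2)}$ reserve absorbs both. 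Off $\mathcal{E}$ the regret is at most $T\cdot(1/T)=O(1)$. We also note that $Z^*(T,\bm B')\ge Z^*(T,\bm B)-\tilde O(T^{-1/2})$ by scaling an LP-optimal policy toward the null arm. Hence $Z^*(T,\bm B')-S\ge\Delta_S^*/2$ once $T\gtrsim(\Delta_S^*)^{-2}$; for smaller $T$ the trivial bound $T$ is already of the claimed order.

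Next we bound the regret of round $i$. The oracle phase costs at most $t_i$. The forced-sampling phase costs at most $T_i=\lceil\gamma_i^{-2}\rceil=O(t_i^{2(1-\alpha)}\cdot t_i^{...})$; with $\gamma_i=t_i^{-(1-\alpha)}$ it is $O(t_i^{2(1-\alpha)})\le O(t_i)$ when $\alpha\ge 1/2$. The test phase costs $\max\{S-r(\hat\pi_i),0\}$ per step. If $r(\hat\pi_i)\ge S$ this is zero. If $r(\hat\pi_i)=S-\delta$ with $\delta>0$, the standard LCB argument of \cite{feng2025satisficing} shows the test stops after $\tilde O(\delta^{-2})$ extra steps in expectation, giving expected regret $\tilde O(1/\delta)\le\tilde O(\gamma_i^{-1})$ since $\delta$-deviations below $\gamma_i$ are already covered by forced sampling. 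Thus round $i$ costs $\tilde O(t_i)$ in expectation.

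Then we show that rounds $i\ge i_0$ are final with constant probability. Here $i_0$ is the first round with $C_1 t_i^{\alpha-1}\log^\beta t_i+\tilde O(t_i^{-1/2})+w_i\lesssim \Delta_S^*/4$, so $t_{i_0}\approx (C_1/\Delta_S^*)^{1/(1-\alpha)}\mathrm{polylog}$. Lemma~\ref{lemma:positive_standard_regret} and Markov's inequality give $r(\hat\pi_i)\ge S+\Delta_S^*/4$ with probability at least $3/4$, after accounting for the loss $w_i$ from mixing in the null arm. For consumption, we compare the oracle's realized consumption, which is at most $b_{ij}$, with its mean via Azuma. This yields $c_j(\hat\pi_i)\le(1-w_i)(B_j'/T+\tilde O(t_i^{-1/2}))\le B_j'/T-\tilde\Omega(\gamma_i)$ with probability $1-O(1/T_i)$, where $w_i$ is chosen exactly to dominate the deviation term, using $T/\min_jB_j'\le 2/b$. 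Given both margins, a union bound over $k$ of the LCB/UCB failure probabilities (widths $\sqrt{4\log(n)/n}$, summable at $n\ge T_i$) shows the round never terminates with probability at least $1/2$. Hence $\Pr[\text{reach round } i]\le 2^{-(i-i_0)}$ for $i>i_0$. Summing, the expected regret is $\sum_{i\le i_0}\tilde O(t_i)+\sum_{i>i_0}2^{-(i-i_0)}\tilde O(t_i)$, and since $t_i=2^{i/\alpha}$ grows faster than $2^{i}$, this geometric sum needs care.

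The main obstacle is exactly this last summation: the per-round cost $t_i$ grows by the factor $2^{1/\alpha}\ge 2$, which can outpace a success probability of only $1/2$. We would fix this by sharpening the success probability to $1-O(t_i^{-(1-\alpha)}\cdot\mathrm{polylog}/\Delta_S^*)$. Markov's inequality applied with the lemma at round $i$ gives failure probability of order $(C_1t_i^{\alpha-1}+\ldots)/\Delta_S^*$, which decays like $2^{-i(1-\alpha)/\alpha}$. The product of these decaying failure probabilities across rounds then decays superexponentially and beats the growth of $t_i$. The extra factor $C_1^{1/(1-\alpha)}$ that this product introduces accounts for the exponent $\frac{2-\alpha}{1-\alpha}$ on $C_1$, and the logarithmic terms collect into $\mathrm{polylog}(C_2d/\Delta_S^*)$.
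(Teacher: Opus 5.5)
Your architecture matches the paper's. Both use a clean event on which the $\frac{10\alpha}{1-\alpha}\sqrt{T\log(2dT^2)}$ reserve prevents depletion, the null-arm scaling bound on $Z^*(T,\bm B)-Z^*(T,\bm B')$, and an $\tilde O(t_i)$ cost per round. Both then use Lemma~\ref{lemma:positive_standard_regret} plus Markov for the candidate's reward, Azuma plus the $w_i$ margin for its consumption, and a tail sum over rounds after $i_0$.

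You diverge in how you close that tail sum, and the paper's device is simpler. Your obstacle comes only from settling for exit probability $1/2$. The paper puts a factor $64$ into the threshold defining $i_0$. Then Markov gives reward failure at most $1/32$, false LCB exits are at most $1/32$, and consumption-margin failures plus false UCB exits total at most $1/16$. So every round $i>i_0$ ends with probability at most $1/8$. Since $\alpha\ge 1/2$ gives $2^{1/\alpha}\le 4$, one has $8^{1-k}2^{k/\alpha}\le 8\cdot 2^{-k}$, and the tail is geometric.

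Your alternative makes the exit probability decay in $i$, so that the product of exit probabilities decays superexponentially. This is also valid, but it needs every failure component to decay. That includes the consumption-margin event, which the paper controls only at the constant level $1/(32d)$. Decay is achievable: Azuma with radius $2t_i\sqrt{\log(8dT_i)/T_i}$ on the oracle's cumulative consumption, using $t_i\ge T_i$ and the $\log(8dT_i)$ already inside $w_i$, gives failure at most $(8dT_i)^{-8}$. You should say this explicitly, since your claimed $1-O(1/T_i)$ is otherwise unsupported.

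Two steps need repair.

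\emph{The test-phase bound fails as written.} If $r(\hat\pi_i)=S-\delta$ with $\delta<\gamma_i$, then $\tilde O(\delta^{-2})$ extra pulls cost $\tilde O(1/\delta)$. That exceeds $\gamma_i^{-1}$, and forced sampling does not change this. Use the one-sided argument instead. Because $r(\hat\pi_i)<S$, staying in the loop at sample count $n\ge T_i\ge 2$ requires the LCB to exceed the true mean. By Hoeffding this has probability at most $n^{-8}$, so the expected number of extra pulls is at most $1$ for every $\delta>0$.

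\emph{Your account of the $C_1$ exponent is wrong.} The product of failure probabilities contributes no power of $C_1$. Your own bookkeeping gives a per-round cost at most $t_i+T_i+O(1)\le 3t_i$ with an absolute constant. It also gives $t_{i_0}$ of order $(C_1/\Delta_S^*)^{1/(1-\alpha)}$, up to $b$-dependent constants and polylogs. Together these yield $C_1^{1/(1-\alpha)}(1/\Delta_S^*)^{1/(1-\alpha)}\cdot\textup{polylog}$, which implies the stated bound because $C_1\ge 1$.

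The extra factor $C_1$ in the paper is an artifact of its per-round constant. It charges forced sampling and the test phase as $T_i\cdot\E[\max\{Z^*(T,\bm B')-r(\hat\pi_i),0\}]=\tilde O(C_1\gamma_i^{-1})$. It then relaxes $\gamma_i^{-1}\le\gamma_i^{-1/(1-\alpha)}$.
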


\begin{proof}[Proof Sketch.]
A complete proof of \cref{thm:real_bwk_weak} is provided in Section~\ref{sec:thm:real_bwk_weak} of the Appendix. To prove the theorem, we proceed with the following steps.

\noindent\textbf{Step 1. Sufficient Resources with High Probability:} We first prove that \satexwk~does not deplete any resource with high probability. We define the event $\mathcal{E}$ as 
\begin{equation*}
    \mathcal{E}_i= \left\{\forall\;j\in[d],\, \hat{c}_{ji}^{\text{tot}}(\tau_i)\leq\frac{\tau_iB_j'}{T} + 2\sqrt{T_i\log(2dT^2)}\right\},\quad \mathcal{E}=\bigcap_{i\leq i_{\max}}\mathcal{E}_i.
\end{equation*}
Here, $\tau_i$ is the total number of time steps of forced sampling and LCB-UCB test in round $i$, $i_{\max}$ is the total number of rounds in time horizon $T$, and we use $\hat{r}^{\text{tot}}_i(t)$ and $\hat{c}_{ji}^{\text{tot}}(t)$ as the realized total reward and cost after repeatedly selecting $\hat{\pi}_i$ for $t$ time steps. We use the following proposition to prove that if the event $\mathcal{E}$ holds, then no resource will be depleted when running \satexwk. Intuitively, this is because the algorithm reserves $\tilde{O}(\sqrt{T})$ resources to hedge against randomness in resource consumption, thus as long as the realized resource consumption does not significantly exceed the expected consumption of the selected policies, as assumed in $\mathcal{E}$, then no resource will be depleted by the end of the time horizon.
\begin{proposition}\label{prop:clean_event}
    If $\mathcal{E}$ holds, then for every resource $j\in[d]$, the total resource consumption by the end of the time horizon is upper bounded by
    \begin{equation*}
        \sum_{i=1}^{i_{\max}} (b_{ji} + \hat{c}_{ji}^{\text{tot}}(\tau_i))\leq B_j.
    \end{equation*}
\end{proposition}
Here $b_{ji}=B'_jt_i/T$ is the allocated resource budget. Proof of this proposition is provided in Section~\ref{sec:prop:regret_bound_Sat_clean} of the Appendix. Then, we use the following proposition to show that $\mathcal{E}$ holds with probability at least $1-1/T$. 
\begin{proposition}\label{prop:clean_event_bound}
    The event $\mathcal{E}$ holds with probability at least $1-1/T$.
\end{proposition}
Proof of this proposition is provided in Section~\ref{sec:prop:regret_bound_Sat_cleanbound} of the Appendix.

\noindent\textbf{Step 2. Satisficing Regret Bound of Each Round:} We show an upper bound on the satisficing regret of round $i$, conditioned on the event $\mathcal{E}$ that resources are not depleted.
\begin{proposition}\label{prop:round_Sat_regret_weak}
    Under Condition~\ref{condition:std_regret_bound}, if $Z^*(T,\bm{B}')>S$, then conditioned on $\bigcap_{i'<i}\mathcal{E}_{i'}$, the expected satisficing regret incurred in round i is upper bounded by
    \begin{equation*}
        \left( \frac{6C_1+6}{(1-\alpha)^{\max\{\beta,1/2\}}} + \frac{33\sqrt{2}T}{\min_jB_j'}+2 \right)\gamma_i^{-1 / (1 - \alpha)}\log(4d/\gamma_i)^{\max\{\beta,1/2\}}.
    \end{equation*}
\end{proposition}
Proof of this proposition is provided in Section~\ref{sec:prop:round_Sat_regret_weak} of the Appendix. 

\noindent\textbf{Step 3. Exit Probability of Each Round:}
Denoting $\Delta_S':=Z^*(T,\bm{B}')-S$, we show that conditioned on $\mathcal{E}$, once \satexwk~runs for large enough number of rounds so that $\gamma_i=\tilde{O}(\Delta_S')$, the UCB-LCB test is unlikely to terminate a round and start a new round.
\begin{proposition}
    Under Condition~\ref{condition:std_regret_bound}, if $Z^*(T,\bm{B}')\geq S$, then conditioned on $\bigcap_{i'<i}\mathcal{E}_{i'}$, for every $i$ that satisfies
    \begin{equation*}
        64\left( \frac{2C_1+2}{(1-\alpha)^{\max\{\beta,1/2\}}} + \frac{11\sqrt{2}T}{\min_jB_j'} \right) \gamma_i\log(4d/\gamma_i)^{\max\{\beta,1/2\}}\leq\Delta_S',
    \end{equation*}
    the probability that round $i$ ends within the time horizon $T$ (conditioned on the event that round $i$ is started within the time horizon) is upper bounded by $1/8$.
    \label{prop:transit_Sat_prob_weak}
\end{proposition}
Since each round of \satexwk~runs independently, Proposition~\ref{prop:transit_Sat_prob_weak} indicates that the total number of rounds for \satexwk~can be upper-bounded by a shifted geometric random variable with success rate $7/8$. Proof of this proposition is provided in Section~\ref{sec:prop:transit_Sat_prob_weak} of the Appendix.

\noindent\textbf{Step 4. Finish the Proof:} 
To finish the proof, we first bound the gap between optimal rewards under the original resource constraint and the adjusted one.
\begin{proposition}\label{prop:optimal_gap}
    For any $T$, the difference between optimal reward $Z^*(T,\bm{B}')$ of allocating resource $B_j'$ and optimal reward $Z^*(T,\bm{B})$ of allocating resource $B_j$ is upper bounded by
    \begin{equation*}
        Z^*(T,\bm{B}) - Z^*(T,\bm{B}') \leq \frac{10\alpha}{b(1-\alpha)} \sqrt{\frac{\log(2dT^2)}{T}}.
    \end{equation*}
\end{proposition}
Proof of this proposition is provided in Section~\ref{sec:prop:optimal_gap} of the Appendix.


Combining \crefrange{prop:clean_event}{prop:optimal_gap}, we are able to establish the statement in Theorem~\ref{thm:real_bwk_weak}. We first consider the case where $T$ is sufficiently large that $10\alpha/(b(1-\alpha)) \cdot \sqrt{{\log(2dT^2)}/{T}} \leq {\Delta_S^*}/{2}$. By Proposition~\ref{prop:optimal_gap}, we have
\[
\Delta_S'
=Z^*(T,\boldsymbol B')-S
\geq \Delta_S^*-\frac{10\alpha}{b(1-\alpha)}
\sqrt{\frac{\log(2dT^2)}{T}}
\geq \frac{\Delta_S^*}{2}.
\]
Moreover, the definition of $\bm B'$ implies that $\min_j B_j'\geq bT/2$. Therefore, the term $T/\min_j B_j'$ in Propositions~\ref{prop:round_Sat_regret_weak} and \ref{prop:transit_Sat_prob_weak} is bounded by $2/b$, independently of $T$.

We first bound the contribution to expected satisficing regret from $\mathcal E$. Denote $i_0+1$ as the smallest integer such that the condition in Proposition~\ref{prop:transit_Sat_prob_weak} holds for every $i\geq i_0+1$. We show that $2^{i_0(1-\alpha)/\alpha}=1/\gamma_{i_0}=\tilde O(C_1/\Delta_S')$. Then Proposition~\ref{prop:round_Sat_regret_weak} implies that the contribution from the first \(i_0\) rounds is upper bounded by
$$
\sum_{i=1}^{i_0} \tilde O\!\left(C_1\gamma_i^{-\frac{1}{1-\alpha}}\right)
= \tilde O\!\left(C_1\sum_{i=1}^{i_0}2^{i/\alpha}\right)
= \tilde O\!\left(C_12^{i_0/\alpha}\right)
= \tilde O\!\left( C_1^{\frac{2-\alpha}{1-\alpha}} \left(\frac{1}{\Delta_S'}\right)^{\frac{1}{1-\alpha}} \right).
$$
Then we bound the contribution from rounds after \(i_0\). By Proposition~\ref{prop:transit_Sat_prob_weak}, the probability that \satexwk~enters round \(i_0+k\) and \(\mathcal E\) holds is at most \(8^{1-k}\). Therefore, by Proposition~\ref{prop:round_Sat_regret_weak}, the contribution from these rounds is upper bounded by
\begin{equation*}
\begin{aligned}
&\sum_{k=1}^{\infty} 8^{1-k}\,
O\!\left(C_1 \gamma_{i_0+k}^{-\frac{1}{1-\alpha}} \log\!\left(\frac{4d}{\gamma_{i_0+k}}\right)^{\max\{\beta,1/2\}} \right)\\
\leq & \tilde O\!\left(C_1 \gamma_{i_0}^{-\frac{1}{1-\alpha}} \right) 
\sum_{k=1}^{\infty} 8^{1-k}2^{k/\alpha}(k+1)^{\max\{\beta,1/2\}}
= \tilde O\!\left(C_1^{\frac{2-\alpha}{1-\alpha}} \left(\frac{1}{\Delta_S'}\right)^{\frac{1}{1-\alpha}} \right).
\end{aligned}
\end{equation*}
Here, the series is finite because \(\alpha\geq1/2\) implies \(8^{1-k}2^{k/\alpha}\leq8\cdot2^{-k}\). 

Combining the two parts, using \(\Delta_S'\geq\Delta_S^*/2\), and noting that Proposition~\ref{prop:clean_event_bound} bounds the contribution from \(\mathcal E^c\) by \(T\Pr(\mathcal E^c)\leq1\), we obtain
\[
\texttt{Regret}_S \leq
1+\tilde O\!\left(C_1^{\frac{2-\alpha}{1-\alpha}} \left(\frac{1}{\Delta_S'}\right)^{\frac{1}{1-\alpha}} \right)
= \tilde O\!\left(C_1^{\frac{2-\alpha}{1-\alpha}} \left(\frac{1}{\Delta_S^*}\right)^{\frac{1}{1-\alpha}} \right).
\]
Finally, if \(T\) is not large enough, then \(T\leq\tilde O((1/\Delta_S^*)^2)\). Since the expected satisficing regret is at most \(T\) and \(1/(1-\alpha)\geq2\), the bound \(\tilde O((1/\Delta_S^*)^{1/(1-\alpha)})\) also holds in this case.
\end{proof}

\cref{thm:real_bwk_weak} states that if the learning oracle has a sub-linear expected standard regret bound, then in the realizable case, \satexwk~attains a constant satisficing regret bound that depends on the exceeding gap $\Delta_S^*$ but independent from the length of the time horizon $T$. In other words, regardless of how long the learner makes decision over, the satisficing regret incurred by the learner is always upper bounded by a constant that only depends on $\Delta_S^*$.

\noindent{\bf Standard Regret Bound in the Non-Realizable Case:} We use the following theorem to show that in the non-realizable case, \satexwk~has a sub-linear standard regret bound as long as the learning oracle has a sub-linear expected standard regret bound.

\begin{theorem}\label{thm:real_bwk_std}
    Under Condition~\ref{condition:std_regret_bound}, if $Z^*(T,\bm{B})< S$, then the standard regret of our algorithm is upper bounded by
    \begin{equation}\label{eq:real_bwk}
        \texttt{Regret}\leq C_1T^{\alpha}\cdot\textup{polylog}(T),
    \end{equation}
\end{theorem}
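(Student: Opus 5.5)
We plan to decompose the standard regret round by round and treat separately the oracle phases (Step 1) and the test phases (Steps 2--3) of each round. We work on the event $\mathcal{E}$ from the proof of Theorem~\ref{thm:real_bwk_weak}. By Propositions~\ref{prop:clean_event} and \ref{prop:clean_event_bound}, no resource is depleted on $\mathcal{E}$. The complement contributes at most $T\cdot Z^*\cdot\Pr(\mathcal{E}^c)\leq 1$. We also replace $Z^*(T,\bm{B})$ by $Z^*(T,\bm{B}')$. By Proposition~\ref{prop:optimal_gap} this costs at most $T\cdot \tilde O(\sqrt{\log T/T})=\tilde O(\sqrt{T})\leq \tilde O(T^\alpha)$, since $\alpha\geq 1/2$. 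It then suffices to bound $\E[\sum_t (Z^*(T,\bm{B}')-r(\pi_t))]$ on $\mathcal{E}$.

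\textbf{Oracle phases.} In round $i$ the oracle $\texttt{ALG}(t_i,\bm{b}_i)$ is run with $\bm{b}_i=\bm{B}'t_i/T$. This gives $Z^*(t_i,\bm{b}_i)=Z^*(T,\bm{B}')$, and $b_{ji}\geq bt_i/2$ holds, so Condition~\ref{condition:std_regret_bound} applies. It bounds the expected regret of the oracle phase by $C_1t_i^\alpha\log^\beta t_i$. Here we only need the signed expectation, not the positive part from Lemma~\ref{lemma:positive_standard_regret}, because we bound cumulative standard regret. One caveat: a round's oracle phase may be truncated by the horizon. For the truncated part we use Lemma~\ref{lemma:positive_standard_regret}'s positive-part bound, which dominates any prefix only loosely. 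We therefore instead upper bound the truncated prefix regret by $t_i Z^*$ minus nothing, at most $t_i$, and only for the last round. Since $t_i\leq T$ always, this is handled by the round count below. Because $t_i=\lceil 2^{i/\alpha}\rceil$ and $t_i\leq T$ for any started round, the number of rounds is $i_{\max}\leq \alpha\log_2 T+1$. Summing the geometric sequence $t_i^\alpha\approx 2^{i}$ over $i\leq i_{\max}$ gives $O(C_1T^\alpha\log^\beta T)$. For the last possibly truncated round, we note that the regret on the first $m$ steps is still controlled by running the argument with the truncated horizon, or we crudely charge $t_{i_{\max}}$ steps. This is the delicate point, since $t_{i_{\max}}$ could be order $T$. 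I would handle it with a better argument: the positive part of the prefix regret is at most the full-run positive-part bound of Lemma~\ref{lemma:positive_standard_regret} plus the reward the unexecuted suffix would have collected above $Z^*$. The suffix's excess over $Z^*$ is itself controlled by the same LP-duality argument, giving $\tilde O(t_i^\alpha)$.

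\textbf{Test phases.} In the forced sampling and LCB--UCB tests of round $i$, the per-step regret is $Z^*(T,\bm{B}')-r(\hat\pi_i)$. We split on whether $r(\hat\pi_i)\geq S$. Since $S>Z^*(T,\bm{B})\geq Z^*(T,\bm{B}')$, any policy passing the LCB test for long satisfies $r(\hat\pi_i)\gtrsim S$, which exceeds $Z^*$, so regret there is nonpositive. More precisely, the test continues at step $T_i+k$ only if the empirical mean minus $\sqrt{4\log(T_i+k)/(T_i+k)}$ stays at least $S$. With high probability, by a Hoeffding and union bound as in Proposition~\ref{prop:round_Sat_regret_weak}, this forces $r(\hat\pi_i)\geq S-\tilde O((T_i+k)^{-1/2})$ along the way. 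The cumulative regret of the tests is then $\sum_k \tilde O((T_i+k)^{-1/2})$ plus the forced-sampling cost. This is the regret of a policy below $S$ that survives $n$ steps, which needs $Z^*-r(\hat\pi_i)\leq \tilde O(n^{-1/2})$. The total is therefore $\tilde O(\sqrt{n})$, plus $T_i\cdot \tilde O(\cdot)$ for the forced sampling. Forced sampling costs at most $T_i=\lceil\gamma_i^{-2}\rceil=2^{2i(1-\alpha)/\alpha}\leq t_i^{2(1-\alpha)}\leq t_i^{\alpha}$, using $\alpha\geq 1/2$ and $2(1-\alpha)\leq\alpha$ when $\alpha\geq 2/3$. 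When $\alpha<2/3$ we use instead that the forced-sampling regret is $T_i$ times $\E[\max\{Z^*-r(\hat\pi_i),0\}]=\tilde O(t_i^{\alpha-1}+w_i)$, by Lemma~\ref{lemma:positive_standard_regret} and the null-arm weight. This yields $\tilde O(\gamma_i^{-2}\gamma_i)=\tilde O(\gamma_i^{-1})\leq \tilde O(t_i^{1-\alpha})\leq\tilde O(t_i^\alpha)$.

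\textbf{Assembly.} Summing over the at most $O(\log T)$ rounds gives $C_1T^\alpha\,\textup{polylog}(T)$. The main obstacle I expect is the test-phase bound when a policy with reward slightly below $Z^*$ survives for very long. I would first try the self-normalized argument: survival up to $n$ steps with high probability certifies $r(\hat\pi_i)\geq S-\tilde O(n^{-1/2})$, and since $S>Z^*(T,\bm{B}')$, the accumulated regret is $\tilde O(\sqrt{T})$ per round.
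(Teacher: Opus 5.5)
Your skeleton matches the paper's. You pay $1$ for $\mathcal{E}^c$ and $T(Z^*-Z')=\tilde O(\sqrt T)$ by Proposition~\ref{prop:optimal_gap}. You bound each round's regret against $Z'=Z^*(T,\bm{B}')$ by Condition~\ref{condition:std_regret_bound} for the oracle phase, by $T_i$ times a single-pull bound for forced sampling, and separately for the test phase. Then you sum a geometric series over $O(\log T)$ rounds.

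The step that fails is your patch for a final oracle run cut off by the horizon. You claim the unexecuted suffix's reward in excess of $(t_i-m)Z'$ is $\tilde O(t_i^\alpha)$ ``by the same LP-duality argument''; this is false. Lemma~\ref{lemma:positive_standard_regret} works only because the realized consumption of the \emph{whole} run is at most $b_{ji}=t_iB_j'/T$. A suffix can spend all of $b_{ji}$ if the prefix spent nothing, so duality only bounds the suffix excess by roughly $\|\bm\lambda^*\|_1\,mB_j'/T=\Theta(m)$.

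Concretely, take one resource with $B=T/2$, the null arm, and one deterministic arm with reward and consumption $1$, and set $S=0.9>Z^*=1/2$. Take $\mathcal{R},\mathcal{C}$ to be singletons. The oracle that pulls the null arm for the first $t-\lfloor B\rfloor$ steps and the real arm afterwards has regret at most $1$ for every horizon. So it satisfies Condition~\ref{condition:std_regret_bound} with $\alpha=1/2$, yet the first half of its horizon has regret about $tZ'/2$. Here every round ends right after forced sampling, since $r(\hat\pi_i)\approx 1/2<S$. Choose $T$ so the horizon cuts the last oracle run in its first half (e.g.\ $T\approx\tfrac{7}{6}4^m$). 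Then the final $\Theta(T)$ steps are null pulls, giving $\Theta(T)$ standard regret.

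So no sharper analysis can close this step for the algorithm as literally written. The paper's proof does not address it either: it applies the per-round bound from Condition~\ref{condition:std_regret_bound} to every round up to $i_{\max}$, including a truncated last one. A clean repair is algorithmic. When only $t<t_i$ steps remain, call $\texttt{ALG}$ with horizon $t$ and budget $\bm{B}'t/T$; Condition~\ref{condition:std_regret_bound}, which holds for every horizon, then applies verbatim. Under Condition~\ref{condition:strong_regret_bound} the issue never arises, because a prefix sum of positive deficits is dominated by the full sum.

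Separately, the test-phase obstacle you anticipate is not real, and the paper's argument there is simpler than your self-normalized one. The LCB test compares against $S$. Whenever $r(\hat\pi_i)<S$, the probability of surviving to sample $T_i+k$ is at most $(T_i+k)^{-8}$, so the expected number of pulls after forced sampling is at most $1$. Whenever $r(\hat\pi_i)\ge Z'$, the per-step regret against $Z'$ is nonpositive. Since $Z'<S$ these two cases are exhaustive, so the test phase costs $O(1)$ per round rather than your $\tilde O(\sqrt T)$. Your bound would still suffice because $\alpha\ge 1/2$.

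Your case split at $\alpha=2/3$ for forced sampling is also unnecessary. $T_i$ is deterministic, so the signed bound $T_i\bigl(C_1t_i^{\alpha-1}\log^\beta t_i+w_i\bigr)=\tilde O(\gamma_i^{-1})\le\tilde O(2^i)$ works for every $\alpha\ge 1/2$.
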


\cref{thm:real_bwk_std} states that as long as the learning oracle has a $\tilde{O}(T^\alpha)$ expected standard regret bound as assumed in Condition~\ref{condition:std_regret_bound}, in the non-realizable case, \satexwk~is also able to attain a $\tilde{O}(T^\alpha)$ regret bound similar to that of the learning oracle. In other words, \satexwk~is robust in the non-realizable case and can attain a similar performance with the learning oracle used in the algorithm in terms of standard regret minimization. 

\begin{remark}[Limitations under Standard Regret Condition]\label{remark:limitations_std_condition}
Compared with the result of \cite{feng2025satisficing} in the setting without resource constraints, the expected satisficing regret bound of \satexwk~under Condition~\ref{condition:std_regret_bound} in the realizable case is weaker in two aspects. 

\noindent\textbf{Higher Order in $1/\Delta_S^*$}: the order of $1/\Delta_S^*$ increases from $\alpha/(1-\alpha)$ to $1/(1-\alpha)$. The reason for this is that a sub-linear expected standard regret bound does not guarantee a sub-linear expected satisficing regret when running the learning oracle. In particular, an oracle run of $t_i$ time steps could incur $\Theta(t_i)=\Theta(\gamma_i^{-1/(1-\alpha)})$ expected satisficing regret, whereas in the setting without resource constraints, its expected satisficing regret is bounded by $\tilde{O}(t_i^\alpha)=\tilde{O}(\gamma_i^{-\alpha/(1-\alpha)})$. Once the $\gamma_i$ becomes sufficiently small relative to $\Delta_S^*$, this larger regret incurred when running the oracle leads to the worse dependence from $\tilde{O}((1/\Delta_S^*)^{\alpha/(1-\alpha)})$ to $\tilde{O}((1/\Delta_S^*)^{1/(1-\alpha)})$ in the setting with resource constraints.

\noindent\textbf{Linear Regret under Small $\Delta_S^*$}: although the bound is independent of $T$ for any fixed $\Delta_S^*$, \satexwk~does not necessarily attain sub-linear expected satisficing regret when $\Delta_S^*$ is small relative to the time horizon. This is again because Condition~\ref{condition:std_regret_bound} does not rule out the expected satisficing regret growing linearly with the number of time steps spent running the learning oracle. \satexwk~may continue to start new rounds until the length of an oracle run becomes comparable to $T$, therefore incurring $\Theta(T)$ expected satisficing regret. 

\end{remark}

In light of the aforementioned limitations of regret bounds under Condition \ref{condition:std_regret_bound}, in the forthcoming Section~\ref{sec:Improved Regret Bound}, we propose a (slightly) stronger condition on the learning oracle, under which \satexwk~is able to attain an improved constant satisficing regret for fixed $\Delta_S^*$ as well as a sub-linear in $T$ satisficing regret for small $\Delta_S^*$.

\section{Improved Regret Bound}\label{sec:Improved Regret Bound}
In this section, we introduce a strong regret condition that controls the sum of positive reward deficits relative to the LP optimal reward. Based on this strong regret condition, we show that both limitations discussed in Remark~\ref{remark:limitations_std_condition} under Condition~\ref{condition:std_regret_bound} can be addressed: \satexwk~attains a better constant expected satisficing regret bound in terms of $\Delta_S^*$, together with a sub-linear bound in $T$ that remains valid when $\Delta_S^*$ is small.

\subsection{Strong Regret Condition}
Although Lemma~\ref{lemma:positive_standard_regret} provides a sub-linear bound on the expected positive part of cumulative standard regret, it does not directly provide a sub-linear bound on the satisficing regret incurred when running the learning oracle. The reason for this is that the lemma takes the positive part after summing over all time steps, whereas satisficing regret sums the positive deficits at each individual time step. Thus, the bound on the positive part of the sum, given by Lemma~\ref{lemma:positive_standard_regret}, does not provide a corresponding bound on the sum of the positive parts. To obtain a sharper satisficing regret bound, we therefore consider a stronger notion of regret that measures the cumulative positive deficit of the selected policies compared to $Z^*(t,\bm{B})$, given by
\begin{equation}\label{eq:strong_regret}
\sum_{s=1}^t\max\{Z^*(t,\bm{B})-r(\pi_s),0\}.
\end{equation}
We say that an algorithm has a sub-linear expected strong regret bound if the expected cumulative deficit compared to $Z^*(t,\bm{B})$ defined in \eqref{eq:strong_regret} is sub-linear in the time horizon $t$. The formal definition of the condition is provided in Condition~\ref{condition:strong_regret_bound}.

\begin{condition}[Strong Regret Bound]
For any instance of the problem class $(\mathcal{X},\mathcal{R},\mathcal{C})$, any time horizon $t\geq 1$ and any $\bm{B}$ such that $B_j\geq bt/2$ for all $j\in[d]$, the learning oracle $\texttt{ALG}(t,\bm{B})$ satisfies
\begin{equation*}
\E\left[\sum_{s=1}^t \max\left\{Z^*(t,\bm{B})-r(\pi_s),0\right\}\right]\leq C_1t^{\alpha}\log^{\beta}(t).
\end{equation*}
\label{condition:strong_regret_bound}
\end{condition}

Intuitively, Condition~\ref{condition:strong_regret_bound} dose not only require the oracle algorithm to have an asymptotically optimal cumulative reward, but also requires the policies selected by the algorithm to have a stable mean revenue not significantly lower than $Z^*(t,\bm{B})$ throughout the time horizon. In other words, Condition~\ref{condition:strong_regret_bound}~rules out algorithms where a majority of the cumulative reward is generated in only a fraction of the time horizon, and little reward is gained for the rest of the time horizon. While Condition~\ref{condition:strong_regret_bound}~is stronger than Condition~\ref{condition:std_regret_bound}, it is still mild and can be satisfied by many existing BwK algorithms.  For instance, in \cref{sec:strong_regret_bound_existing_algorithm} of the Appendix, we show that Condition~\ref{condition:strong_regret_bound}~ can be satisfied by many existing algorithms including \cite{wang2014close,BKS18}. 

With Condition~\ref{condition:strong_regret_bound}, we are also able to show that the mean reward of the candidate satisficing policy $r(\hat{\pi}_i)$ exceeds the satisficing threshold $S$ with a large enough probability for any large enough $i$. Specifically, by the convexity of $\max\{x,0\}$, Condition~\ref{condition:strong_regret_bound} implies that $\mathbb{E}[\max\{Z^*(t_i,\bm{b}_i)-r(\bar{\pi}_i),0\}]\leq \tilde{O}(t_i^{\alpha-1})$, where $\bar{\pi}_i$ is the average of the selected policies of the algorithm. Then by Markov's inequality, $Z^*(t_i,\bm{b}_i)-r(\bar{\pi}_i)\leq \tilde{O}(t_i^{\alpha-1})$ holds with a probability at least $3/4$. By definition of the candidate satisficing arm $\hat{\pi}_i$ we also have that $Z^*(t_i,\bm{b}_i)-r(\hat{\pi}_i)\leq \tilde{O}(t_i^{\alpha-1})$ holds with a probability at least $3/4$. Since $t_i$ increases as the rounds proceed, when $i$ is large enough, the probability that $r(\hat{\pi_i})\geq S$ will be at least $3/4$. We refer to the forthcoming \cref{prop:transit_Sat_prob} for details.

In brief, Condition~\ref{condition:strong_regret_bound} strengthens Condition~\ref{condition:std_regret_bound} by controlling the expected sum of positive deficits at individual time steps. It enables \satexwk~to attain a better constant expected satisficing regret bound, together with a sub-linear expected satisficing regret bound, to be discussed in Section~\ref{sec:regret_analysis_strong}.

\subsection{Regret Analysis under Strong Regret Condition}\label{sec:regret_analysis_strong}
\noindent{\bf Satisficing Regret Bound under Condition~\ref{condition:strong_regret_bound}:} In what follows, we use the following theorem to show that if we assume the learning oracle to satisfy Condition~\ref{condition:strong_regret_bound}, then \satexwk~attains a constant satisficing regret bound better than the regret bound in \cref{thm:real_bwk_weak}.

\begin{theorem}\label{thm:real_bwk_sat}
Under Condition~\ref{condition:strong_regret_bound}, if $Z^*(T,\bm{B})\geq S$, then the satisficing regret of \satexwk~is upper bounded by
\begin{equation}\label{eq:sat_bwk}
    \texttt{Regret}_S\leq\min\left\{ C_1^{\frac{1}{1-\alpha}}\left(\dfrac{1}{\Delta_S^*}\right)^{\frac{\alpha}{1-\alpha}}\cdot\textup{polylog}\left(\dfrac{C'_2d}{\Delta_S^*}\right),\,C_1T^{\alpha}\cdot\textup{polylog}(T)\right\}.
\end{equation}
\end{theorem}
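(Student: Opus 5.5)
We will reuse the four-step structure of the proof of Theorem~\ref{thm:real_bwk_weak}. The only change is the per-round regret bound (Step~2) and the exit-probability bound (Step~3), both of which become sharper under Condition~\ref{condition:strong_regret_bound}. Propositions~\ref{prop:clean_event}, \ref{prop:clean_event_bound} and \ref{prop:optimal_gap} do not depend on the oracle condition, so we invoke them unchanged. They show that no resource is depleted on $\mathcal E$, that $\Pr(\mathcal E^c)\le 1/T$ (so $\mathcal E^c$ contributes at most $1$), and that $\Delta_S'\ge \Delta_S^*/2$ once $T$ is large. For smaller $T$ we use the same trivial $T\le \tilde O((1/\Delta_S^*)^2)$ argument as before, together with $\alpha/(1-\alpha)\ge 1$.

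\textbf{Step 2 (per-round regret).} First, the oracle phase of round $i$. Since $\bm b_i/t_i=\bm B'/T$, we have $Z^*(t_i,\bm b_i)=Z^*(T,\bm B')\ge S$, because $\Delta_S'\ge 0$. Hence $\max\{S-r(\tilde\pi_s),0\}\le \max\{Z^*(t_i,\bm b_i)-r(\tilde\pi_s),0\}$, and Condition~\ref{condition:strong_regret_bound} bounds the oracle-phase satisficing regret by $C_1t_i^\alpha\log^\beta t_i=\tilde O(C_1\gamma_i^{-\alpha/(1-\alpha)})$. This is where the improvement over Proposition~\ref{prop:round_Sat_regret_weak} comes from, since that proposition pays $t_i$.

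Second, the sampling and testing phase. We write the regret of the candidate as the deficit $\max\{S-r(\hat\pi_i),0\}\le \max\{Z^*-r(\bar\pi_i),0\}+w_i$, using convexity and $r\le 1$. Its expectation is at most $\tilde O(C_1t_i^{\alpha-1})+\tilde O(\gamma_i)=\tilde O(C_1\gamma_i)$.
\begin{itemize}
\item The forced sampling lasts $T_i=\gamma_i^{-2}$ steps, so it contributes $\tilde O(C_1\gamma_i^{-1})$.
\item For the LCB test phase, we follow \cite{feng2025satisficing}: a policy with deficit $\delta$ survives the LCB test only $\tilde O(1/\delta^2)$ steps in expectation, so the cost is $\tilde O(1/\delta)\le \tilde O(\gamma_i^{-1})$, since the test already has $T_i$ samples. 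Policies with $\delta\le \gamma_i$ cost at most $\gamma_i T=$?? That is too much, so the case must be handled more carefully: if $\delta$ is small we bound by $\delta\cdot$(length), and the length is controlled by the LCB concentration only when $\delta\gtrsim\sqrt{\log/(T_i+k)}$. We resolve this by a peeling argument giving total $\tilde O(1/\delta)$ capped by $\tilde O(\gamma_i^{-1})$ in expectation.
\end{itemize}
Since $\alpha\ge 1/2$, we have $\gamma_i^{-1}\le \gamma_i^{-\alpha/(1-\alpha)}$. The round-$i$ bound is therefore $\tilde O(C_1\gamma_i^{-\alpha/(1-\alpha)})$ (with $T/\min_jB_j'\le 2/b$ absorbed).

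\textbf{Step 3 (exit probability).} We reprove Proposition~\ref{prop:transit_Sat_prob_weak} with Markov's inequality applied directly to $\max\{Z^*-r(\bar\pi_i),0\}$, which has expectation $\tilde O(C_1\gamma_i)$. This gives $r(\hat\pi_i)\ge S+\Delta_S'/2$ with probability at least $3/4$ once $\gamma_i\log(\cdot)\lesssim \Delta_S'/C_1$. On the resource side, the argument in the appendix for the weak case shows that the mixing weight $w_i$ makes $c_j(\hat\pi_i)\le B_j'/T-\tilde\Omega(\gamma_i)$ with high probability. We will reuse this piece as is, since it uses only the realized-budget feasibility of the oracle, not its regret guarantee. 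A union bound over the LCB/UCB anytime failure events then bounds the exit probability by $1/8$. We then sum rounds exactly as in Theorem~\ref{thm:real_bwk_weak}. Now $1/\gamma_{i_0}=\tilde O(C_1/\Delta_S')$, and the geometric tail uses $8^{1-k}2^{k/\alpha}$ again, so the total is $\tilde O(C_1\gamma_{i_0}^{-\alpha/(1-\alpha)})=\tilde O(C_1^{1/(1-\alpha)}(1/\Delta_S^*)^{\alpha/(1-\alpha)})$.

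\textbf{The $T$-dependent bound.} We bound the total satisficing regret on $\mathcal E$ without using $\Delta_S^*$. The last round index satisfies $t_{i_{\max}}\le T$, and the sum over rounds $i\le i_{\max}$ of the per-round bound $\tilde O(C_1 t_i^{\alpha})$ is geometric, hence $\tilde O(C_1T^\alpha)$. We also need the testing-phase bound to be valid without $\Delta_S'>0$ being large, which the per-round argument gives since it only uses $Z^*\ge S$.

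\textbf{Main obstacle.} The main obstacle is Step~2's testing-phase bound: we must show that a long-surviving candidate with small but positive deficit costs only $\tilde O(\gamma_i^{-1})$ in expectation. We will first try the peeling over dyadic scales of $\delta$ combined with the anytime LCB concentration, and we will check that the forced-sampling length $T_i$ suffices to make each scale's contribution $\tilde O(\gamma_i^{-1})$.
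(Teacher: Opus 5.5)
Your architecture matches the paper's proof. You reuse Propositions~\ref{prop:clean_event}, \ref{prop:clean_event_bound} and \ref{prop:optimal_gap}, charge the oracle phase through Condition~\ref{condition:strong_regret_bound} against $Z^*(t_i,\bm b_i)=Z^*(T,\bm B')$, charge forced sampling $T_i\cdot\tilde O(C_1\gamma_i)$, use Markov plus the unchanged UCB argument for the exit probability, and finish with a geometric sum.

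The gap is the step you flag yourself, and it comes from misreading the test. A round continues only while $\hat r^{\text{tot}}_i/(T_i+k)-\sqrt{4\log(T_i+k)/(T_i+k)}\ge S$. Suppose $r(\hat\pi_i)<S$ by any margin $\delta>0$, however small. Then continuing at sample size $n=T_i+k$ requires the empirical mean to exceed the true mean $r(\hat\pi_i)$ by the whole radius $\sqrt{4\log n/n}$. By Hoeffding (conditionally on $\hat\pi_i$) this has probability at most $n^{-8}$, uniformly in $\delta$. So the expected number of pulls after forced sampling is at most $\sum_{s\ge1}(s+1)^{-8}\le1$, and the test phase costs at most $\E[\max\{S-r(\hat\pi_i),0\}]=\tilde O(C_1\gamma_i)$. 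This one-line bound is how Proposition~\ref{prop:round_Sat_regret} handles it.

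Your model has a candidate with deficit $\delta$ surviving $\tilde O(1/\delta^2)$ steps, with the length controlled only when $\delta\gtrsim\sqrt{\log(T_i+k)/(T_i+k)}$. That describes a test that waits for evidence of badness, not this LCB test. Under that model no peeling argument can cap the cost of candidates with $\delta<\gamma_i$ by $\tilde O(\gamma_i^{-1})$. So, as written, your per-round bound is not established.

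Three further points need repair.
\begin{enumerate}
\item \textbf{The $T$-dependent bound.} Condition~\ref{condition:strong_regret_bound} controls deficits relative to $Z'=Z^*(t_i,\bm b_i)=Z^*(T,\bm B')$, not $Z^*(T,\bm B)$. In exactly the small-gap regime where this bound is needed, one can have $Z'<S\le Z^*(T,\bm B)$, so the oracle-phase comparison $\max\{S-r,0\}\le\max\{Z'-r,0\}$ fails. Your remark that the argument ``only uses $Z^*\ge S$'' is therefore incorrect. The paper instead writes $\max\{S-r,0\}\le(Z^*(T,\bm B)-Z')+\max\{Z'-r,0\}$ and uses Proposition~\ref{prop:optimal_gap} to pay an extra $T(Z^*(T,\bm B)-Z')=\tilde O(\sqrt T)\le\tilde O(T^\alpha)$.
\item \textbf{The small-$T$ regime of the constant bound.} Here you must invoke the $\tilde O(C_1T^\alpha)$ bound, which gives $(1/\Delta_S^*)^{2\alpha}\le(1/\Delta_S^*)^{\alpha/(1-\alpha)}$. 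The trivial bound $T\le\tilde O((1/\Delta_S^*)^2)$ from Theorem~\ref{thm:real_bwk_weak} would need $\alpha/(1-\alpha)\ge2$. That fails for $\alpha<2/3$, including $\alpha=1/2$.
\item \textbf{Constants in the exit probability.} A Markov failure probability of $3/4$ success (i.e., $1/4$ failure) is inconsistent with the exit bound $1/8$ you then use. The threshold constant (the $64$ in Proposition~\ref{prop:transit_Sat_prob}) must be chosen so that Markov gives $1/32$.
\end{enumerate}
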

\begin{proof}[Proof Sketch.]
A complete proof of Theorem~\ref{thm:real_bwk_sat} is given in Section~\ref{sec:thm:real_bwk_sat} of the Appendix. We follow proof pattern similar to Theorem~\ref{thm:real_bwk_weak}. The alternative condition on the learning oracle does not affect the implementation of the algorithm, thus \cref{prop:clean_event} and \cref{prop:clean_event_bound} remain valid in this case. In other words, \satexwk~will not exhaust any resource before the end of the time horizon, conditioned on $\mathcal{E}$, and this bounded resource consumption happens with high probability.

Denoting $\Delta_S':=Z^*(T,\bm{B}')-S$, we then show two critical results. In the first one, similar to Proposition~\ref{prop:round_Sat_regret_weak}, we show an upper bound on the satisficing regret of round $i$.
\begin{proposition}\label{prop:round_Sat_regret}
    Under Condition~\ref{condition:strong_regret_bound}, if $Z^*(T,\bm{B}')\geq S$, then conditioned on $\bigcap_{i'<i}\mathcal{E}_{i'}$, the satisficing regret incurred in round i is upper bounded by
    \begin{equation*}
        4\left(\frac{2C_1}{(1-\alpha)^\beta}+\frac{10\sqrt{2}T}{\min_jB_j'} \right)\gamma_i^{-\alpha / (1 - \alpha)}\log(4d/\gamma_i)^{\max\{\beta,1/2\}}.
    \end{equation*}
\end{proposition}
Proof of this proposition is provided in Section~\ref{sec:prop:round_Sat_regret} of the Appendix. We also show that once our algorithm runs for large enough number of rounds so that $\gamma_s=\tilde{O}(\Delta_S')$, it is unlikely to start a new round.
\begin{proposition}
    Under Condition~\ref{condition:strong_regret_bound}, if $Z^*(T,\bm{B}')\geq S$, then conditioned on $\bigcap_{i'<i}\mathcal{E}_{i'}$, for every $i$ that satisfies
    \begin{equation*}
        64\left(\frac{2C_1}{(1-\alpha)^\beta}+\frac{10\sqrt{2}T}{\min_jB_j'} \right)\gamma_i\log(4d/\gamma_i)^{\max\{\beta,1/2\}}\leq\Delta_S',
    \end{equation*}
    the probability that round $i$ ends within the time horizon $T$ (conditioned on the event that round $i$ is started within the time horizon) is upper bounded by $1/8$.
    \label{prop:transit_Sat_prob}
\end{proposition}
Proof of this proposition is provided in Section~\ref{sec:prop:transit_Sat_prob} of the Appendix. Proposition~\ref{prop:round_Sat_regret} is similar to Proposition~\ref{prop:round_Sat_regret_weak} and Proposition~\ref{prop:transit_Sat_prob} is similar to Proposition~\ref{prop:transit_Sat_prob_weak}. 
Following the same proof pattern in \cref{sec:regret_analysis}, we first consider \(T\) sufficiently large that \(\Delta_S'\geq\Delta_S^*/2\) and \(\min_j B_j'\geq bT/2\). Denote \(i_0+1\) as the smallest integer such that the condition in Proposition~\ref{prop:transit_Sat_prob} holds for every \(i\geq i_0+1\). The same argument as in \cref{sec:regret_analysis} gives \(1/\gamma_{i_0}=\tilde O(C_1/\Delta_S')\). Under the strong regret condition, Proposition~\ref{prop:round_Sat_regret} bounds the expected satisficing regret incurred in round \(i\) by \(\tilde O(C_1\gamma_i^{-\alpha/(1-\alpha)})=\tilde O(C_1 2^i)\). Therefore, combining Proposition~\ref{prop:round_Sat_regret} and \ref{prop:transit_Sat_prob}, the contribution to expected satisficing regret from \(\mathcal E\) is upper bounded by
\[
\begin{aligned}
&\sum_{i=1}^{i_0} \tilde O\!\left(C_1\gamma_i^{-\frac{\alpha}{1-\alpha}}\right)
+\sum_{k=1}^{\infty}8^{1-k} \tilde O\!\left(C_1\gamma_{i_0+k}^{-\frac{\alpha}{1-\alpha}}\right)\\
&=\tilde O(C_1 2^{i_0})
=\tilde O\!\left( C_1^{\frac{1}{1-\alpha}} \left(\frac{1}{\Delta_S'}\right)^{\frac{\alpha}{1-\alpha}} \right).
\end{aligned}
\]
Using \(\Delta_S'\geq\Delta_S^*/2\) and adding the contribution from \(\mathcal E^c\), which is at most \(1\), establishes the constant regret bound $\tilde{O}(C_1^{{1}/{(1-\alpha)}} \left({1}/{\Delta_S^*}\right)^{{\alpha}/{(1-\alpha)}})$ for sufficiently large \(T\).

We further show that the expected satisficing regret is bounded by \(\tilde O(C_1T^\alpha)\). Let \(i_{\max}\) denote the total number of rounds. Since \(t_i=\lceil2^{i/\alpha}\rceil\), we have \(2^{i_{\max}}=O(T^\alpha)\). As Proposition~\ref{prop:round_Sat_regret} also bounds the expected positive reward deficits relative to \(\min\{S,Z^*(T,\boldsymbol B')\}\), we take the difference between $Z^*(T,\boldsymbol B)$ and $Z^*(T,\boldsymbol B')$ by Proposition~\ref{prop:optimal_gap} into consideration and obtain
\[
\begin{aligned}
\texttt{Regret}_S &\leq 1+ T(Z^*(T,\boldsymbol B)-Z^*(T,\boldsymbol B'))
+\sum_{i=1}^{i_{\max}}\tilde O(C_1 2^i)\\
&=\tilde O(\sqrt T)+\tilde O(C_1T^\alpha) =\tilde O(C_1T^\alpha).
\end{aligned}
\]

Finally, if $T$ is not large enough, then $T=\tilde O((1/\Delta_S^*)^2)$. Applying the sub-linear bound gives expected satisficing regret at most $\tilde{O}(C_1T^\alpha)\leq\tilde O(C_1(1/\Delta_S^*)^{2\alpha})\leq \tilde{O}(C_1^{1/(1-\alpha)}(1/\Delta_S^*)^{\alpha/(1-\alpha)})$.
\end{proof}

Since Condition~\ref{condition:strong_regret_bound} implies Condition~\ref{condition:std_regret_bound}, Theorem~\ref{thm:real_bwk_std} guarantees that the \(\tilde{O}(T^\alpha)\) expected standard regret bound in the non-realizable case also holds under Condition~\ref{condition:strong_regret_bound}.

\cref{thm:real_bwk_sat} suggests that if we assume an expected strong regret bound of the learning oracle as in Condition~\ref{condition:strong_regret_bound}, \satexwk~is able to attain a better regret bound than the one established in \cref{thm:real_bwk_weak}, where we assume a sublinear expected standard regret bound. The regret established in \cref{thm:real_bwk_sat} improves from the the one in \cref{thm:real_bwk_weak} in two aspects: first, the constant satisficing regret bound in \cref{thm:real_bwk_sat} is $\tilde{O}((1/\Delta_S^*)^{\alpha/(1-\alpha)})$, a smaller constant rate than the $\tilde{O}(((1/\Delta_S^*)^{1/(1-\alpha)})$ established in \cref{thm:real_bwk_weak}; second, when $\Delta_S^*$ is small, under Condition~\ref{condition:strong_regret_bound}, \satexwk~can still attain a $\tilde{O}(T^\alpha)$ sub-linear satisficing regret bound similar to the regret bound of the learning oracle, while such a sub-linear satisficing regret bound is lacking under Condition~\ref{condition:std_regret_bound} with a very small $\Delta_S^*$.

Such an improvement is mainly due to smaller satisficing regret incurred when running the learning oracle under Condition~\ref{condition:strong_regret_bound}. As we have mentioned, if we assume Condition~\ref{condition:std_regret_bound} for the learning oracle, running the learning oracle for $t_i$ time steps could still incur $\Theta(t_i)$ regret in the realizable case. On the other hand, under Condition~\ref{condition:strong_regret_bound}, we are able to prove that the satisficing regret incurred when running the learning oracle in round $i$ is upper bounded by $\tilde{O}(t_i^{\alpha})$, a significant improvement from the potentially $\Theta(t_i)$ satisficing regret under Condition~\ref{condition:std_regret_bound}. Intuitively, by assuming Condition~\ref{condition:strong_regret_bound}, the learning oracle is able to distribute the gained reward more evenly throughout the time horizon. The condition is able to avoid an undesirable situation where a majority of the reward is gained in a small fraction of the time horizon, while minimal reward is gained for the rest of the time horizon, making a better satisficing regret bound possible when running the learning oracle.

\section{Examples}\label{sec:examples}


In what follows, we instantiate \satexwk~to three classes of bandit optimization problems with knapsacks: finite-armed bandits with knapsacks, dynamic pricing with continuous prices under inverse demand, and dynamic pricing with finite prices and unknown demand. For the first two classes, we establish that the corresponding oracles satisfy the strong regret condition and derive the resulting expected satisficing and standard regret bounds. For the final one, we state the expected satisficing and standard regret bounds based on the standard regret condition. These examples are not exhaustive, and similar regret bounds can be established for other classes of bandit optimization problems with knapsacks.

\noindent\textbf{1. Finite-Armed Bandits with Knapsacks:} 
Consider a finite arm set $\mathcal{X}$ with $K=|\mathcal{X}|$ arms and $d$ resources, where the resource budgets satisfy $B_j\geq bT$ for all $j\in[d]$. 
We use \texttt{BalancedExploration} in \cite{BKS18} as the learning oracle, with resource budgets normalized as described in Section~\ref{sec:balanced_exploration_strong} of the Appendix. 
By adapting its analysis to control the positive reward deficits at individual time steps, we show that this oracle has an expected strong regret bound of $O(\sqrt{dKt}\log(dKt)^2)$ over $t$ time steps for fixed $b$ as established in \cref{thm:strong_BwK} in the Appendix.
In particular, Condition~\ref{condition:strong_regret_bound} holds with $\alpha=1/2$. Combining this result with Theorems~\ref{thm:real_bwk_sat} and \ref{thm:real_bwk_std}, we obtain the following corollary.

\begin{corollary}\label{cor:finite_bwk}
By using \texttt{BalancedExploration} as $\texttt{ALG}$, if $Z^*(T,\bm{B})\geq S$, the expected satisficing regret of \satexwk~is bounded by
\begin{equation*}
\regret_S \leq 
\min\left\{ \frac{dK}{\Delta_S^*} \cdot\operatorname{polylog}\!\left(\frac{dK}{\Delta_S^*}\right),
\;
\sqrt{dKT}\cdot\operatorname{polylog}(T) \right\}.
\end{equation*}
If $Z^*(T,\bm{B})<S$, then the expected standard regret of \satexwk~is bounded by $\regret \leq \sqrt{dKT} \cdot \operatorname{polylog}(T)$.
\end{corollary}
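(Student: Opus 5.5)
The corollary follows by plugging the strong-regret guarantee of \texttt{BalancedExploration} into Theorems~\ref{thm:real_bwk_sat} and \ref{thm:real_bwk_std}, while tracking how the constants $C_1$ and $C_2'$ depend on $d$ and $K$.

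\textbf{Step 1: verify Condition~\ref{condition:strong_regret_bound}.} I invoke \cref{thm:strong_BwK} from the Appendix. It states that \texttt{BalancedExploration}, run with normalized budgets $B_j\geq bt/2$ (the budgets $\bm b_i=\bm B' t_i/T$ satisfy this once $\min_j B_j'\geq bT/2$), has expected strong regret $O(\sqrt{dKt}\log^2(dKt))$. To match the form $C_1 t^\alpha\log^\beta(t)$, I use $\log(dKt)\leq \log(dK)+\log t\leq 2\log(dK)\log t$ for $t\geq 3$, and handle small $t$ separately since they contribute only constants. This gives Condition~\ref{condition:strong_regret_bound} with
\[
\alpha=1/2,\qquad \beta=2,\qquad C_1=O\!\left(\sqrt{dK}\log^2(dK)\right).
\]
A constant depending on $b$ is absorbed, since $b$ is fixed.

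\textbf{Step 2: realizable case.} With $\alpha=1/2$, the first branch of Theorem~\ref{thm:real_bwk_sat} becomes
\[
C_1^{2}\,(1/\Delta_S^*)\,\mathrm{polylog}(C_2'd/\Delta_S^*) = \frac{dK}{\Delta_S^*}\,\mathrm{polylog}(dK)\,\mathrm{polylog}(C_2'd/\Delta_S^*).
\]
This is of the claimed form $\frac{dK}{\Delta_S^*}\,\mathrm{polylog}(dK/\Delta_S^*)$, since $C_2'$ depends only on $b$, $\alpha$ and $\beta$, and not on $T$. The second branch is
\[
C_1T^{1/2}\,\mathrm{polylog}(T)=\sqrt{dKT}\,\mathrm{polylog}(dK)\,\mathrm{polylog}(T).
\]
To absorb the $\mathrm{polylog}(dK)$ factor into $\mathrm{polylog}(T)$, I split into two cases:
\begin{itemize}
\item if $dK\leq T$, then $\mathrm{polylog}(dK)\leq \mathrm{polylog}(T)$ directly;
\item if $dK>T$, then $\sqrt{dKT}\geq T$, so the bound holds trivially because satisficing regret is at most $T$.
\end{itemize}

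\textbf{Step 3: non-realizable case.} Condition~\ref{condition:strong_regret_bound} implies Condition~\ref{condition:std_regret_bound}, because $\max\{x,0\}\geq x$ and $Z^*(t,\bm B)$ is the same benchmark in both. Theorem~\ref{thm:real_bwk_std} then gives $C_1T^{1/2}\mathrm{polylog}(T)$, and the same absorption argument as in Step 2 yields $\sqrt{dKT}\,\mathrm{polylog}(T)$.

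\textbf{Main obstacle.} The real work is in Step 1: adapting the analysis of \cite{BKS18} from cumulative regret to per-step positive deficits, including resource normalization and the stopping time at depletion. Here that step is supplied by \cref{thm:strong_BwK}. The only delicate remaining point is ensuring that the hidden constants $C_2'$ and the polylog factors carry no $T$-dependence, so that the first branch of the bound is genuinely constant in $T$.
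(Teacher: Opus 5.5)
Your proposal is correct and follows the paper's argument: \cref{thm:strong_BwK} gives Condition~\ref{condition:strong_regret_bound} with $\alpha=1/2$, and the corollary then follows by substituting into Theorems~\ref{thm:real_bwk_sat} and \ref{thm:real_bwk_std}. Your split into $dK\le T$ versus $dK>T$ is a tidier way to absorb the $\operatorname{polylog}(dK)$ factors than the paper states explicitly. One minor correction: $C_2'$ does not depend on $b,\alpha,\beta$ alone, because in the appendix it is proportional to $K'_{\alpha,\beta}$, which contains $C_1=\tilde{O}(\sqrt{dK})$; this only adds a $\operatorname{polylog}(dK)$ factor inside the log and leaves your conclusion intact.
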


\noindent\textbf{2. Dynamic Pricing with Limited Inventory:} Consider a seller with $B\geq bT$ units of a single product to sell over $T$ time steps. At each time step, the seller offers a price $p\in[\underline{p},\bar{p}]$, and a purchase occurs with probability $\lambda(p)$. Thus, the mean reward and resource consumption are $r(p)=p\lambda(p)$ and $c(p)=\lambda(p)$, respectively, with rewards normalized as in our model. We assume that demand is decreasing, demand and inverse demand (the inverse function of $\lambda(\cdot)$) are Lipschitz continuous, and revenue as a function of demand is twice differentiable and strictly concave, with curvature bounded away from zero. The precise regularity assumptions are stated in Proposition~\ref{proposition:CtG_assumption} of the Appendix.

We use a discrete-time version of \texttt{DPA} in \cite{wang2014close} as the learning oracle. This version replaces the continuous-time Poisson arrivals in the original model with Bernoulli purchases at discrete time steps. In Section~\ref{sec:discrete_dpa_strong} of the Appendix, we describe the adaptation and establish an expected strong regret bound of $O(\sqrt{t}\log(t)^6)$. Therefore, Condition~\ref{condition:strong_regret_bound} holds with $\alpha=1/2$ and $\beta=6$. Together with Theorems~\ref{thm:real_bwk_sat} and \ref{thm:real_bwk_std}, this gives the following corollary.

\begin{corollary}\label{cor:dynamic_pricing}
By using \texttt{Discrete DPA} as $\texttt{ALG}$, if $Z^*(T,B)\geq S$, the expected satisficing regret of \satexwk~is bounded by
\begin{equation*}
\regret_S \leq
\min\left\{ \frac{1}{\Delta_S^*} \cdot\operatorname{polylog}\!\left(\frac{1}{\Delta_S^*}\right),\;
\sqrt T\cdot\operatorname{polylog}(T) \right\}.
\end{equation*}
If $Z^*(T,B)<S$, then the expected standard regret of \satexwk~is bounded by $\regret\leq \sqrt{T} \cdot\operatorname{polylog}(T)$.
\end{corollary}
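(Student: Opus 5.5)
The corollary follows by specializing Theorem~\ref{thm:real_bwk_sat} and Theorem~\ref{thm:real_bwk_std} to \texttt{Discrete DPA}, once Condition~\ref{condition:strong_regret_bound} is verified with $\alpha=1/2$, $\beta=6$ and a constant $C_1$ independent of $T$. The plan has three steps: (i) check that the oracle meets the hypotheses of Condition~\ref{condition:strong_regret_bound} for every horizon and budget \satexwk{} will feed it; (ii) substitute the exponents into the two theorems; (iii) simplify the resulting expressions.

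\textbf{Step (i).} The calls inside \satexwk{} are $\texttt{ALG}(t_i,b_i)$ with $b_i=B't_i/T$. For large $T$ we have $B'\ge bT/2$, so $b_i\ge bt_i/2$, which is exactly the range of budgets covered by Condition~\ref{condition:strong_regret_bound}. The strong regret bound for \texttt{Discrete DPA} is established in Section~\ref{sec:discrete_dpa_strong} of the Appendix and may be assumed here. It gives
\[
\E\Big[\sum_{s\le t}\max\{Z^*(t,b)-r(p_s),0\}\Big]\le C\sqrt t\log^6 t ,
\]
where $C$ depends only on $b$, $\underline p$, $\bar p$ and the regularity constants of Proposition~\ref{proposition:CtG_assumption}. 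The point to confirm is that $C$ is uniform over all budgets $b\ge bt/2$. The unconstrained revenue maximizer may be interior or may bind the budget depending on $b/t$, and the constants must not degrade as this switch occurs. This uniformity is the main obstacle. It should follow because the Appendix analysis uses only the Lipschitz and curvature bounds together with the lower bound $b/t\ge b/2$ on the normalized inventory, both of which are uniform. Small $T$, where $B'\ge bT/2$ fails, can be absorbed into the constants, since then $T=\tilde O(1)$.

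\textbf{Steps (ii) and (iii).} With $\alpha=1/2$ we get $1/(1-\alpha)=2$ and $\alpha/(1-\alpha)=1$. In the realizable case, Theorem~\ref{thm:real_bwk_sat} then gives
\[
\min\Big\{C_1^2\,(1/\Delta_S^*)\,\mathrm{polylog}(C_2'd/\Delta_S^*),\;C_1\sqrt T\,\mathrm{polylog}(T)\Big\}.
\]
Here $d=1$, and $C_1$, $C_2'$ are absolute constants for the problem class. Absorbing them into the polylog factors yields $\frac{1}{\Delta_S^*}\mathrm{polylog}(1/\Delta_S^*)$ and $\sqrt T\,\mathrm{polylog}(T)$.

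In the non-realizable case $Z^*(T,B)<S$, Condition~\ref{condition:strong_regret_bound} implies Condition~\ref{condition:std_regret_bound}, because $\max\{x,0\}\ge x$ termwise. Theorem~\ref{thm:real_bwk_std} therefore gives $C_1T^{1/2}\mathrm{polylog}(T)=\sqrt T\,\mathrm{polylog}(T)$.
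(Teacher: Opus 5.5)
Your proposal is correct and follows the paper's argument. The paper likewise takes the appendix strong-regret bound $O(\sqrt{t}\log^6 t)$ for \texttt{Discrete DPA} as giving Condition~\ref{condition:strong_regret_bound} with $\alpha=1/2$, $\beta=6$; it substitutes these into Theorem~\ref{thm:real_bwk_sat} for the realizable case, and for the non-realizable case it uses Condition~\ref{condition:strong_regret_bound} $\Rightarrow$ Condition~\ref{condition:std_regret_bound} together with Theorem~\ref{thm:real_bwk_std}. You also note that the oracle's constant must hold uniformly over all budgets $B\geq bt/2$, so that it covers every call $\texttt{ALG}(t_i,\bm{b}_i)$; the paper passes over this point without comment, so this is a reasonable addition rather than a different route.
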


\noindent\textbf{3. Thompson Sampling with Fixed Inventory Constraints:} 
Consider dynamic pricing with limited inventory and a finite set of \(K\) admissible prices. We use TS-fixed by \cite{ferreira2018online} as the learning oracle. At each time step, TS-fixed samples demand parameters from their posterior distribution and solves an LP with fixed per-period inventory constraints to obtain a randomized pricing policy. Although we do not establish a strong regret bound for TS-fixed, a sub-linear expected standard regret bound satisfying Condition~\ref{condition:std_regret_bound}, \ie, standard regret upper bound $O(\sqrt{Kt}\operatorname{polylog}(Kt))$, is sufficient to obtain the following guarantees from Theorems~\ref{thm:real_bwk_weak} and \ref{thm:real_bwk_std}.
\begin{corollary}
By using TS-fixed as the learning oracle, if \(Z^*(T,B)>S\), the expected satisficing regret of \satexwk~is bounded by
$$
\regret_S \leq \frac{K^{3/2}}{(\Delta_S^*)^2} \cdot \operatorname{polylog}\!\left(\frac{K}{\Delta_S^*}\right).
$$
If \(Z^*(T,B)<S\), then its expected standard regret is bounded by $\regret \leq \sqrt{KT}\cdot\operatorname{polylog}(KT)$.
\end{corollary}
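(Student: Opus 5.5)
This corollary is an instantiation of Theorems~\ref{thm:real_bwk_weak} and \ref{thm:real_bwk_std}, so the plan is to verify Condition~\ref{condition:std_regret_bound} for TS-fixed with explicit constants and then substitute. The known Bayesian/expected regret analysis of TS-fixed in \cite{ferreira2018online} gives, for horizon $t$ and fixed per-period inventory $B/t\geq b/2$, an expected regret against the LP benchmark of order $O(\sqrt{Kt\log K}\cdot\operatorname{polylog}(t))$. Two points need checking.
\begin{itemize}
\item The benchmark in that paper is the same per-period LP $Z^*(t,B)$ used in \eqref{eq:standard_regret}.
\item The bound holds uniformly over budgets satisfying $B\geq bt/2$, since the oracle is invoked in each round with $b_i=B' t_i/T\geq bt_i/2$.
\end{itemize}
If the original bound is Bayesian, I would either assume the prior-free frequentist version or note that the paper's guarantee is used in expectation. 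Then Condition~\ref{condition:std_regret_bound} holds with $\alpha=1/2$, a polylog exponent $\beta$, and $C_1=O(\sqrt{K}\,\operatorname{polylog}(K))$. The $\log K$ factors are absorbed into $C_1$, and if necessary $\log(Kt)$ is split as $\log K+\log t$.

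For the realizable case I substitute $\alpha=1/2$ into \eqref{eq:sat_bwk_weak}. This gives $(2-\alpha)/(1-\alpha)=3$ and $1/(1-\alpha)=2$, so the bound reads $C_1^{3}(1/\Delta_S^*)^2\,\operatorname{polylog}(C_2 d/\Delta_S^*)$ with $d=1$. Here $C_1^3=K^{3/2}\operatorname{polylog}(K)$, which yields $K^{3/2}(\Delta_S^*)^{-2}\operatorname{polylog}(K/\Delta_S^*)$ once the polylog factors are merged. The constant $C_2$ depends only on $b$ and $\alpha$, so it is absorbed.

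For the non-realizable case, Theorem~\ref{thm:real_bwk_std} gives $C_1T^{1/2}\operatorname{polylog}(T)$, which equals $\sqrt{KT}\operatorname{polylog}(KT)$.

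The main obstacle is the first step. The imported TS-fixed regret bound must be stated against exactly $tZ^*(t,B)$, hold for every sub-horizon budget in the admissible range, and have its $K$-dependence placed entirely in $C_1$ rather than in the exponent structure. I would spend the effort re-reading the TS-fixed proof to confirm this uniformity.
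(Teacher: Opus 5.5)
Your proposal follows the paper's route: the paper takes the $O(\sqrt{Kt}\operatorname{polylog}(Kt))$ expected standard regret of TS-fixed as given, reads off Condition~\ref{condition:std_regret_bound} with $\alpha=1/2$ and $C_1=\tilde O(\sqrt K)$, and substitutes into Theorems~\ref{thm:real_bwk_weak} and~\ref{thm:real_bwk_std} to get $C_1^3(\Delta_S^*)^{-2}$ and $C_1\sqrt T$ up to polylogs. The paper does not verify that bound further, so your caveat about its Bayesian form applies to the paper too; one minor slip is that the constant inside the polylog of Theorem~\ref{thm:real_bwk_weak} (the appendix's $D_{\alpha,\beta}$) contains $C_1$ and $d$, not only $b$ and $\alpha$, which is why the argument becomes $K/\Delta_S^*$, but your final bound is unaffected.
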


\section{Numerical Experiments}\label{sec:synthetic_experiment}
In this section, we conduct numerical experiments to test the performance of \satexwk~on dynamic pricing with limited supply. All experiments are run on a PC with 4-core CPU and 16GB of RAM.

We use the setting given by \cite{ferreira2018online}. There are $T$ buyers, equivalent to time budget, arriving one by one and making a decision whether to buy an item after comparing their private valuation to observed price. There are totally $B$ items, which is equivalent to resource budget with $d=1$. For each price $p\in[0,10]$, the expected selling rate $c(p)\in[0,1]$, which is equivalent to the expected resource consumption, is realized to be $0$ or $1$ based on buyer's decision.  

There are two cases where $Z^*(T,\textbf{B})$ is reached by exhausting resource or time budget \citep{besbes2009dynamic, wang2014close}. For the former one, the best price is attained at the boundary of the constraints, \emph{i.e.}, to set a price that sells all items before the end of the time horizon and apply $p_\infty$ (the null price) till the end. For the latter one, the best price is attained strictly within the feasible region, \emph{i.e.}, to set the price with highest $pc(p)$ through the time horizon and no resource will be exhausted.

We consider an instance of $c(p)=1-0.1p$ with available price set $\{1,2,\cdots,9,10 \}$, then highest per-buyer (per-timestep) reward is attained as $2.5$ for $p=5$. The budget $B$ increases linearly in $T$, and we control the value of $B/T$ to simulate for different cases.

\subsection{Exhausting Time Budget}
\textbf{Setup:} In this case, we set $B/T=0.6$, then the optimal reward $Z^*(T,\bm{B})=2.5$ is reached by price $p^*=\arg\max pc(p)$ that exhausts time budget instead of resource budget with high probability. The length of the time horizon is set to be $500$ to $6000$ with a stepsize of $500$. We conduct experiments for both the realizable case and the non-realizable case. For the realizable case, we set the satisficing level $S=1$; for the non-realizable case, we set the satisficing level $S=5$. In both cases, we compare \satexwk~against TS-fixed in \cite{ferreira2018online}. The experiment is repeated for 1000 times and we report the average results.

\textbf{Results:} The results of the realizable case are presented in Figure~\ref{fig:BwK_exhaust_time_sat}, and the results of the non-realizable case is presented in Figure~\ref{fig:BwK_exhaust_time_non}. From Figure~\ref{fig:BwK_exhaust_time_sat}, one can see that the satisficing regret of \satexwk~stabilizes at a constant level after $T$ exceeds 500, implying that \satexwk~attains a constant expected satisficing regret in the realizable case. At the same time, the satisficing regret of TS-fixed keeps increasing with $T$ and fails to be stable at a constant. Comparing \satexwk~and TS-fixed, the satisficing regret incurred by \satexwk~is also smaller than TS-fixed.

As for the non-realizable case, one can see from Figure~\ref{fig:BwK_exhaust_time_non} that \satexwk~achieves better empirical performance than TS-fixed. This is because \satexwk~exploits more aggressively on near-optimal arms during forced sampling while TS-fixed keeps exploring with posterior sampling.

\begin{figure}[!ht]
    \centering
    \subfigure[Satisficing Regret in the Realizable Case]{\includegraphics[height=4.3cm]{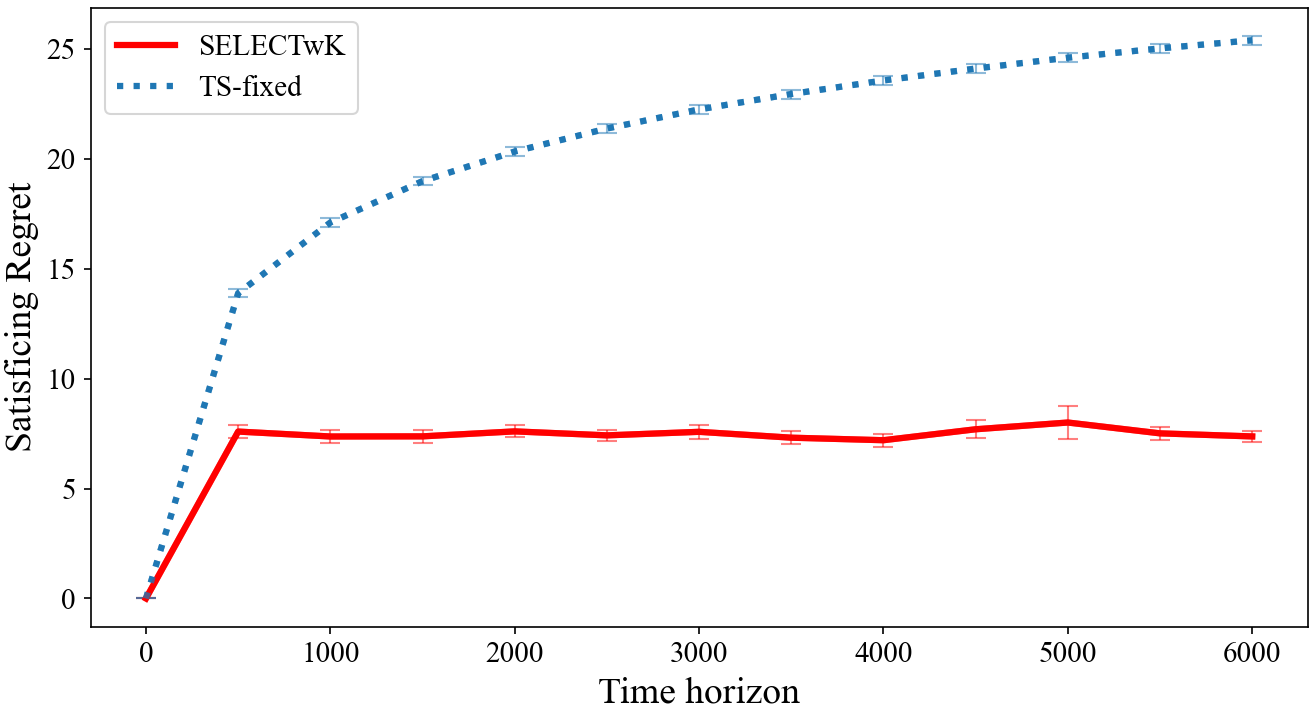}
    \label{fig:BwK_exhaust_time_sat}}
    \subfigure[Standard Regret in the Non-Realizable Case]{\includegraphics[height=4.3cm]{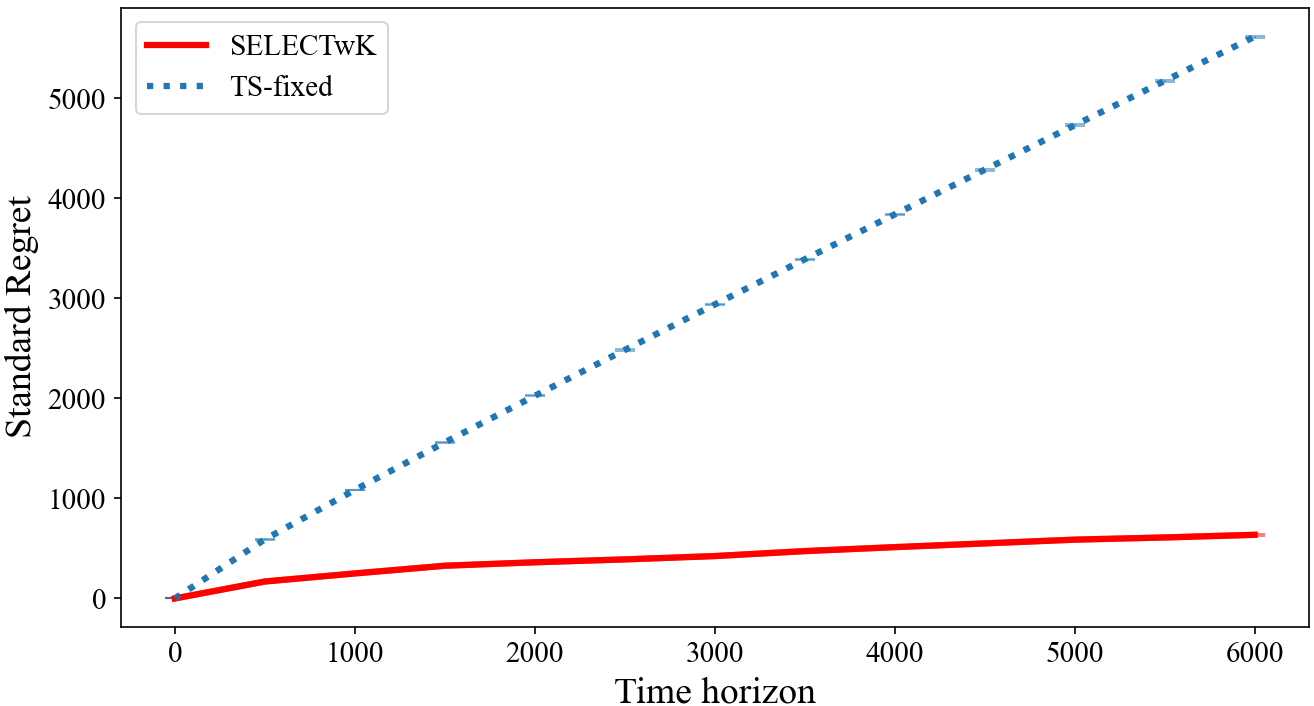}
    \label{fig:BwK_exhaust_time_non}}
    
    \caption{Comparison of \satexwk~and TS-fixed under exhausting time budget case}
\end{figure}

\subsection{Exhausting Resource Budget}
\textbf{Setup:} In this case, we set $B/T=0.4$, then the optimal reward $Z^*(T,\bm{B})=2.4$ is reached by price $p=\arg\min|c(p)-B/T|=6$ which has selling rate $c(p)=0.4$ that will exhaust the resource budget. The length of the time horizon is set to be $500$ to $6000$ with a stepsize of $500$. We conduct experiments for both the realizable case and the non-realizable case. For the realizable case, we set the satisficing level $S=1$; for the non-realizable case, we set the satisficing level $S=5$. In both cases, we compare \satexwk~against TS-fixed in \cite{ferreira2018online}. The experiment is repeated for 1000 times and we report the average results.

\textbf{Results:} The results of the realizable case is presented in Figure~\ref{fig:BwK_exhaust_resource_sat}, and the results of the non-realizable case is presented in Figure~\ref{fig:BwK_exhaust_resource_non}. From Figure~\ref{fig:BwK_exhaust_resource_sat}, one can see that \satexwk~stabilizes at a constant level after $T$ exceeds 1000, implying that \satexwk~attains a constant expected satisficing regret in the realizable case. At the same time, the satisficing regret of TS-fixed keeps increasing with $T$ and fails to be stable at a constant. Comparing \satexwk~and TS-fixed, \satexwk~incurs satisficing regret that grows similarly to that of TS-fixed but then smaller than TS-fixed after being stable.

As for the non-realizable case, one can see from Figure~\ref{fig:BwK_exhaust_resource_non} that \satexwk~achieves better empirical performance than TS-fixed. Similar to the case of $B/T=0.6$, this is because \satexwk~exploits more aggressively on near-optimal arms during forced sampling while TS-fixed keeps exploring with posterior sampling.

\begin{figure}[!ht]
    \centering
    \subfigure[Satisficing Regret in the Realizable Case]{\includegraphics[height=4.3cm]{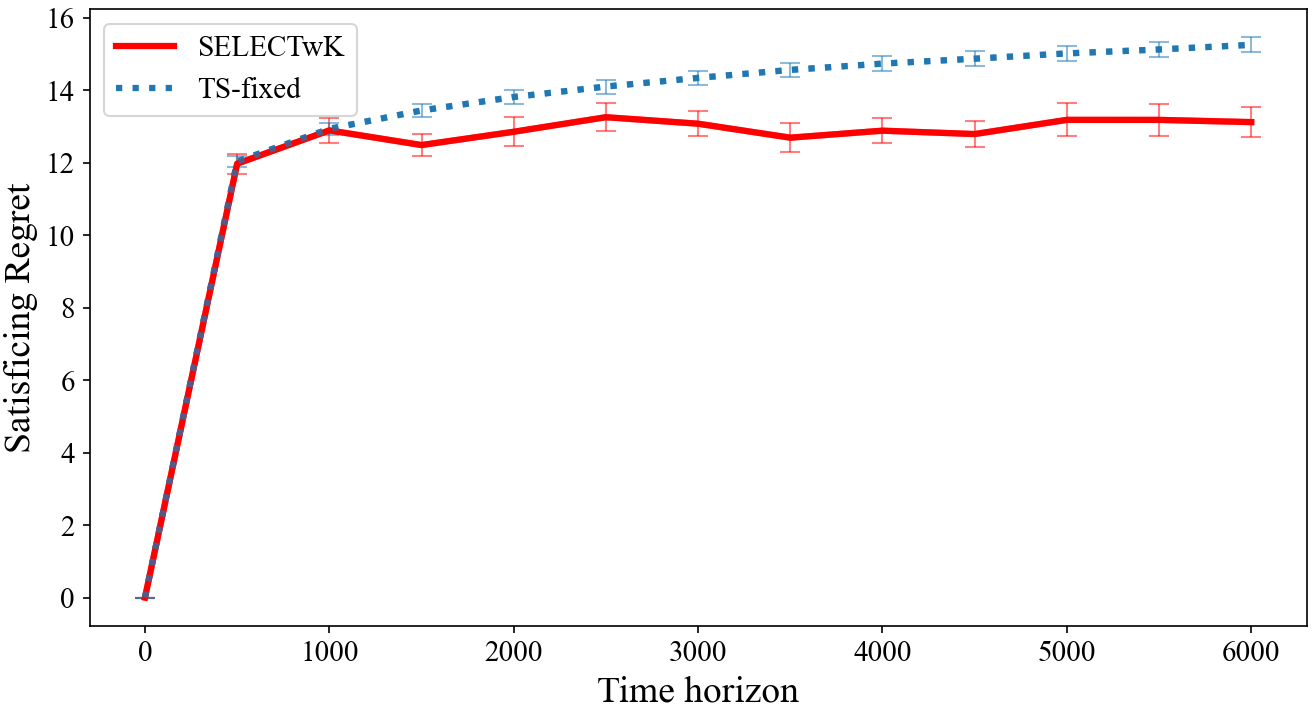}
    \label{fig:BwK_exhaust_resource_sat}}
    \subfigure[Standard Regret in the Non-Realizable Case]{\includegraphics[height=4.3cm]{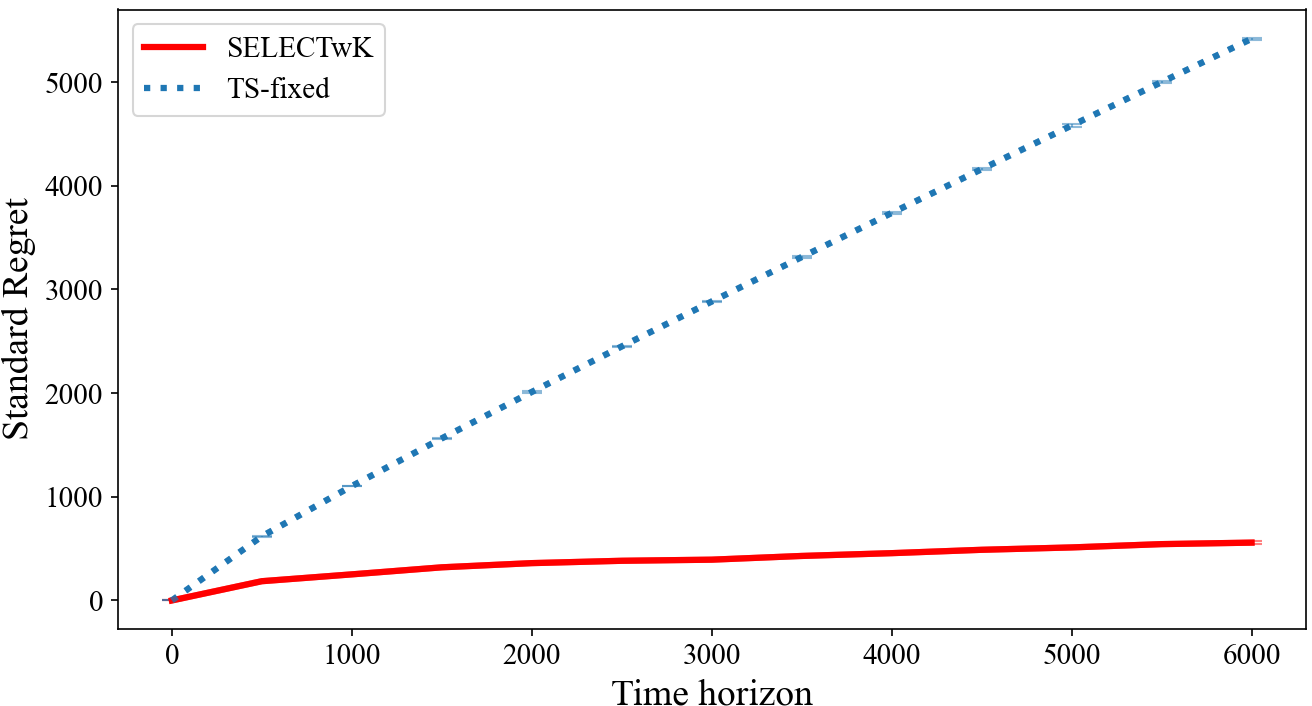}
    \label{fig:BwK_exhaust_resource_non}}
    
    \caption{Comparison of \satexwk~and TS-fixed under exhausting resource budget case}
\end{figure}

\section{Conclusion}
In this paper, we study satisficing regret minimization in bandits with knapsacks and propose \satexwk, a general algorithmic template for maintaining mean rewards above a satisficing threshold while respecting finite resource constraints. 
We show that a bandit-with-knapsack learning oracle with a sub-linear expected standard regret bound is sufficient for \satexwk~to attain constant expected satisficing regret in the realizable case. 
When the threshold is unattainable, \satexwk~preserves the oracle's expected standard regret. 
We further introduce a strong regret condition that controls the expected sum of positive reward deficits at individual time steps relative to the optimal feasible mean reward. 
Under this condition, \satexwk~achieves a sharper dependence on the exceeding gap, together with a satisficing regret bound that is sub-linear up to logarithmic factors in the time horizon.
We instantiate \satexwk~to finite-armed bandits with knapsacks and dynamic pricing with resource constraints.
Finally, numerical experiments on dynamic pricing with limited inventory support our theoretical findings and demonstrate the empirical performance of \satexwk.

Several directions remain for future research. 
First, advances in machine learning, particularly large language models, motivate extending our framework to settings with ML-generated proxy rewards from these models \citep{ji2025multi,ma2026best}, then an important question is how to leverage these proxy rewards to improve the satisficing regret. 
Second, our analysis assumes stationary setting. Future studies could investigate how to deal with non-stationary reward and resource consumption distributions \citep{cheung2022hedging,liu2022non}.

\bibliography{ref}
\bibliographystyle{ormsv080}

\newpage

\begin{appendices}{\Large \noindent\textbf{Appendix}}

\section{Proof of Theorem~\ref{thm:real_bwk_weak}}\label{sec:thm:real_bwk_weak}
In the Appendix, for notational simplicity we denote $Z^*(T,\bm{B})$ as $Z^*$ and denote $Z^*(T,\bm{B}')$ as $Z'$.

We first analyze the expected satisficing regret of this execution, assuming $\Delta_S^{\prime}=Z^{\prime}-S>0$ and conditioned on $\mathcal{E}$. If $T$ is sufficiently large with $\sqrt{T/\log(2dT^2)}\geq20\alpha/(b(1-\alpha))$, term $11\sqrt{2}T/(\min_jB_j')$ can be upper bounded by
\begin{equation}
\begin{aligned}
\frac{11\sqrt{2}T}{\min_jB_j'} &= 11\sqrt{2}\cdot\left(\min_j \frac{B_j}{T} - \frac{10\alpha}{1-\alpha}\sqrt{\frac{\log(2dT^2)}{T}}\right)^{-1}\\
&\leq11\sqrt{2}\cdot\left(b - b/2 \right)^{-1} = \frac{22\sqrt{2}}{b}.
\label{eq:budgetj_bound}
\end{aligned}
\end{equation}

Denote $\beta'=\max\{\beta,1/2\}$ and $K_{\alpha, \beta} = 64\left(\frac{2C_1+2}{(1-\alpha)^{\max\{\beta,1/2\}}} + \frac{22\sqrt{2}}{b} \right) \geq 64\left( \frac{2C_1+2}{(1-\alpha)^{\max\{\beta,1/2\}}} + \frac{11\sqrt{2}T}{\min_jB_j'} \right)$. Round $i_0$ can be expressed as 
$$
i_0=\max\left\{i:64\left(\frac{2C_1+2}{(1-\alpha)^{\max\{\beta,1/2\}}} + \frac{11\sqrt{2}T}{\min_jB_j'} \right)\gamma_i\log(4d/\gamma_i)^{\beta'}>\Delta_S'\right\}.
$$
Next, we prove an upper bound on $i_0$ . Denote $\delta_0\in(0,1)$ the smallest solution to the following
equation
\begin{equation}
\delta_0 \log(4d/\delta_0)^{\beta'} = \frac{\Delta_S'}{64\left(\frac{2C_1+2}{(1-\alpha)^{\max\{\beta,1/2\}}} + \frac{11\sqrt{2}T}{\min_jB_j'} \right)}.
\label{eq:delta_Delta_equation}
\end{equation}
Note that $\delta_0\log(4d/\delta_0)^{\beta'}$ is continuous on $(0,1)$. When $\delta_0\to0$, $\delta_0 \log(4d/\delta_0)^{\beta'} \to 0$, and its maximum is $\max_{\delta_0}\delta_0\log(4d/\delta_0)^{\beta'} \geq \delta_0\log(2/\delta_0)^{\beta'}\big|_{\delta_0=2/e} > 1/2$. On the other hand, since $64\left(\frac{2C_1+2}{(1-\alpha)^{\max\{\beta,1/2\}}} + \frac{11\sqrt{2}T}{\min_jB_j'}\right)\geq64\cdot2=128$ with $1/2\leq\alpha<1$, the right hand side of \eqref{eq:delta_Delta_equation} is upper bounded by $\Delta_S^*/128\leq1/128$ with $\Delta_S^*\leq1$. Therefore there always exists $\delta_0\in(0,1)$ such that \eqref{eq:delta_Delta_equation} holds. Based on \eqref{eq:delta_Delta_equation}, we also have
\begin{equation*}
\begin{aligned}
\frac{d}{\delta_0} &= \frac{d\cdot64\left(\frac{2C_1+2}{(1-\alpha)^{\max\{\beta,1/2\}}} + \frac{11\sqrt{2}T}{\min_jB_j'} \right) \log(4d/\delta_0)^{\beta'} }{\Delta_S'}\\
&\leq \frac{dK_{\alpha,\beta} \log(4d/\delta_0)^{\beta'} }{\Delta_S'}\\
&= \frac{(4\beta')^{\beta'}dK_{\alpha,\beta}\log((4d/\delta_0)^{1/(4\beta')})^{\beta'}}{\Delta_S'}\\
&\leq \frac{(4\beta')^{\beta'}dK_{\alpha,\beta}(4d/\delta_0)^{1/4}}{\Delta_S'} \\
&= \frac{\sqrt{2}d(4\beta')^{\beta'}K_{\alpha,\beta}}{\Delta_S'} \cdot \left(\frac{d}{\delta_0}\right)^{1/4},
\end{aligned}
\end{equation*}
where the first inequality is based on the definition of $K_{\alpha,\beta}$ and \eqref{eq:budgetj_bound}, the second inequality because $\log(x)<x$ for all $x>0$. Therefore, we get
\begin{equation*}
\frac{d}{\delta_0}\leq\left(\frac{\sqrt{2}d(4\beta')^{\beta'}K_{\alpha,\beta}}{\Delta_S'}\right)^{4/3},
\end{equation*}
namely,
\begin{equation}
\frac{4d}{\delta_0} \leq 4\left(\frac{\sqrt{2}d(4\beta')^{\beta'}K_{\alpha,\beta}}{\Delta_S'}\right)^{4/3} \leq \left(\frac{4\sqrt{2}d(4\beta')^{\beta'}K_{\alpha,\beta}}{\Delta_S'}\right)^{4/3}.
\label{eq:4d/delta0_upperbound}
\end{equation}
Plugging \eqref{eq:4d/delta0_upperbound} into the $\log(4d/\delta_0)$ term in \eqref{eq:delta_Delta_equation}, we have
$$
\frac{d}{\delta_0} \leq \frac{dK_{\alpha,\beta}}{\Delta_S'}\cdot (4/3)^{\beta'}\log\left(\frac{4\sqrt{2}d(4\beta')^{\beta'}K_{\alpha,\beta}}{\Delta_S'} \right)^{\beta'}\\
$$
We write $D_{\alpha,\beta}=4\sqrt{2}d(4\beta')^{\beta'}K_{\alpha,\beta}$, $L_{\alpha,\beta}=(4/3)^{\beta'}K_{\alpha,\beta}$, then we have
$$
\delta_0\geq\frac{\Delta_S'}{L_{\alpha,\beta}\log(D_{\alpha,\beta}/\Delta_S')^{\beta'}}.
$$
By definition of $i_0$ we have that $i_0$ is the largest integer that satisfies $\gamma_{i_0}\geq \delta_0$. Therefore we get
$$
\gamma_{i_0}\geq\frac{\Delta_S'}{L_{\alpha,\beta}\log(D_{\alpha,\beta}/\Delta_S')^{\beta'}},
$$
thus
$$
i_0 \leq \frac{\alpha}{1-\alpha}\log_2\left(\frac{L_{\alpha,\beta}\log(D_{\alpha,\beta}/\Delta_S')^{\beta'}}{\Delta_S'}\right),
$$
or equivalently
\begin{equation}
2^{i_0} \leq \left(\frac{L_{\alpha,\beta}\log(D_{\alpha,\beta}/\Delta_S')^{\beta'}}{\Delta_S'}\right)^{\alpha/(1-\alpha)}.
\label{eq:2^i0_upperbound}
\end{equation}

Using the result of Proposition~\ref{prop:round_Sat_regret_weak}, we have that the satisficing regret by round $i_0$ is upper bounded by
\begin{equation}
\begin{aligned}
&\sum_{i=1}^{i_0}\left( 3\left(\frac{2C_1+2}{(1-\alpha)^{\max\{\beta,1/2\}}} + \frac{11\sqrt{2}T}{\min_jB_j'} \right) +2 \right)\gamma_i^{-1 / (1 - \alpha)}\log(4d/\gamma_i)^{\max\{\beta,1/2\}}\\
\leq&\sum_{i=1}^{i_0}\left( 3\left(\frac{2C_1+2}{(1-\alpha)^{\max\{\beta,1/2\}}} + \frac{11\sqrt{2}T}{\min_jB_j'} \right) +2 \right)\gamma_i^{-1 / (1 - \alpha)}\log(4d/\delta_0)^{\max\{\beta,1/2\}}\\
\leq&\sum_{i=1}^{i_0}\left(\frac{3}{64}K_{\alpha,\beta}+2\right)2^{i/\alpha}\log(4d/\delta_0)^{\beta'} \\
\leq& \left(\frac{3}{32}K_{\alpha,\beta} + 4\right)2^{i_0/\alpha}\log(4d/\delta_0)^{\beta'}\\
\leq& \left(\frac{4}{3}\right)^{\beta'}\left(\frac{3}{32}K_{\alpha,\beta} + 4\right) \left( \frac{L_{\alpha, \beta} \log(D_{\alpha, \beta}  / \Delta_S')^{\beta'}}{\Delta_S'} \right)^{1/(1-\alpha)}\log\left(\frac{D_{\alpha,\beta}}{\Delta_S'}\right)^{\beta'}\\
=& \left(\frac{4}{3}\right)^{\beta'}\left(\frac{3}{32}K_{\alpha,\beta} + 4\right) \left( \frac{L_{\alpha, \beta} }{\Delta_S'} \right)^{1/(1-\alpha)}\log\left(\frac{D_{\alpha,\beta}}{\Delta_S'}\right)^{\beta'/(1-\alpha)+\beta'}
\label{eq:sat_regret-before-i0-weak}
\end{aligned}
\end{equation}
where the first inequality is because by definition of $\delta_0$ and $i_0$, we have $\gamma_i=2^{-i(1-\alpha)/\alpha}\geq \gamma_{i_0}$ for all $i\leq i_0$, the second inequality is by plugging into the definition of $\gamma_i=2^{-i(1-\alpha)/\alpha}$ and definition of $K_{\alpha,\beta}$, the third inequality is based on $\sum_{i=1}^{i_0}2^{i/\alpha} \leq 2\cdot2^{i_0/\alpha}$, and the fourth inequality is by plugging into \eqref{eq:4d/delta0_upperbound} and \eqref{eq:2^i0_upperbound}.

By the results of Proposition~\ref{prop:transit_Sat_prob_weak}, for every $k \geq 1$, the probability that round $i_0 + k$ starts within the time horizon is at most $8^{-k+1}$. Therefore the regret incurred after round $i_0$ is upper bounded by
\begin{equation}
\begin{aligned}
&\sum_{k=1}^{\infty}\left( 3\left(\frac{2C_1+2}{(1-\alpha)^{\beta'}} + \frac{11\sqrt{2}T}{\min_jB_j'} \right) +2 \right) \gamma_{i_0+k}^{-1 / (1 - \alpha)}\log(4d/\gamma_{i_0+k})^{\beta'}\cdot8^{-k+1}\\
\leq& \sum_{k=1}^{\infty} \left(\frac{3}{64}K_{\alpha,\beta}+2\right)2^{(i_0+k)/\alpha}\left(\log(4d/\gamma_{i_0}) + k\right)^{\beta'}\cdot8^{-k+1} \\
\leq& \sum_{k=1}^{\infty} 2^{\beta'} \left(\frac{3}{64}K_{\alpha,\beta}+2\right)2^{(i_0+k)/\alpha}\log(4d/\gamma_{i_0})^{\beta'} \cdot k^{\beta'}\cdot8^{-k+1} \\
\leq& 2^{\beta'}\left(\frac{3}{8}K_{\alpha,\beta}+16\right) 2^{i_0/\alpha}\log(4d/\delta_0)^{\beta'}\cdot M_{\beta'}\\
\leq& (8/3)^{\beta'}\left(\frac{3}{8}K_{\alpha,\beta}+16\right) \left( \frac{L_{\alpha, \beta} \log(D_{\alpha, \beta}  / \Delta_S')^{\beta'}}{\Delta_S'} \right)^{1/(1-\alpha)}\log\left(\frac{D_{\alpha,\beta}}{\Delta_S'}\right)^{\beta'}M_{\beta'}\\
=& (8/3)^{\beta'}\left(\frac{3}{8}K_{\alpha,\beta}+16\right) \left( \frac{L_{\alpha, \beta}}{\Delta_S'} \right)^{1/(1-\alpha)} \log\left(\frac{D_{\alpha,\beta}}{\Delta_S'}\right)^{\beta'/(1-\alpha)+\beta'} M_{\beta'}
\label{eq:sat_regret-after-i0-weak}
\end{aligned}
\end{equation}
where $M_{\beta'}=\sum_{k=1}^\infty k^{\beta'}\cdot2^{-k}$. The first inequality is by plugging into the definition of $K_{\alpha,\beta}$ and $\gamma_i=2^{-i(1-\alpha)/\alpha}$, the second inequality is because $\log(4d/\gamma_{i_0})>1$ and $(1-\alpha)/\alpha\leq1$ for $1/2\leq\alpha<1$, and $\log(4d/\gamma_{i_0})+k\leq2k\log(4d/\gamma_{i_0})$ for $k\geq1$, and for any $a,b\geq 1$, we have $2ab\geq ab+1\geq a+b$, the third inequality is based on $\sum_{k=1}^{\infty}2^{k/\alpha}\cdot8^{-k+1} \leq \sum_{k=1}^{\infty}4^{k}\cdot8^{-k+1}=8\sum_{k=1}^{\infty}2^{-k}$. Therefore, combining \eqref{eq:sat_regret-before-i0-weak} and \eqref{eq:sat_regret-after-i0-weak}, the total expected satisficing regret conditioned on $\mathcal{E}$ is upper bounded by
\begin{equation*}
\left(\frac{15\cdot(8/3)^{\beta'}}{32}K_{\alpha,\beta}+20\cdot(8/3)^{\beta'} \right) \left( \frac{L_{\alpha, \beta} }{\Delta_S'} \right)^{1/(1-\alpha)} \log\left(\frac{D_{\alpha,\beta}}{\Delta_S'}\right)^{\beta'/(1-\alpha)+\beta'} (M_{\beta'}+1).
\end{equation*}

In order to bound $\Delta_S'$, we consider two cases, whether $\Delta_S'\geq\delta(T)$, where $\delta(T)={10\alpha}/{(b-b\alpha)}\cdot\sqrt{{\log(2dT^2)}/{T}}$ is the bound on $(Z^* - Z')$ given in Proposition~\ref{prop:optimal_gap}. 

If $\Delta_S'\geq\delta(T)$, recall $\Delta^*_S=Z^*-S$ and $\Delta_S'=Z'-S$, thus Proposition~\ref{prop:optimal_gap} implies $\Delta^*_S-\Delta'_S\leq\delta(T)$. Then we get
$$
\Delta_S' \geq \frac{\delta(T)}{2} + \frac{\Delta^*_S-\delta(T)}{2} = \frac{\Delta^*_S}{2}.
$$
Combining this with $P(\mathcal{E}^c)\leq1/T$ according to Proposition~\ref{prop:clean_event_bound} and total satisficing regret incurred conditioned on negation of $\mathcal{E}$ is at most $T$, total expected satisficing regret is upper bounded by
\begin{equation}
1 + \left(\frac{15\cdot(8/3)^{\beta'}}{32}K_{\alpha,\beta}+20\cdot(8/3)^{\beta'} \right) \left( \frac{2L_{\alpha,\beta} }{\Delta_S^*} \right)^{1/(1-\alpha)} \log\left(\frac{2D_{\alpha,\beta}}{\Delta_S^*}\right)^{\beta'/(1-\alpha)+\beta'} (M_{\beta'}+1).
\end{equation}

If $\Delta_S'<\delta(T)$, namely $T$ is not large enough, we simply bound the satisficing regret with $T$. As $\Delta^*_S-\Delta'_S\leq\delta(T)$, we have $\Delta_S^*\leq2\delta(T)$, namely 
\begin{equation*}
T\leq\frac{400\alpha^2}{b^2(1-\alpha)^2(\Delta_S^*)^2}\log(2dT^2).
\end{equation*}
Let $q=400\alpha^2b^{-2}(1-\alpha)^{-2}{\Delta_S^*}^{-2}$ and $y=T/(2q)$. So, we have
$$
y \leq \log\left( 2q\sqrt{2d}y \right) = \log\left( 2q\sqrt{2d} \right) + \log(y) \leq  \log\left( 2q\sqrt{2d} \right) + y/2,
$$
where $\log(y)\leq y/2$ is used in the second inequality. This implies $y\leq 2\log\left( 2q\sqrt{2d} \right)$. As $\alpha\geq1/2$ and $\Delta_S^*\leq1$, the total expected satisficing regret is upper bounded by
\begin{equation}
\frac{1600\alpha^2}{b^2(1-\alpha)^2}\cdot\frac{1}{{\Delta_S^*}^2} \log\left(\frac{800\sqrt{2d}\alpha^2}{b^2(1-\alpha)^2(\Delta_S^*)^2}\right) \leq \frac{3200\alpha^2}{b^2(1-\alpha)^2}\cdot\left(\frac{1}{{\Delta_S^*}}\right)^{1/(1-\alpha)} \log\left(\frac{40d^{1/4}\alpha}{b(1-\alpha)\Delta_S^*}\right).
\label{eq:Delta'<delta(T)}
\end{equation}

\subsection{High Probability Event of Oracle Running}
As $\hat{\pi}_i$ is based on the running of $\texttt{ALG}(t_i, \{b_{ji}\})$, we first analyze the high probability events in the trajectory of $\texttt{ALG}(t_i, \{b_{ji}\})$, which will be used in future proof. 

Denoting $C_j(\texttt{ALG}, t_i)$ as the realized cumulative consumption in resource $j$ of executing $\texttt{ALG}(t_i,\{b_{ji}\})$, the high probability event of $\texttt{ALG}(t_i, \{b_{ji}\})$ can be expressed as
\begin{lemma}
    The expected cost of distribution $\hat{\pi}_i$ for resource $j$ in round $i$ can be upper bounded by high probability as 
    \begin{align}
        &\Pr\left(c_j(\hat{\pi}_i)\leq \frac{B_j'}{T} + \sqrt{\frac{\log(2dT^2)}{t_i}} \right)\geq1-\frac{1}{2dT^2},\label{eq:pi_hat_cost_1}\\
        &\Pr\left(\forall k\geq0:\,c_j(\hat{\pi}_i)\leq \frac{B_j'}{T} - 2\sqrt{\frac{4\log(d(T_i+k))}{T_i+k}} \right)\geq1-\frac{1}{32d}.\label{eq:pi_hat_cost_2}
    \end{align}
\label{lemma:high_prob_cost}
\end{lemma}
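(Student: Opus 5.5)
The plan is to control the gap between the oracle's cumulative mean consumption and its realized consumption with a martingale concentration bound. The null-arm mixture then converts that gap into the two stated inequalities. Let $\bar\pi_i=\frac{1}{t_i}\sum_{s=1}^{t_i}\tilde\pi_s$, so that $c_j(\hat\pi_i)=(1-w_i)c_j(\bar\pi_i)$. Since $\texttt{ALG}(t_i,\bm b_i)$ is resource-feasible, its realized consumption satisfies $C_j(\texttt{ALG},t_i)\le b_{ji}=B_j't_i/T$. (After depletion only the null arm is pulled, which only lowers $\sum_s c_j(\tilde\pi_s)$, so this is consistent with how the trajectory is defined.) The sequence
\[
D_s=\sum_{u\le s}\bigl(c_j(\tilde\pi_u)-C_u^j\bigr)
\]
is a martingale with increments bounded in $[-1,1]$, because $\tilde\pi_u$ is chosen before $X_u$ and $C_u^j$ are drawn. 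Azuma--Hoeffding gives
\[
\Pr\bigl(D_{t_i}\ge \sqrt{t_i\log(2dT^2)}\bigr)\le \frac{1}{2dT^2}
\]
(up to the constant in the exponent, which I would absorb by using $\sqrt{2t_i\log(\cdot)}$ or a one-sided bound with increment range $1$). On the complement,
\[
t_ic_j(\bar\pi_i)\le C_j(\texttt{ALG},t_i)+\sqrt{t_i\log(2dT^2)}\le \frac{t_iB_j'}{T}+\sqrt{t_i\log(2dT^2)}.
\]
Dividing by $t_i$ and using $c_j(\hat\pi_i)\le c_j(\bar\pi_i)$ yields \eqref{eq:pi_hat_cost_1}.

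For \eqref{eq:pi_hat_cost_2} I would repeat the same argument with a failure probability $1/(32d)$ in place of $1/(2dT^2)$. This gives, with probability at least $1-1/(32d)$,
\[
c_j(\bar\pi_i)\le \frac{B_j'}{T}+\sqrt{\frac{\log(32d)}{2t_i}}.
\]
Next I would use the mixture. If $w_i=1$, then $c_j(\hat\pi_i)=0$ and there is nothing to prove beyond noting that the right-hand side is nonnegative. Here I would need $B_j'/T\ge 2\sqrt{4\log(dT_i)/T_i}$, or simply accept that the case $w_i=1$ corresponds to small $T_i$, where the statement must be handled by this trivial bound; I expect some care here.

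Otherwise,
\[
c_j(\hat\pi_i)\le \frac{B_j'}{T}-w_i\frac{B_j'}{T}+\sqrt{\frac{\log(32d)}{2t_i}}.
\]
Using $w_iB_j'/T\ge 10\sqrt{\log(8dT_i)/T_i}$ (from $B_j'\ge\min_jB_j'$), it suffices to show that
\[
10\sqrt{\frac{\log(8dT_i)}{T_i}}-\sqrt{\frac{\log(32d)}{2t_i}}\ \ge\ 4\sqrt{\frac{\log(dT_i)}{T_i}}\ \ge\ \sup_{k\ge0}2\sqrt{\frac{4\log(d(T_i+k))}{T_i+k}}.
\]
The last inequality holds because $\log(dx)/x$ is decreasing in $x$ once $x\ge e/d$, so the supremum over $k$ is attained at $k=0$. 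This is what makes the event uniform over $k\ge0$ essentially free: the right-hand side of \eqref{eq:pi_hat_cost_2} is smallest at $k=0$, so the event reduces to a single bound.

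The first inequality needs $t_i\ge T_i$ up to constants. Since $t_i=\lceil\gamma_i^{-1/(1-\alpha)}\rceil$ and $T_i=\lceil\gamma_i^{-2}\rceil$ with $1/(1-\alpha)\ge2$, we have $t_i\gtrsim T_i$, hence
\[
\sqrt{\frac{\log(32d)}{2t_i}}\le \sqrt{\frac{\log(8dT_i)}{T_i}}\quad\text{roughly},
\]
leaving the margin $9\sqrt{\log(8dT_i)/T_i}\ge 4\sqrt{\log(dT_i)/T_i}$.

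The main obstacle I expect is bookkeeping in the degenerate regime. This includes the ceiling effects relating $t_i$ and $T_i$, the monotonicity of $\log(dx)/x$ for small $d$ and $x$, and the case $w_i=1$. If these become delicate, I would verify them directly for small $T_i$ and otherwise rely on the generous constant $10$ in $w_i$.
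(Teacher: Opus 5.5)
Your argument is the paper's own. Both use Azuma--Hoeffding on $\sum_s\bigl(c_j(\tilde\pi_s)-C_s^j\bigr)$ with increments in an interval of length one, then the oracle's feasibility $C_j(\texttt{ALG},t_i)\le b_{ji}$, then the null-arm margin $w_iB_j'/T\ge 10\sqrt{\log(8dT_i)/T_i}$ together with $t_i\ge T_i$. The last inequality holds exactly, with no constants lost, because $1/(1-\alpha)\ge 2$ and $\gamma_i<1$. For $w_i<1$ your bookkeeping goes through with room to spare. One intermediate claim is slightly off: $\log(dx)/x$ is not maximized at $x=T_i$ when $d=1$ and $T_i=2$. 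Your $9$-versus-$4$ margin absorbs this, just as the paper's factor-$4$ slack does.

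The case $w_i=1$ that you deferred is a genuine gap, and it cannot be closed by checking small $T_i$ directly, because \eqref{eq:pi_hat_cost_2} is false there. Since $c_j(\hat\pi_i)\ge 0$, the event at $k=0$ forces the deterministic inequality $B_j'/T\ge 4\sqrt{\log(dT_i)/T_i}$. Take $d=1$, $\alpha=1/2$, $i=1$: then $T_1=4$, and the right side is about $2.35$, which exceeds $1\ge B_j'/T$ in any instance with $B_j\le T$. In that instance $w_1=1$, and the probability in \eqref{eq:pi_hat_cost_2} is $0$.

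The paper's proof has the same silent restriction: it plugs in $w_i=\frac{10T}{\min_jB_j'}\sqrt{\log(8dT_i)/T_i}$ ``when $w_i<1$'' and never treats $w_i=1$. The correct repair is to state \eqref{eq:pi_hat_cost_2} only for rounds with $w_i<1$. This costs nothing downstream, because the event is used only in Propositions~\ref{prop:transit_Sat_prob_weak} and~\ref{prop:transit_Sat_prob}. Their hypotheses, together with $\Delta_S'\le 1$, give $\frac{T}{\min_jB_j'}\gamma_i\log(4d/\gamma_i)^{\max\{\beta,1/2\}}\le 1/(640\sqrt2)$. Combined with $\sqrt{\log(8dT_i)/T_i}\le\sqrt2\,\gamma_i\log(4d/\gamma_i)^{1/2}$, this makes the uncapped $w_i$ at most $1/64<1$ in every round where the event is actually needed.
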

\begin{proof}

We note that the difference between realized resource consumption $C_j(\texttt{ALG},k)$ and expected consumption of the realized trajectory $\sum_{s=1}^kc_j(\pi_s)$ is a martingale because
\begin{equation*}
\begin{aligned}
&\E\left[C_j(\texttt{ALG},k+1) - \sum_{s=1}^{k+1}c_j(\pi_s) \,\bigg|\, C_j(\texttt{ALG},k) - \sum_{s=1}^kc_j(\pi_s) \right]\\
= &C_j(\texttt{ALG},k) - \sum_{s=1}^kc_j(\pi_s) + \E\left[(C_j(\texttt{ALG},k+1)-C_j(\texttt{ALG},k)) - c_j(\pi_{k+1})\right]\\
= &C_j(\texttt{ALG},k) - \sum_{s=1}^kc_j(\pi_s)
\end{aligned}
\end{equation*}
where the final equality is because the realized consumption in resource $j$ at step $k+1$ is $(C_j(\texttt{ALG},k+1)-C_j(\texttt{ALG},k))$ with expectation $c_j(\pi_{k+1})$. Conditioned on the preceding history, each martingale increment belongs to an interval of length $1$, because the cost belongs to $[0,1]$. The Hoeffding--Azuma inequality therefore gives
\begin{equation}
\Pr\left(C_j(\texttt{ALG},k) - \sum_{s=1}^kc_j(\pi_s)<-\epsilon\right)\leq\exp(-\frac{2\epsilon^2}{k}).
\label{eq:azuma-cumulative_cost}
\end{equation} 

In order to analyze the high probability events for $c_j(\hat{\pi}_i)$, we bound the high probability events for realized resource consumption, with confidence interval $\sqrt{t_i\log(2dT^2)}$ and $\sqrt{2t_i\log(32d)}$ respectively. 

For the high probability event with radius $\sqrt{t_i\log(2dT^2)}$, the difference between realized resource consumption and the expected consumption of the trajectory can be upper bounded by Azuma's Inequality \eqref{eq:azuma-cumulative_cost} as
\begin{equation}
    \Pr\left(\frac{C_j(\texttt{ALG},t_i)}{t_i} < \frac{1}{t_i}\sum_{s=1}^{t_i}c_j(\pi_s)-\sqrt{\frac{\log(2dT^2)}{t_i}}\right)\leq \frac{1}{2dT^2}.
\label{eq:high_prob-consumption-1}
\end{equation}
Recall $\hat{\pi}_i$ is derived from the trajectory of running $\texttt{ALG}(t_i,\{b_{ji}\})$, then $C_j(\texttt{ALG},t_i) - \sum_{s=1}^{t_i}c_j(\pi_s)>-\sqrt{t_i\log(2dT^2)}$ with $0\leq w_i\leq1$ implies that with any nonnegative $w_i$, we have that
\begin{equation*}
\begin{aligned}
&c_j(\hat{\pi}_i)=\frac{1-w_i}{t_i}\sum_{s=1}^{t_i}c_j(\pi_s)\\
\leq& \frac{C_j(\texttt{ALG},t_i)}{t_i} + \sqrt{\frac{\log(2dT^2)}{t_i}}\\
\leq& \frac{B_j'}{T} + \sqrt{\frac{\log(2dT^2)}{t_i}},
\end{aligned}
\end{equation*}
happens with probability higher than $1-1/(2dT^2)$, where the first inequality is due to $w_i\geq0$ and the complement of \eqref{eq:high_prob-consumption-1}, and the second inequality is because realized cost $C_j(\texttt{ALG},t_i)$ is constrained by allocated resource budget $b_{ji}=B'_jt_i/T$, and then high probability event \eqref{eq:pi_hat_cost_1} is proved. 

For the high probability event with radius $\sqrt{2t_i\log(32d)}$, the difference between realized resource consumption and the expected consumption of the trajectory can be upper bounded by Azuma's Inequality \eqref{eq:azuma-cumulative_cost} as
\begin{equation}
    \Pr\left(\frac{C_j(\texttt{ALG},t_i)}{t_i} < 
    \frac{1}{t_i}\sum_{s=1}^{t_i}c_j(\pi_s)-\sqrt{\frac{2\log(32d)}{t_i}}\right)\leq \frac{1}{32d}.
\label{eq:high_prob-consumption-2}
\end{equation}
Recall $\hat{\pi}_i$ is derived from the trajectory of running $\texttt{ALG}(t_i,\{b_{ji}\})$, then $C_j(\texttt{ALG},t_i) - \sum_{s=1}^{t_i}c_j(\pi_s) > -\sqrt{2t_i\log(32d)}$ implies that
\begin{equation*}
\begin{aligned}
    & c_j(\hat{\pi}_i)=\frac{1-w_i}{t_i}\sum_{s=1}^{t_i}c_j(\pi_s)\\
    \leq & (1-w_i)\cdot\left(\frac{C_j(\texttt{ALG},t_i)}{t_i}+\sqrt{\frac{2\log(32d)}{t_i}}\right)\\
    \leq & \frac{B_j'}{T}-w_i\frac{B_j'}{T} + \sqrt{\frac{2\log(32d)}{t_i}} \\
    \leq & \frac{B_j'}{T}-2\sqrt{\frac{4\log(d(T_i+k))}{T_i+k}},\quad\forall k\geq0,
\end{aligned}
\end{equation*}
happens with probability higher than $1-1/(32d)$, where the first inequality is due to the complement of \eqref{eq:high_prob-consumption-2}, and the second inequality is because realized cost $C_j(\texttt{ALG},t_i)$ is constrained by allocated resource budget $b_{ji}=B'_jt_i/T$, \ie, $C_j(\texttt{ALG},t_i)/t_i\leq B_j'/T$, and $w_i>0$. The third inequality is based on $2\leq T_i\leq t_i$ and the definition of $w_i=10T/\min_jB_j'\,\cdot\sqrt{\log(8dT_i)/T_i}$ when $w_i<1$ where
\begin{equation*}
\begin{aligned}
&2\sqrt{\frac{4\log(d(T_i+k))}{T_i+k}}+\sqrt{\frac{2\log(32d)}{t_i}}\\
\leq&8\sqrt{\frac{\log(dT_i)}{T_i}}+2\sqrt{\frac{\log(8d)}{t_i}}\\
\leq&8\sqrt{\frac{\log(8dT_i)}{T_i}} + 2\sqrt{\frac{\log(8dT_i)}{T_i}}\\
=& \frac{\min_jB_j'}{T}\cdot w_i, \quad\forall k\geq0.
\end{aligned}
\end{equation*}
The first inequality above is because $\log(x)/x$ is decreasing for $x\geq3$ and $2\log(2)/2\geq\log(3)/3$, and the second inequality is because $T_i\geq1$ and $t_i=\lceil1/\gamma_i^{1/(1-\alpha)} \rceil\geq\lceil1/\gamma_i^2\rceil=T_i$ for $\gamma_i<1$ and $1/2\leq\alpha<1$. Then the high probability event \eqref{eq:pi_hat_cost_2} is proved. 
\end{proof}

For notation simplicity in future proof, we denote
\begin{equation*}
\begin{aligned}
    \mathcal{C}_{ji}^1&:=\left\{c_j(\hat{\pi}_i)\leq \frac{B_j'}{T} + \sqrt{\frac{\log(2dT^2)}{t_i}}\right\}\\
    \mathcal{C}_{ji}^2&:=\left\{\forall k\geq0:\,c_j(\hat{\pi}_i)\leq \frac{B_j'}{T} - 2\sqrt{\frac{4\log(d(T_i+k))}{T_i+k}}\right\}.
\end{aligned}
\end{equation*}
Therefore Lemma~\ref{lemma:high_prob_cost} implies $\Pr\left(\mathcal{C}_{ji}^{1c}\right) \leq 1/(2dT^2)$ and $\Pr\left(\mathcal{C}_{ji}^{2c}\right)\leq 1/(32d)$.

The expected regret of the candidate policy is bounded as follows.
\begin{lemma}
    Under Condition~\ref{condition:std_regret_bound}, the expected regret of a single pull on distribution $\hat{\pi}_i$ in round $i$ can be upper bounded by
    \begin{equation*}
        \E\left[\max\{Z'-r(\hat{\pi}_i), 0\}\right]\leq \left( \frac{2C_1+2}{(1-\alpha)^{\max\{\beta,1/2\}}} + \frac{11\sqrt{2}T}{\min_jB_j'} \right) \gamma_i\log\left(\frac{4d}{\gamma_i} \right)^{\max\{\beta,1/2\}}.
    \end{equation*}
\label{lemma:high_prob_reward_weak}
\end{lemma}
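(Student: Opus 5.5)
We bound $\E[\max\{Z'-r(\hat{\pi}_i),0\}]$ by splitting the deficit of $\hat{\pi}_i$ into the loss from averaging the oracle trajectory and the loss from mixing in the null arm. By construction $r(\hat{\pi}_i)=(1-w_i)\bar r_i$, where $\bar r_i=\frac{1}{t_i}\sum_{s=1}^{t_i}r(\tilde\pi_s)$. Since $Z'\le 1$, we have
\begin{equation*}
Z'-r(\hat{\pi}_i)=(1-w_i)(Z'-\bar r_i)+w_iZ'\le (1-w_i)(Z'-\bar r_i)+w_i .
\end{equation*}
It follows that $\max\{Z'-r(\hat{\pi}_i),0\}\le \max\{Z'-\bar r_i,0\}+w_i$. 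We then bound the two terms separately.

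For the averaging term, we apply the second inequality of Lemma~\ref{lemma:positive_standard_regret} and divide by $t_i$. This gives
\begin{equation*}
\E[\max\{Z'-\bar r_i,0\}]\le C_1t_i^{\alpha-1}\log^\beta(t_i)+\frac{T}{\min_jB_j'}t_i^{-1/2}.
\end{equation*}
Here $Z^*(t_i,\bm b_i)=Z^*(T,\bm B')=Z'$ because $\bm b_i/t_i=\bm B'/T$, and $\bm b_i$ meets the hypothesis $b_{ji}\ge bt_i/2$ of Condition~\ref{condition:std_regret_bound} once $\min_jB_j'\ge bT/2$. We convert to $\gamma_i$ using $t_i\ge\gamma_i^{-1/(1-\alpha)}$, so $t_i^{\alpha-1}\le\gamma_i$. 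Because $\alpha\ge1/2$, we also have $t_i^{-1/2}\le\gamma_i^{1/(2(1-\alpha))}\le\gamma_i$ whenever $\gamma_i\le1$. For the logarithm, $t_i\le 2\gamma_i^{-1/(1-\alpha)}$ gives $\log t_i\le \frac{1}{1-\alpha}\log(2/\gamma_i)\le\frac{1}{1-\alpha}\log(4d/\gamma_i)$. Hence $\log^\beta(t_i)\le(1-\alpha)^{-\beta}\log(4d/\gamma_i)^{\beta}$, and this is at most $(1-\alpha)^{-\max\{\beta,1/2\}}\log(4d/\gamma_i)^{\max\{\beta,1/2\}}$ since $\log(4d/\gamma_i)\ge1$ and $(1-\alpha)^{-1}\ge1$. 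Together these show the averaging term is at most $\big(\frac{C_1}{(1-\alpha)^{\max\{\beta,1/2\}}}+\frac{T}{\min_jB_j'}\big)\gamma_i\log(4d/\gamma_i)^{\max\{\beta,1/2\}}$.

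For the mixing term, $w_i\le\frac{10T}{\min_jB_j'}\sqrt{\log(8dT_i)/T_i}$ and $T_i\ge\gamma_i^{-2}$, so $\sqrt{1/T_i}\le\gamma_i$. This leaves showing $\log(8dT_i)\le 2\log(4d/\gamma_i)$. Since $T_i\le 2\gamma_i^{-2}$, we get $8dT_i\le16d\gamma_i^{-2}\le(4d/\gamma_i)^2$, and taking logs gives the claim. Thus $w_i\le\frac{10\sqrt2\,T}{\min_jB_j'}\gamma_i\log(4d/\gamma_i)^{1/2}$. Adding the two contributions yields a bound of the stated form with constants $C_1$ and $(1+10\sqrt2)T/\min_jB_j'\le 11\sqrt2\,T/\min_jB_j'$, which sits inside the stated $(2C_1+2)/(1-\alpha)^{\max\{\beta,1/2\}}$ and $11\sqrt2\,T/\min_jB_j'$. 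The slack in those constants should also absorb the ceilings and the small-$i$ cases, for instance when $\gamma_i$ is near $1$ or $w_i$ is truncated at $1$, where the bound holds trivially because the left side is at most $1$.

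The main obstacle is the averaging step, which rests entirely on Lemma~\ref{lemma:positive_standard_regret}: the oracle's standard regret can be negative on some trajectories, so Condition~\ref{condition:std_regret_bound} alone does not control its positive part. Given that lemma, the remaining work is careful bookkeeping of the logarithmic factors and checking the budget hypothesis $b_{ji}\ge bt_i/2$ so the condition applies.
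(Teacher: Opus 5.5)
Your proposal is correct and follows the paper's proof essentially step for step. You use the same split $\max\{Z'-r(\hat\pi_i),0\}\le\max\{Z'-\bar r_i,0\}+w_i$ and the same appeal to Lemma~\ref{lemma:positive_standard_regret} (after identifying $Z^*(t_i,\bm b_i)=Z'$), then convert $t_i^{\alpha-1}\log^\beta t_i$, $t_i^{-1/2}$ and $w_i$ into $\gamma_i\log(4d/\gamma_i)^{\max\{\beta,1/2\}}$ exactly as the paper does; your constants $C_1$ and $1+10\sqrt2$ are slightly tighter than the paper's $2C_1$ and $11\sqrt2$. Your closing hedge about truncation and small $i$ is unnecessary: $w_i\le\frac{10T}{\min_jB_j'}\sqrt{\log(8dT_i)/T_i}$ still holds when the minimum with $1$ is active, and your ceiling bounds on $t_i$ and $T_i$ already cover every $i\ge1$.
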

\begin{proof}
Since $\hat{\pi}_i$ is calculated based on $\{\tilde{\pi}_s\}_{s\in[t_i]}$ in round $i$, we have
\begin{equation*}
\begin{aligned}
&\E\left[\max\{Z'-r(\hat{\pi}_i), 0\}\right]\\
= &\frac{1}{t_i}\E\left[\max\left\{t_iZ'-(\sum_{s=1}^{t_i}r(\tilde{\pi}_s))\cdot(1-w_i), 0\right\}\right]\\
\leq &\frac{1}{t_i}\E\left[\max\left\{t_iZ'-\sum_{s=1}^{t_i}r(\tilde{\pi}_s), 0\right\}\right] + \frac{w_i}{t_i}\E\left[\sum_{s=1}^{t_i}r(\tilde{\pi}_s)\right] \\
\leq &\frac{1}{t_i}\E\left[\max\left\{t_iZ'-R, 0 \right\} \right] + w_i
\end{aligned}
\end{equation*}
where $R:=\sum_{s=1}^{t_i}r(\tilde{\pi}_s)$. The first equality is based on the definition of $\hat{\pi}_i$ as a weighted sum of trajectory average and null arm that has zero reward, and the second inequality is based on bounded reward $r\in[0,1]$. By Lemma~\ref{lemma:positive_standard_regret}, the first term is bounded using Condition~\ref{condition:std_regret_bound}, and therefore
\begin{equation}
\E\left[\max\{Z'-r(\hat{\pi}_i),0\}\right]\leq C_1t_i^{\alpha-1}\log^\beta(t_i)+\frac{T}{\min_jB_j'}t_i^{-1/2}+w_i.
\label{eq:single_pull-weak}
\end{equation}

For $t_i^{\alpha-1}\log (t_i)^{\beta}$, recall that $t_i=\lceil \gamma_i^{-1/(1-\alpha)}\rceil\leq 2\gamma_i^{-1/(1-\alpha)}$, then $\log(t_i)\leq\log(2\gamma_i^{-1/(1-\alpha)})\leq \log(2/\gamma_i)/(1-\alpha)$ and $t_i^\alpha=\lceil2^{i/\alpha}\rceil^\alpha\leq2^{\alpha(1+i/\alpha)}\leq2\cdot2^i=2\gamma_i^{-\alpha/(1-\alpha)}$. Similarly, $t_i^{\alpha-1}\leq(\gamma_i^{-1/(1-\alpha)})^{\alpha-1}=\gamma_i\leq2\gamma_i$. So, it can be bounded by
\begin{equation}
C_1t_i^{\alpha-1}\log (t_i)^{\beta} \leq \frac{2C_1}{(1-\alpha)^\beta}\gamma_i\log\left(\frac{2}{\gamma_i}\right)^{\beta} \leq \frac{2C_1}{(1-\alpha)^\beta}\gamma_i\log\left(\frac{4d}{\gamma_i}\right)^{\beta}.
\label{eq:single_pull-weak-sublinear}
\end{equation}
For $w_i$, as we assume $\alpha\geq1/2$, then $T_i \leq t_i$. Recall $\gamma_i^{-\alpha/(1-\alpha)}=2^i$ and $T_i=\lceil1/\gamma_i^2\rceil$, we have $T_i\geq\gamma_i^{-2}$ and $T_i\leq2\gamma_i^{-2}$. $w_i$ can be upper bounded with
\begin{equation}
\begin{aligned}
\sqrt{\frac{\log(8dT_i)}{T_i}} \leq \gamma_i\sqrt{\log\left(\frac{16d}{\gamma_i^2}\right)} \leq \sqrt{2}\gamma_i\log\left(\frac{4d}{\gamma_i}\right)^{1/2}.
\label{eq:single_pull-weak-wi}
\end{aligned}
\end{equation}
For the additional term, since $\alpha\geq1/2$ and $\gamma_i\leq1$,
\begin{equation}
\frac{T}{\min_jB_j'}t_i^{-1/2}\leq\frac{T}{\min_jB_j'}\gamma_i^{1/(2(1-\alpha))}\leq\frac{\sqrt2T}{\min_jB_j'}\gamma_i\log\left(\frac{4d}{\gamma_i}\right)^{1/2}.
\label{eq:single_pull-weak-1/ti}
\end{equation}
Combining \eqref{eq:single_pull-weak-sublinear}, \eqref{eq:single_pull-weak-wi}, and \eqref{eq:single_pull-weak-1/ti} with \eqref{eq:single_pull-weak}, gives
\begin{equation*}
\E\left[\max\{Z'-r(\hat{\pi}_i), 0\}\right]\leq \left( \frac{2C_1+2}{(1-\alpha)^{\max\{\beta,1/2\}}} + \frac{ 11\sqrt{2}T}{\min_jB_j'} \right)\gamma_i\log\left(\frac{4d}{\gamma_i} \right)^{\max\{\beta,1/2\}}.
\end{equation*}

\end{proof}

\subsection{Proof of Lemma~\ref{lemma:positive_standard_regret}}\label{sec:positive_standard_regret}

\begin{proof}
Recall that $b_{ji}=B_j't_i/T$. Since the resource budgets in round $i$ are obtained by scaling $\bm B'$ by $t_i/T$, the LP relaxation corresponding to the $t_i$-step oracle run can be written as
\begin{equation*}
\begin{aligned}
t_i Z' = \max_{\pi\in\Delta(\mathcal X)} \quad& t_i r(\pi)\\
\text{s.t.}\quad& t_i c_j(\pi)\leq b_{ji}, \qquad \forall j\in[d].
\end{aligned}
\end{equation*}
Its dual is
\begin{equation*}
\begin{aligned}
\min_{\lambda_1,\ldots,\lambda_d,\mu} \quad& t_i \mu+\sum_{j=1}^d\lambda_j b_{ji}\\
\text{s.t.}\quad& \mu+\sum_{j=1}^d\lambda_j c_j(X)\geq r(X), \qquad \forall X\in\mathcal X,\\
&\lambda_j\geq0,\qquad \forall j\in[d].
\end{aligned}
\end{equation*}
Let $(\bm\lambda^*,\mu^*)$ be an optimal dual solution. The positive budgets and the null policy provide strict feasibility. By strong duality,
\begin{equation}
t_i Z'=t_i\mu^*+\sum_{j=1}^d\lambda_j^*b_{ji}.
\label{eq:dual_strong_duality}
\end{equation}
Then, we can bound
\begin{equation*}
t_i\overset{\rm(i)}{\geq}t_iZ'=t_i\mu^*+\sum_{j=1}^d\lambda_j^*b_{ji}\overset{\rm(ii)}{\geq}\sum_{j=1}^d\lambda_j^*b_{ji}\overset{\rm(iii)}{\geq}\frac{t_i\min_jB_j'}T\sum_{j=1}^d\lambda_j^*,
\end{equation*}
where step (i) is because $Z'\leq1$, step (ii) is because the null arm has zero reward and zero resource consumption and then its dual constraint implies $\mu^*\geq0$, and step (iii) is based on $b_{ji}=B_j't_i/T$. This further implies
\begin{equation}
\|\bm\lambda^*\|_1=\sum_{j=1}^d\lambda_j^*\leq\frac{T}{\min_jB_j'}.
\label{eq:dual_lambda_bound}
\end{equation}
Since the dual constraint holds for every arm, it also holds for every distribution over arms. Therefore, for every policy $\tilde\pi_s$ selected by the oracle,
\begin{equation*}
r(\tilde\pi_s)\leq\mu^*+\sum_{j=1}^d\lambda_j^*c_j(\tilde\pi_s).
\end{equation*}
Summing this inequality over $s\in[t_i]$, we obtain
\begin{equation}
\begin{aligned}
\sum_{s=1}^{t_i}r(\tilde\pi_s)-t_iZ'&\leq t_i\mu^*+\sum_{j=1}^d\lambda_j^*\sum_{s=1}^{t_i}c_j(\tilde\pi_s)-\left(t_i\mu^*+\sum_{j=1}^d\lambda_j^*b_{ji}\right)\\
&=\sum_{j=1}^d\lambda_j^*\left(\sum_{s=1}^{t_i}c_j(\tilde\pi_s)-b_{ji}\right)\\
&\leq\sum_{s=1}^{t_i}\sum_{j=1}^d\lambda_j^*\left(c_j(\tilde\pi_s)-C_s^j\right),
\end{aligned}
\label{eq:dual_overshoot_martingale}
\end{equation}
where the first inequality uses strong duality \eqref{eq:dual_strong_duality}, and the final inequality is because the realized consumption does not exceed the oracle allocation $\sum_{s=1}^{t_i}C_s^j\leq b_{ji}$ for every $j\in[d]$. 

Define $D_s:=\sum_{j=1}^d\lambda_j^*(c_j(\tilde\pi_s)-C_s^j)$. Conditioned on the history immediately before drawing the arm at step $s$, including the oracle's policy choice, $\tilde\pi_s$ is fixed and $\E[C_s^j\mid\text{history}]=c_j(\tilde\pi_s)$, so $\E[D_s\mid\text{history}]=0$. Hence $\{D_s\}_{s=1}^{t_i}$ is a martingale difference sequence. Furthermore, since $c_j(\tilde\pi_s)$ and $C_s^j$ belong to $[0,1]$, \eqref{eq:dual_lambda_bound} gives
\begin{equation*}
|D_s|\leq\sum_{j=1}^d\lambda_j^*|c_j(\tilde\pi_s)-C_s^j|\leq\sum_{j=1}^d\lambda_j^*=\|\bm\lambda^*\|_1\leq\frac{T}{\min_jB_j'}.
\end{equation*}
Therefore, using the martingale difference property to eliminate the cross terms,
\begin{equation}
\begin{aligned}
\E\left[\left(\sum_{s=1}^{t_i}D_s\right)^2\right]&=\sum_{s=1}^{t_i}\E[D_s^2]+2\sum_{1\leq s<u\leq t_i}\E[D_sD_u]\\
&=\sum_{s=1}^{t_i}\E[D_s^2]\leq t_i\left(\frac{T}{\min_jB_j'}\right)^2.
\end{aligned}
\label{eq:martingale_second_moment}
\end{equation}
Combining \eqref{eq:dual_overshoot_martingale} and \eqref{eq:martingale_second_moment}, followed by Cauchy--Schwarz, gives
\begin{equation}
\begin{aligned}
\E\left[\max\left\{\sum_{s=1}^{t_i}r(\tilde\pi_s)-t_iZ',0\right\}\right]&\leq\E\left[\max\left\{\sum_{s=1}^{t_i}D_s,0\right\}\right]\leq\E\left[\left|\sum_{s=1}^{t_i}D_s\right|\right]\\
&\leq\sqrt{\E\left[\left(\sum_{s=1}^{t_i}D_s\right)^2\right]}\leq\frac{T}{\min_jB_j'}\sqrt{t_i}.
\end{aligned}
\label{eq:positive_overshoot_bound}
\end{equation}
Finally, using $\max\{x,0\}=x+\max\{-x,0\}$ and Condition~\ref{condition:std_regret_bound},
\begin{equation*}
\begin{aligned}
\E\left[\max\left\{t_iZ'-\sum_{s=1}^{t_i}r(\tilde\pi_s),0\right\}\right]&=\E\left[t_iZ'-\sum_{s=1}^{t_i}r(\tilde\pi_s)\right]+\E\left[\max\left\{\sum_{s=1}^{t_i}r(\tilde\pi_s)-t_iZ',0\right\}\right]\\
&\leq C_1t_i^\alpha\log^\beta(t_i)+\frac{T}{\min_jB_j'}\sqrt{t_i},
\end{aligned}
\end{equation*}
which proves the lemma.
\end{proof}

\subsection{Proof of Proposition~\ref{prop:clean_event}}\label{sec:prop:regret_bound_Sat_clean}



We first bound the number of rounds $i_{\max}$. Recall that $t_i=\lceil\gamma_i^{-1/(1-\alpha)}\rceil$ and $T_i=\lceil\gamma_i^{-2}\rceil$, since the length of the time horizon is $T$ time steps, we have $t_i\leq T$. By definition of $t_i=\lceil2^{i/\alpha}\rceil$, the number of rounds that end within the time horizon is bounded by $2^{i/\alpha}\leq T$, \ie, $i\leq\lfloor\alpha\log_2(T) \rfloor$ because $i$ is an integer. Then, the maximum number of rounds is bounded by $i_{\max}=\lceil\alpha\log_2(T)\rceil \leq 2\alpha\log_2(T)$ for $T\geq4$.

Recall that $T_i\leq\gamma_i^{-2}+1\leq 2^{2i(1-\alpha)/\alpha}+1$, then $\sqrt{T_i}\leq2^{i(1-\alpha)/\alpha}+1$ and $T_{i_{\max}}\leq T$. We can bound $\sum_i\sqrt{T_i}$ as
\begin{equation*}
\begin{aligned}
\sum_{i=1}^{i_{\max}}\sqrt{T_i} \leq& \sum_{i=1}^{i_{\max}}2^{i(1-\alpha)/\alpha} + i_{\max}   \\
\leq&\dfrac{2^{(1-\alpha)/\alpha}}{2^{(1-\alpha)/\alpha}-1}\cdot2^{i_{\max}(1-\alpha)/\alpha} + i_{\max}\\
\leq&\dfrac{2}{2^{(1-\alpha)/\alpha}-1}\cdot\sqrt{T_{i_{\max}}} + 2\alpha\log_2(T)\\
\overset{\text{(i)}}{\leq}&\frac{2\alpha}{(1-\alpha)\ln(2)}\sqrt{T} + 4\alpha\sqrt{T}\\
\leq&\frac{5\alpha}{1-\alpha}\sqrt{T}
\end{aligned}
\end{equation*}
where step \text{(i)} is based on $2^x-1\geq x\ln(2)$ for all $x>0$ and $\log_2(T)\leq2\sqrt{T}$ for all $T\geq0$.

Therefore the permissive accumulation of the excess portion of $\hat{c}_{ji}^{\text{tot}}(\tau_i)$ into 
$B_j'\tau_i/T$, conditioned on $\mathcal{E}$, is upper bounded by 
\begin{equation*}
    \sum_i 2\sqrt{T_i\log(2dT^2)}\leq\frac{10\alpha}{1-\alpha}\sqrt{T\log(2dT^2)}.
\end{equation*}

Recall that $b_{ji}={B_j't_i}/{T}$, $\sum_i(t_i+\tau_i)=T$. conditioned on $\mathcal{E}$, realized resource $j$ consumption is upper bounded by
\begin{equation*}
\begin{aligned}
    &\sum_i(b_{ji}+\hat{c}_{ji}^{\text{tot}}(\tau_i))\\
    \leq& \sum_i\left(\frac{B_j't_i}{T}+\frac{B_j'\tau_i}{T} + \bm{1}(\tau_i>0) 2\sqrt{T_i\log(2dT^2)}\right)\\
    \leq&B_j'+\frac{10\alpha}{1-\alpha}\sqrt{T\log(2dT^2)}= B_j.
\end{aligned}
\end{equation*}

\subsection{Proof of Proposition~\ref{prop:clean_event_bound}}\label{sec:prop:regret_bound_Sat_cleanbound}
\begin{lemma}
Let $\tau_i$ be the length of forced sampling and LCB-UCB test of round $i$, then for any $j\in[d]$,
\begin{equation*}
\hat{c}_{ji}^{\text{tot}}(\tau_i)-\dfrac{\tau_iB_j'}{T}\leq \max\left\{ \max_{n\leq\min\{T_i,\tau_i\}} \left\{\hat{c}_{ji}^{\text{tot}}(n)-\dfrac{nB_j'}{T}\right\}, 1\right\}.
\end{equation*}
\label{lemma:cost_bound_forced_sampling}
\end{lemma}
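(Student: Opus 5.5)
The plan is a case analysis on how round $i$'s sampling phase ends, using only the stopping rule of the while loop in \satexwk. The key fact is structural. After the $T_i$ forced-sampling pulls, every further pull of $\hat{\pi}_i$ is made only after the UCB test at the current sample count has been passed. So whenever a pull occurs at sample count $n+1$ with $n\geq T_i$, we have $\hat{c}_{ji}^{\text{tot}}(n)/n+\sqrt{4\log(dn)/n}\leq B_j'/T$. In particular $\hat{c}_{ji}^{\text{tot}}(n)\leq nB_j'/T$. No probabilistic argument is needed, because the claimed inequality will hold pathwise.

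\textbf{Case 1: $\tau_i\leq T_i$.} This covers the round being truncated by the horizon during or exactly at the end of forced sampling. Then $n=\tau_i$ is itself an admissible index in $\max_{n\leq\min\{T_i,\tau_i\}}$. Hence $\hat{c}_{ji}^{\text{tot}}(\tau_i)-\tau_iB_j'/T$ is bounded by that maximum, and therefore by the right-hand side.

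\textbf{Case 2: $\tau_i>T_i$.} The pull at count $\tau_i$ was made only because the test at count $n=\tau_i-1\geq T_i$ was passed. This holds whether the round later ended by a failed test or by reaching the horizon. By the structural fact,
\begin{equation*}
\hat{c}_{ji}^{\text{tot}}(\tau_i-1)\leq(\tau_i-1)B_j'/T.
\end{equation*}
The single additional realized cost lies in $[0,1]$. Adding it gives
\begin{equation*}
\hat{c}_{ji}^{\text{tot}}(\tau_i)-\tau_iB_j'/T\leq 1-B_j'/T\leq 1,
\end{equation*}
since $B_j'\geq 0$. This is at most the right-hand side, because the right-hand side is at least $1$.

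The only delicate point is the bookkeeping of the check-then-pull order in Algorithm~\ref{alg:randomized_s_regret}. The while condition is evaluated before incrementing $k$ and pulling, and the first evaluation happens at $k=0$, i.e., at sample count $T_i$. I must make sure the passed test is indexed at $n=\tau_i-1$ and not at $\tau_i$, and that $n=\tau_i-1\geq T_i$ indeed corresponds to a test that was executed. Once this is set, both cases close immediately and the lemma follows.
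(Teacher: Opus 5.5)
Your proposal is correct and follows essentially the same argument as the paper: the case where the round never gets past forced sampling is trivial, and otherwise the UCB test passed at sample count $\tau_i-1$ gives $\hat{c}_{ji}^{\text{tot}}(\tau_i-1)\leq(\tau_i-1)B_j'/T$, to which one extra cost of at most $1$ is added. Your Case~1 ($\tau_i\leq T_i$) is slightly more complete than the paper's proof, which treats only $\tau_i=T_i$ explicitly and skips the case where the horizon truncates the round during forced sampling.
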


\begin{proof}

If $\tau_i=T_i$, then $$\hat{c}_{ji}^{\text{tot}}(\tau_i)-\dfrac{\tau_iB_j'}{T}=\hat{c}_{ji}^{\text{tot}}(\tau_i)-\dfrac{T_iB_j'}{T}\leq \max\left\{\hat{c}_{ji}^{\text{tot}}(T_i)-\dfrac{T_iB_j'}{T},1\right\}.$$
Therefore it suffices to consider the case when $\tau_i>T_i$. Assume $\tau_i>T_i$, then the termination criterion of the LCB-UCB test is not satisfied at step $\tau_i-1$, then we get
\begin{equation*}
UCB_j(\hat{\pi}_i)=\dfrac{\hat{c}_{ji}^{\text{tot}}(\tau_i-1)}{\tau_i-1}+\sqrt{\frac{4\log(d(\tau_i-1))}{\tau_i-1}}\leq \dfrac{B_j'}{T},
\end{equation*}
which implies that
\begin{equation*}
\hat{c}_{ji}^{\text{tot}}(\tau_i-1)\leq \dfrac{B_j'(\tau_i-1)}{T}.
\end{equation*}
Since we assume that the realized resource consumption is at most $1$, we get
\begin{equation*}
\hat{c}_{ji}^{\text{tot}}(\tau_i)\leq \hat{c}_{ji}^{\text{tot}}(\tau_i-1)+1\leq \dfrac{B_j'\tau_i}{T}+1,
\end{equation*}
therefore in this case
\begin{equation*}
\hat{c}_{ji}^{\text{tot}}(\tau_i)-\dfrac{\tau_iB_j'}{T}\leq 1\leq \max\left\{ \max_{n\leq\min\{T_i,\tau_i\}} \left\{\hat{c}_{ji}^{\text{tot}}(n)-\dfrac{nB_j'}{T}\right\}, 1\right\}.\qedhere
\end{equation*}

\end{proof}

We prove this proposition by proving $\Pr(\mathcal{E}^c)\leq1/T$. Recall that
$$
\mathcal{C}_{ji}^1=\left\{c_j(\hat{\pi}_i)\leq \frac{B_j'}{T} + \sqrt{\frac{\log(2dT^2)}{t_i}}\right\}.
$$
As $\mathcal{E}^c$ can be expressed as $\{\exists j,\,\exists i,\, \hat{c}_{ji}^{\text{tot}}(\tau_i)>{\tau_iB'_j}/{T} + 2\sqrt{T_i\log(2dT^2)} \}$, its probability can be analyzed conditioned on whether $\mathcal{C}_{ji}^1$ holds for each resource $j$ round $i$.

When $\mathcal{C}_{ji}^1$ does not hold, the probability that $\hat{c}_{ji}^{\text{tot}}(\tau_i)$ exceeds the allocated resource can be upper bounded by $1$. And recall Equation~\eqref{eq:pi_hat_cost_1} in Lemma~\ref{lemma:high_prob_cost} implies that $\mathcal{C}_{ji}^{1c}$ happens with probability 
\begin{equation*}
    \Pr(\mathcal{C}_{ji}^{1c})\leq\frac{1}{2dT^2}.
\end{equation*}

When $\mathcal{C}_{ji}^1$ holds, namely $c_j(\hat{\pi}_i)\leq B_j'/T+\sqrt{\log(2dT^2)/t_i}$, then during forced sampling the excess cost can be bounded by
\begin{equation*}
\begin{aligned}
&\Pr\left(\max_{1\leq n\leq \min\{T_i,\tau_i\}}\left\{\hat{c}_{ji}^{\mathrm{tot}}(n)-\frac{nB_j'}{T}\right\}\geq2\sqrt{T_i\log(2dT^2)}\,\bigg|\,\mathcal{C}_{ji}^1\right)\\
\leq&\sum_{n=1}^{T_i}\Pr\left(\frac{\hat{c}_{ji}^{\mathrm{tot}}(n)}{n}\geq c_j(\hat{\pi}_i)-\sqrt{\frac{\log(2dT^2)}{t_i}}+\frac{2\sqrt{T_i\log(2dT^2)}}{n}\,\bigg|\,\mathcal{C}_{ji}^1\right)\\
\leq&\sum_{n=1}^{T_i}\Pr\left(\frac{\hat{c}_{ji}^{\mathrm{tot}}(n)}{n}\geq c_j(\hat{\pi}_i)+\sqrt{\frac{\log(2dT^2)}{n}}\,\bigg|\,\mathcal{C}_{ji}^1\right)\\
\leq&T_i\exp(-2\log(2dT^2))\leq\frac{1}{2dT^2}.
\end{aligned}
\end{equation*}
where the first inequality is based on the definition of $\mathcal{C}_{ji}^1$ and a union bound over $n=1,\ldots,T_i$. The second inequality is because $n\leq T_i\leq t_i$, which implies
\begin{equation*}
\frac{2\sqrt{T_i\log(2dT^2)}}{n}-\sqrt{\frac{\log(2dT^2)}{t_i}}
\geq\sqrt{\frac{\log(2dT^2)}{n}}.
\end{equation*}
For the third inequality, conditioned on the completed oracle trajectory, the forced-sampling observations are independent and bounded in $[0,1]$ with mean $c_j(\hat{\pi}_i)$, so Hoeffding's inequality applies. Finally, $T_i\leq T$ implies $T_i\exp(-2\log(2dT^2))\leq1/(2dT^2)$. Combining this with Lemma~\ref{lemma:cost_bound_forced_sampling}, because $2\sqrt{T_i\log(2dT^2)}>1$, we then know
\begin{equation}
\Pr\left(\hat{c}_{ji}^{\mathrm{tot}}(\tau_i)>\frac{\tau_iB_j'}{T}+2\sqrt{T_i\log(2dT^2)}\,\bigg|\,\mathcal{C}_{ji}^1\right)\leq\frac{1}{2dT^2}.
\label{eq:exceed_prob-C}
\end{equation}

Therefore, the probability of $\mathcal{E}$ negation can be upper bounded by
\begin{equation*}
\begin{aligned}
&\Pr\left(\exists j,\,\exists i,\, \hat{c}_{ji}^{\text{tot}}(\tau_i)>{\tau_iB_j'}/{T} + 2\sqrt{T_i\log(2dT^2)}\right)\\
\leq&\sum_{j=1}^d\sum_i^{i_{\max}}(\frac{1}{2dT^2} + \frac{1}{2dT^2})\\
\leq& \frac{dT}{dT^2} = \frac{1}{T},
\end{aligned}
\end{equation*}
where the first inequality is based on probability condition on $\mathcal{C}_{ji}^{1c}$ \eqref{eq:pi_hat_cost_1} and $\mathcal{C}_{ji}^1$ \eqref{eq:exceed_prob-C}, and the second inequality is based on $T>i_{\max}$. Then this proves the proposition that $\Pr(\mathcal{E})\geq1-1/T$.

\subsection{Proof of Proposition~\ref{prop:optimal_gap}}\label{sec:prop:optimal_gap}

Given one of the optimal distribution under original resource constraint $B_j$ as $\pi^*$, we define distribution 
$$
\pi'= (1-\delta(T))\pi^*+\delta(T)\cdot\bm{1}_0,
$$
where $\delta(T):={10\alpha}/{(b-b\alpha)}\cdot\sqrt{{\log(2dT^2)}/{T}}$. Recall $B_j'=B_j - 10\alpha/(1-\alpha)\cdot\sqrt{T\log(2dT^2)}$, for each resource $j$ we can bound 
\begin{equation*}
\begin{aligned}
    c_j(\pi') &= (1-\delta(T))c_j(\pi^*)\\
    &\leq \left(1 - \frac{10\alpha}{b(1-\alpha)}\sqrt{\frac{\log(2dT^2)}{T}} \right)\cdot\frac{B_j}{T}\\
    &\leq\frac{B_j}{T} - \frac{10\alpha}{1-\alpha}\sqrt{\frac{\log(2dT^2)}{T}}=\frac{B_j'}{T},
\end{aligned}
\end{equation*}
where the first inequality is based on the definition of $\pi^*$ introduced in LP relaxation \eqref{eq:problem_formulation-lp} on the optimal dynamic policy. Therefore $\pi'$ is a feasible distribution under resource constraints $B_j'$. Namely, the optimal distribution reward under constraints $B_j'$ can be upper bounded by $Z'\geq r(\pi')$.

The difference between $Z^*$ and $Z'$ can be upper bounded by 
\begin{equation*}
    Z^*-Z'\leq Z^* - r(\pi') = Z^* - (1-\delta(T))Z^* \leq \delta(T),
\end{equation*}
where the $Z^*=\E_{X\sim\pi^*}[r(X)]\leq1$ is used in the final inequality. 

\subsection{Proof of Proposition~\ref{prop:round_Sat_regret_weak}}\label{sec:prop:round_Sat_regret_weak}
Recall $\alpha\geq1/2$ and $\gamma_i\leq1$. Lemma~\ref{lemma:high_prob_reward_weak} implies that the satisficing regret of a single pull on $\hat{\pi}_i$ can be upper bounded by
\begin{equation}
\begin{aligned}
\E[\max\{S-r(\hat{\pi}_i) ,0\}] &\leq \E\left[\max\{Z'-r(\hat{\pi}_i), 0\}\right] \\
&\leq \left( \frac{2C_1+2}{(1-\alpha)^{\max\{\beta,1/2\}}} + \frac{11\sqrt{2}T}{\min_jB_j'} \right) \gamma_i\log\left(\frac{4d}{\gamma_i} \right)^{\max\{\beta,1/2\}}.
\end{aligned}
\label{eq:sat_regret_step-single-weak}
\end{equation}
The regret in round $i$ can be divided into three parts. Running $\texttt{ALG}(t_i,\{b_{ji}\})$, forced-sampling, and LCB-UCB test.

Recall that $t_i=\lceil\gamma_i^{-1/(1-\alpha)}\rceil\leq2\gamma_i^{-1/(1-\alpha)}$ as $\gamma_i<1$, the satisficing regret incurred when running $\texttt{ALG}(t_i,\{b_{ji}\})$ can simply be upper bounded by linear regret
\begin{equation}
\E\left[\sum_{s=1}^{t_i}\max\{S-r(\tilde{\pi}_s) ,0\} \right]\leq t_i \leq 2\gamma_i^{-1/(1-\alpha)}\leq2\gamma_i^{-1/(1-\alpha)}\log(4d/\gamma_i)^{\max\{\beta,1/2 \}}.
\label{eq:sat_regret_step-1-weak}
\end{equation}

The satisficing regret incurred when pulling $\hat{\pi}_i$ in forced sampling for $T_i=\lceil\gamma_i^{-2}\rceil$ time steps is upper bounded by
\begin{equation}
\begin{aligned}
&\lceil\gamma_i^{-2}\rceil\cdot \left( \frac{2C_1+2}{(1-\alpha)^{\max\{\beta,1/2\}}} + \frac{11\sqrt{2}T}{\min_jB_j'} \right) \gamma_i\log(4d/\gamma_i)^{\max\{\beta,1/2\}}\\
\leq&\frac{2}{\gamma_i^2}\cdot \left( \frac{2C_1+2}{(1-\alpha)^{\max\{\beta,1/2\}}} + \frac{11\sqrt{2}T}{\min_jB_j'} \right) \gamma_i\log(4d/\gamma_i)^{\max\{\beta,1/2\}}\\
\leq&2 \left( \frac{2C_1+2}{(1-\alpha)^{\max\{\beta,1/2\}}} + \frac{11\sqrt{2}T}{\min_jB_j'} \right) \gamma_i^{-1/ (1 - \alpha)}\log(4d/\gamma_i)^{\max\{\beta,1/2\}},
\label{eq:sat_regret_step-2-weak}
\end{aligned}
\end{equation}
where the first inequality is because $\lceil\gamma_i^{-2}\rceil\leq2/\gamma_i^2$ as $\gamma_i<1$, and the second inequality is because $\alpha\geq1/2$ and $\gamma_i\leq1$.

If $r(\hat{\pi}_i)<S$ (otherwise the satisficing regret is $0$), for every $k$, round $i$ is not terminated after pulling $\hat{\pi}_i$ for $k$ time steps in addition to first $T_i$ time steps occurs with probability at most
\begin{equation*}
\begin{aligned}
&\Pr\left(\dfrac{\hat{r}^{\text{tot}}_i}{T_i+k}-\sqrt{\dfrac{4\log(T_i+k)}{T_i+k}}\geq S\right)\\
\leq &\Pr\left(\dfrac{\hat{r}^{\text{tot}}_i}{T_i+k}-\sqrt{\dfrac{4\log(T_i+k)}{T_i+k}}>r(\hat{\pi}_i)\right)\\
\leq&\dfrac{1}{(T_i+k)^8}\leq \dfrac{1}{(k+1)^8}.
\end{aligned}
\end{equation*}
Therefore expected number of arm pulls of $\hat{\pi}_i$ after first pulling it $T_i$ time steps is upper bounded by
\begin{equation}
\begin{aligned}
&\E\left[\text{Number of time steps }\hat{\pi}_i\text{ is pulled in round }i\text{ after the first }T_i \text{ time steps} \,|\, \hat{\pi}_i \right]\\
=&\sum_{s=1}^{\infty}\Pr(k\geq s\,|\, \hat{\pi}_i)
\leq \sum_{s=1}^\infty \dfrac{1}{(T_i+s-1)^8}
\leq \sum_{s=1}^\infty \dfrac{1}{(s+1)^8} \leq 1.
\label{eq:sat_regret_step-3-weak}
\end{aligned}
\end{equation}

Summing up the three parts of satisficing regret, \ie, the satisficing regret from Step 1 \eqref{eq:sat_regret_step-1-weak}, Step 2 \eqref{eq:sat_regret_step-2-weak}, and Step 3 (combination of \eqref{eq:sat_regret_step-3-weak} and \eqref{eq:sat_regret_step-single-weak}), we have that the satisficing regret~of round $i$ is upper bounded by
\begin{equation*}
\left( \frac{6C_1+6}{(1-\alpha)^{\max\{\beta,1/2\}}} + \frac{33\sqrt{2}T}{\min_jB_j'}+2 \right)\gamma_i^{-1 / (1 - \alpha)}\log(4d/\gamma_i)^{\max\{\beta,1/2\}}.
\end{equation*}

\subsection{Proof of Proposition~\ref{prop:transit_Sat_prob_weak}}\label{sec:prop:transit_Sat_prob_weak}
The reason why round $i$ ends within the time horizon $T$ can be divided into two factors, reward LCB-test is triggered or resource UCB-test is triggered. 

\noindent\textbf{LCB-Test}: We first deal with reward LCB-test. Recall that Lemma~\ref{lemma:high_prob_reward_weak} implies that
\begin{equation*}
    \E\left[\max\{Z'-r(\hat{\pi}_i), 0\}\right]\leq \left( \frac{2C_1+2}{(1-\alpha)^{\max\{\beta,1/2\}}} + \frac{11\sqrt{2}T}{\min_jB_j'} \right) \gamma_i\log\left(\frac{4d}{\gamma_i} \right)^{\max\{\beta,1/2\}}.
\end{equation*}

According to the condition of this proposition, we have $i$ satisfies the condition
\begin{equation*}
    64\left( \frac{2C_1+2}{(1-\alpha)^{\max\{\beta,1/2\}}} + \frac{11\sqrt{2}T}{\min_jB_j'} \right)\gamma_i\log(4d/\gamma_i)^{\max\{\beta,1/2\}}\leq\Delta_S',
\end{equation*}
by Markov's inequality, we have
\begin{equation}
\begin{aligned}
\Pr\left(Z'-r(\hat{\pi}_i)>\frac{\Delta_S'}{2}\right) &= \Pr\left(\max\{Z'-r(\hat{\pi}_i),0\}>\frac{\Delta_S'}{2}\right) \leq\frac{\E[\max\{Z'-r(\hat{\pi}_i),0\}]}{\Delta_S'/2}\\
&\leq \frac{\left( \frac{2C_1+2}{(1-\alpha)^{\max\{\beta,1/2\}}} + \frac{11\sqrt{2}T}{\min_jB_j'} \right)\gamma_i\log(4d/\gamma_i)^{\max\{\beta,1/2\}}}{64\left(\frac{2C_1+2}{(1-\alpha)^{\max\{\beta,1/2\}}}+ \frac{11\sqrt{2}T}{\min_jB_j'}\right)\gamma_i\log(4d/\gamma_i)^{\max\{\beta,1/2\}}/2}
\leq\frac{1}{32}
\label{eq:exit-lcb-bad_arm-weak}
\end{aligned}
\end{equation}

Now, if $r(\hat{\pi}_i)< Z'-\Delta_S'/2$, which happens with probability at most $1/32$ according to \eqref{eq:exit-lcb-bad_arm-weak}, we simply bound the probability of round $i$ ends within time horizon $T$ by $1$.

If $r(\hat{\pi}_i)\geq Z'-\Delta_S'/2$, which happens with probability at least $31/32$ and can be upper bounded by $1$, then $r(\hat{\pi}_i)\geq S+\Delta_S'/2$ holds and the confidence interval for any $k\geq0$ can be bounded by
\begin{equation}
\begin{aligned}
\sqrt{\dfrac{{4\log(T_i+k)}}{T_i+k}}&\leq 4\sqrt{\dfrac{\log(T_i)}{T_i}} =  4\sqrt{\frac{\log(\lceil\gamma_i^{-2}\rceil)}{\lceil\gamma_i^{-2}\rceil}} \\
&\leq 4\sqrt{\gamma_i^2\log(4\gamma_i^{-2})}
= 4\sqrt{2}\gamma_i\log(2/\gamma_i)^{1/2} \\
&\leq 4\sqrt{2}\cdot\frac{\Delta_S'}{64} \leq \dfrac{\Delta_S'}{4}
\label{eq:lcb_radius_bound}
\end{aligned}
\end{equation}
where the first inequality is because $T_i>1$, and $f(y)=\log(y)/y$ is monotonically decreasing when $y\geq 3$ and $2\cdot\log(2)/2\geq \log(3)/3$ for all $k\geq 0$, thus $\log(T_i+k)/(T_i+k)\leq 4\log(T_i)/T_i$ for all $k\geq 0$; the second inequality is because $\gamma_i^{-1}\geq1$, thus $\lceil\gamma_i^{-2}\rceil\leq2\gamma_i^{-2}<4\gamma_i^{-2}$; the third inequality is based on the condition of this proposition, where with $C_1>1$, $(1-\alpha)^{\max\{\beta,1/2\}}<1$, and $B_j'\leq T$ for all $j$ we know $64\gamma_i\log(4d/\gamma_i)^{\max\{\beta,1/2\}}\leq\Delta_S'$. Suppose round $i$ is terminated for some $k$ due to reward LCB-test, then
\begin{equation*}
\dfrac{\hat{r}_i^{\text{tot}}}{T_i+k}-\sqrt{\dfrac{4\log(T_i+k)}{T_i+k}}<S=Z'-\Delta_S'\leq r(\hat{\pi}_i)-\dfrac{\Delta_S'}{2}\leq r(\hat{\pi}_i)-2\sqrt{\dfrac{4\log(T_i+k)}{T_i+k}},
\end{equation*}
or equivalently
\begin{equation*}
\dfrac{\hat{r}_i^{\text{tot}}}{T_i+k}\leq r(\hat{\pi}_i)-\sqrt{\dfrac{4\log(T_i+k)}{T_i+k}}.
\end{equation*}
By the Hoeffding's inequality we have
\begin{equation*}
\Pr\left(\dfrac{\hat{r}_i^{\text{tot}}}{T_i+k}\leq r(\hat{\pi}_i)-\sqrt{\dfrac{4\log(T_i+k)}{T_i+k}}\right) \leq \dfrac{1}{(T_i+k)^8}\leq \dfrac{1}{(k+1)^8}.
\end{equation*}
Therefore conditioned on $r(\hat{\pi}_i)\geq Z'-\Delta_S'/2$ the probability that round $i$ is terminated by the end of the horizon is upper bounded by
\begin{equation}
\sum_{k=0}^\infty \dfrac{1}{(k+T_i)^8}\leq \sum_{k=0}^\infty \dfrac{1}{(k+2)^8} \leq \dfrac{1}{32}.
\label{eq:exit-lcb-good_arm-weak}
\end{equation}

Combining \eqref{eq:exit-lcb-bad_arm-weak} and \eqref{eq:exit-lcb-good_arm-weak} and we conclude that the probability that round $i$ terminates by the end of the time horizon due to reward LCB-test is upper bounded by $1/16$ conditioned on $\mathcal{E}$.

\noindent\textbf{UCB-Test}: We then deal with resource UCB-test. Recall that events $\mathcal{C}_{ji}^2$ is defined as
$$
\mathcal{C}_{ji}^2=\left\{\forall k\geq0:\,c_j(\hat{\pi}_i)\leq \frac{B_j'}{T} - 2\sqrt{\frac{4\log(d(T_i+k))}{T_i+k}}\right\}.
$$

If $\mathcal{C}_{ji}^2$ does not hold, which happens with probability at most $1/(32d)$ according to 
Equation~\eqref{eq:pi_hat_cost_2} in Lemma~\ref{lemma:high_prob_cost}, we simply bound the probability of round $i$ ends within time horizon $T$ by $1$.

If $\mathcal{C}_{ji}^2$ holds, and suppose that round $i$ is terminated for some $k$ due to resource UCB-test, then
\begin{equation*}
    \frac{\hat{c}_{ji}^{\text{tot}}(k+T_i)}{k+T_i} + \sqrt{\frac{4\log(d(k+T_i))}{k+T_i}} > \frac{B_j'}{T} \geq c_j(\hat{\pi}_i) + 2\sqrt{\frac{4\log(d(k+T_i))}{k+T_i}}
\end{equation*}
where the first inequality holds due to UCB test, and the second inequality is based on definition of $\mathcal{C}_{ji}^2$. This is equivalent to
\begin{equation*}
    \frac{\hat{c}_{ji}^{\text{tot}}(k+T_i)}{k+T_i} > c_j(\hat{\pi}_i) + \sqrt{\frac{4\log(d(k+T_i))}{k+T_i}}.
\end{equation*}
By the Hoeffding's inequality we have
\begin{equation*}
    \Pr\left(\frac{\hat{c}_{ji}^{\text{tot}}(k+T_i)}{k+T_i} > c_j(\hat{\pi}_i) + \sqrt{\frac{4\log(d(k+T_i))}{k+T_i}}\right)\leq\frac{1}{d^8(T_i+k)^8}\leq\frac{1}{d(k+1)^8}.
\end{equation*}
Therefore conditioned on $\mathcal{C}_{ji}^2$, the probability that round $i$ is terminated by the end of the horizon is upper bounded by
\begin{equation}
\sum_{k=1}^\infty \dfrac{1}{d(k+1)^8}\leq \dfrac{1}{32d}.
\label{eq:exit-ucb-good_arm-weak}
\end{equation}

Combining \eqref{eq:pi_hat_cost_2} and \eqref{eq:exit-ucb-good_arm-weak} and we conclude that the probability that round $i$ terminates by the end of the time horizon due to resource $j$ UCB-test is upper bounded by $1/(16d)$, and sums to $1/16$ among all $d$ types of resources conditioned on the history in the round.

Therefore, combining the terminate probability due to resource UCB-test and that due to reward LCB-test, the probability that round $i$ terminates by the end of the time horizon is upper bounded by $1/16+1/16=1/8$ conditioned on the history in the round.

\section{Proof of Theorem~\ref{thm:real_bwk_sat}}\label{sec:thm:real_bwk_sat}
Following the same pattern as in proof of Theorem~\ref{thm:real_bwk_weak}, we first analyze the satisficing regret conditioned on $\mathcal{E}$. The same as \eqref{eq:budgetj_bound}, if $T$ is sufficiently large with $\sqrt{T/\log(2dT^2)}\geq20\alpha/(b(1-\alpha))$, term $10\sqrt{2}T/(\min_jB_j')$ can be upper bounded by
\begin{equation*}
\begin{aligned}
    \frac{10\sqrt{2}T}{\min_jB_j'} &= 10\sqrt{2}\cdot\left(\min_j \frac{B_j}{T} - \frac{10\alpha}{1-\alpha}\sqrt{\frac{\log(2dT^2)}{T}}\right)^{-1}\\
    &\leq10\sqrt{2}\cdot\left(b - b/2 \right)^{-1} = \frac{20\sqrt{2}}{b}.
\end{aligned}
\end{equation*}

Denote $\beta'=\max\{\beta,1/2\}$ and $K'_{\alpha,\beta}=64\left(\frac{2C_1}{(1-\alpha)^{\beta}} + \frac{20\sqrt{2}}{b}\right) \geq 64 \left(\frac{2C_1}{(1-\alpha)^{\beta}}+\frac{10\sqrt{2}T}{\min_jB'_j}\right)$ (we use $K'_{\alpha,\beta}$ to show a difference with $K_{\alpha,\beta}$). We define $i_0$ as 
$$
i_0=\max\left\{i:64\left(\frac{2C_1}{(1-\alpha)^{\beta}}+ \frac{10\sqrt{2}T}{\min_jB'_j}\right)\cdot\gamma_i\log(4d/\gamma_i)^{\beta'}>\Delta_S'\right\}.
$$
We first prove an upper bound on $i_0$ . Denote $\delta_0\in(0,1)$ the smallest solution to the following
equation
\begin{equation*}
    \delta_0 \log(4d/\delta_0)^{\beta'} = \frac{\Delta_S'}{64\left(\frac{2C_1}{(1-\alpha)^{\beta}}+ \frac{10\sqrt{2}T}{\min_jB_j'}\right)}.
\end{equation*}
We note that, this equation is nearly the same as \eqref{eq:delta_Delta_equation} except a difference in constant term $2C_1$ instead of $2C_1+2$. Therefore, following the same proof pattern in Theorem~\ref{thm:real_bwk_weak}, we get the same bound as \eqref{eq:4d/delta0_upperbound} that
\begin{equation}
\frac{4d}{\delta_0} \leq 4\left(\frac{\sqrt{2}d(4\beta')^{\beta'}K'_{\alpha,\beta}}{\Delta_S'}\right)^{4/3} \leq \left(\frac{4\sqrt{2}d(4\beta')^{\beta'}K'_{\alpha,\beta}}{\Delta_S'}\right)^{4/3},
\label{eq:4d/delta0_upperbound-strong}
\end{equation}
where the difference in constant term is included in $K'_{\alpha,\beta}$ instead of $K_{\alpha,\beta}$. Plugging \eqref{eq:4d/delta0_upperbound-strong} into the $\log(4d/\delta_0)$ term in the definition of $\delta_0$, we write $D'_{\alpha,\beta}=4\sqrt{2}d(4\beta')^{\beta'}K'_{\alpha,\beta}$, $L'_{\alpha,\beta}=(4/3)^{\beta'}K'_{\alpha,\beta}$, and we have
$$
\delta_0\geq\frac{\Delta_S'}{L'_{\alpha,\beta}\log(D'_{\alpha,\beta}/\Delta_S')^{\beta'}}.
$$
By definition of $i_0$ we have that $i_0$ is the largest integer that satisfies $\gamma_{i_0}\geq \delta_0$. Therefore we get
$$
\gamma_{i_0}\geq\frac{\Delta_S'}{L'_{\alpha,\beta}\log(D'_{\alpha,\beta}/\Delta_S')^{\beta'}},
$$
thus
$$
i_0 \leq \frac{\alpha}{1-\alpha}\log_2\left(\frac{L'_{\alpha,\beta}\log(D'_{\alpha,\beta}/\Delta_S')^{\beta'}}{\Delta_S'}\right),
$$
or equivalently, (the same as \eqref{eq:2^i0_upperbound} with difference in constant terms)
\begin{equation}
2^{i_0} \leq \left(\frac{L'_{\alpha,\beta}\log(D'_{\alpha,\beta}/\Delta_S')^{\beta'}}{\Delta_S'}\right)^{\alpha/(1-\alpha)}.
\label{eq:2^i0_upperbound-strong}
\end{equation}

Using the result of Proposition~\ref{prop:round_Sat_regret}, we have that the satisficing regret by round $i_0$ is upper bounded by
\begin{equation}
\begin{aligned}
&\sum_{i=1}^{i_0}4\left(\frac{2C_1}{(1-\alpha)^\beta} + \frac{10\sqrt{2}T}{\min_jB_j'}\right)\gamma_i^{-\alpha / (1 - \alpha)}\log(4d/\gamma_i)^{\beta'}\\
\leq&\sum_{i=1}^{i_0}4\left(\frac{2C_1}{(1-\alpha)^\beta} + \frac{10\sqrt{2}T}{\min_jB_j'}\right)\gamma_i^{-\alpha / (1 - \alpha)}\log(4d/\delta_0)^{\beta'}\\
\leq&\sum_{i=1}^{i_0} \frac{K'_{\alpha,\beta}}{16}\cdot 2^i \log(4d/\delta_0)^{\beta'}\\
\leq&\frac{1}{8}K'_{\alpha,\beta}2^{i_0}\log(4d/\delta_0)^{\beta'}\\
\leq&\frac{(4/3)^{\beta'}}{8}K'_{\alpha,\beta}\left( \frac{L'_{\alpha,\beta} \log(D'_{\alpha,\beta}  / \Delta_S')^{\beta'}}{\Delta_S'} \right)^{\alpha/(1-\alpha)}\log\left(\frac{D'_{\alpha,\beta}}{\Delta_S'}\right)^{\beta'},
\label{eq:sat_regret-before-i0-strong}
\end{aligned}
\end{equation}
where the first inequality is because by definition of $\delta_0$ and $i_0$, we have $\gamma_i=2^{-i(1-\alpha)/\alpha}\geq \gamma_{i_0}$ for all $i\leq i_0$, the second inequality is by plugging into the definition of $\gamma_i=2^{-i(1-\alpha)/\alpha}$ and definition of $K'_{\alpha,\beta}$, the third inequality is based on $\sum_{i=1}^{i_0}2^{i} \leq 2\cdot2^{i_0}$, and the fourth inequality is by plugging into \eqref{eq:4d/delta0_upperbound-strong} and \eqref{eq:2^i0_upperbound-strong}.

By the results of Proposition~\ref{prop:transit_Sat_prob}, for every $k \geq 1$, the probability that round $i_0 + k$ starts within the time horizon is at most $8^{-k+1}$. Therefore the regret incurred after round $i_0$ is upper bounded by
\begin{equation}
\begin{aligned}
&\sum_{k=1}^{\infty}4\left(\frac{2C_1}{(1-\alpha)^\beta} + \frac{10\sqrt{2}T}{\min_jB_j'}\right)\gamma_{i_0+k}^{-\alpha / (1 - \alpha)}\log(4d/\gamma_{i_0+k})^{\beta'}\cdot8^{-k+1}\\
\leq&\sum_{k=1}^{\infty}\frac{K'_{\alpha,\beta}}{16}\cdot 2^{i_0+k} \left(\log(4d/\gamma_{i_0})+k\right)^{\beta'}\cdot8^{-k+1}\\
\leq&\sum_{k=1}^{\infty}\frac{2^{\beta'}K'_{\alpha,\beta}}{16}\cdot 2^{i_0+k} \log(4d/\gamma_{i_0})^{\beta'} \cdot k^{\beta'} \cdot8^{-k+1}\\
\leq&2^{\beta'-1}K'_{\alpha,\beta}2^{i_0}\log(4d/\delta_0)^{\beta'}\cdot M'_{\beta'}\\
\leq&\frac{(8/3)^{\beta'}}{2}K'_{\alpha,\beta}\left( \frac{L'_{\alpha,\beta} \log(D'_{\alpha,\beta}  / \Delta_S')^{\beta'}}{\Delta_S'} \right)^{\alpha/(1-\alpha)}\log\left(\frac{D'_{\alpha,\beta}}{\Delta_S'}\right)^{\beta'}M'_{\beta'}
\label{eq:sat_regret-after-i0-strong}
\end{aligned}
\end{equation}
where $M'_{\beta'}=\sum_{k=1}^\infty k^{\beta'}\cdot4^{-k}$. The first inequality is by plugging into the definition of $K'_{\alpha,\beta}$ and $\gamma_i=2^{-i(1-\alpha)/\alpha}$, the second inequality is because $\log(4d/\gamma_{i_0})>1$ and $(1-\alpha)/\alpha\leq1$ for $1/2\leq\alpha<1$, and for any $a,b\geq 1$, we have $2ab\geq ab+1\geq a+b$, the third inequality is based on $\sum_{k=1}^{\infty}2^k\cdot8^{-k+1}=8\sum_{k=1}^{\infty}4^{-k}$. Therefore, combining \eqref{eq:sat_regret-before-i0-strong} and \eqref{eq:sat_regret-after-i0-strong}, the total satisficing regret conditioned on $\mathcal{E}$ is upper bounded by
\begin{equation*}
\left(\frac{(4/3)^{\beta'}}{8}+\frac{(8/3)^{\beta'}}{2}\right)K'_{\alpha,\beta}\left( \frac{L'_{\alpha,\beta} \log(D'_{\alpha,\beta}  / \Delta_S')^{\beta'}}{\Delta_S'} \right)^{\alpha/(1-\alpha)}\log\left(\frac{D'_{\alpha,\beta}}{\Delta_S'}\right)^{\beta'}(M'_{\beta'}+1).
\end{equation*}

If $\Delta_S'\geq\delta(T)$, recall $\Delta^*_S=Z^*-S$ and $\Delta_S'=Z'-S$. Proposition~\ref{prop:optimal_gap} implies $\Delta^*_S-\Delta'_S\leq\delta(T)$. Then we have
$$
\Delta_S' \geq \frac{\delta(T)}{2} + \frac{\Delta^*_S-\delta(T)}{2} = \frac{\Delta^*_S}{2}.
$$
Combining this with $\text{Pr}(\mathcal{E}^c)\leq1/T$ according to Proposition~\ref{prop:clean_event_bound} and total satisficing regret incurred conditioned on negation of $\mathcal{E}$ is at most $T$, total satisficing regret is upper bounded by
\begin{equation}
1 + \left(\frac{(4/3)^{\beta'}}{8}+\frac{(8/3)^{\beta'}}{2}\right)K'_{\alpha,\beta}\left( \frac{2L'_{\alpha,\beta} \log(2D'_{\alpha,\beta}  / \Delta_S^*)^{\beta'}}{\Delta_S^*} \right)^{\alpha/(1-\alpha)}\log\left(\frac{2D'_{\alpha,\beta}}{\Delta_S^*}\right)^{\beta'}(M'_{\beta'}+1).
\label{eq:sat_regret-const-strong}
\end{equation}

Since by definition of $t_i=\lceil2^{i/\alpha}\rceil$, the number of rounds that end within the time horizon is bounded by $2^{i/\alpha}\leq T$, \ie, $i\leq\lfloor\alpha\log_2(T) \rfloor$ because $i$ is an integer. Then, the maximum number of rounds is bounded by $i_{\max}=\lceil\alpha\log_2(T)\rceil\leq\alpha\log_2(T)+1$. Recall $\gamma_i=2^{-i(1-\alpha)/\alpha}$, for any $i\leq i_{\max}$, we have
\begin{equation}
\begin{aligned}
\gamma_i&\geq \gamma_{i_{\max}}\geq2^{-(1-\alpha)\lceil\alpha\log_2(T)\rceil/\alpha} \\
&\geq 2^{-(1-\alpha)\log_2(T)/\alpha-(1-\alpha)/\alpha}\\
&\geq 2^{-\log_2(T)-1}=1/(2T),
\label{eq:gammai_1/T}
\end{aligned}
\end{equation}
where the third inequality is because $(1-\alpha)/\alpha\leq1$ for $1/2\leq\alpha<1$. So, $2/\gamma_i\leq 4T$ for all $i\leq i_{\max}$. Therefore, the satisficing regret is upper bounded by the sum of cumulative regret with respect to $Z'$ and cumulative difference between $Z'$ and $Z^*$
$$
\begin{aligned}
\texttt{Regret}_S&\leq 1 + T(Z^*-Z') + \sum_{i=1}^{i_{\max}}4\left(\frac{2C_1}{(1-\alpha)^\beta}+ \frac{10\sqrt{2}T}{\min_jB'_j}\right)\gamma_i^{-\alpha / (1 - \alpha)}\log(4d/\gamma_i)^{\max\{\beta,1/2\}}\\
&\leq1 + \frac{10\alpha}{b(1-\alpha)} \sqrt{T\log(2dT^2)} + \sum_{i=1}^{i_{\max}}4\left(\frac{2C_1}{(1-\alpha)^\beta} + \frac{10\sqrt{2}T}{\min_jB_j'}\right)2^{i+1}\log(8dT)^{\max\{\beta,1/2\}}\\
&\leq1 + \frac{10\sqrt{2}\alpha}{b(1-\alpha)} T^{1/2}\log(4dT)^{1/2} + 16\left(\frac{2C_1}{(1-\alpha)^\beta} + \frac{10\sqrt{2}T}{\min_jB_j'}\right)2^{i_{\max}}\log(8dT)^{\max\{\beta,1/2\}}\\
&\leq1 + \left(32\left(\frac{2C_1}{(1-\alpha)^\beta} + \frac{10\sqrt{2}T}{\min_jB_j'}\right)+\frac{10\sqrt{2}\alpha}{b(1-\alpha)} \right)T^{\alpha}\log(8dT)^{\max\{\beta,1/2\}},
\end{aligned}
$$
where the first inequality is based on Proposition~\ref{prop:round_Sat_regret} and $\max\{S-r,0\}\leq(Z^*-Z')+\max\{Z'-r,0\}$, the second inequality is based on $\Delta_S^*-\Delta_S'\leq\delta(T)$ from Proposition~\ref{prop:optimal_gap} and $2/\gamma_i\leq4T$ from \eqref{eq:gammai_1/T}, the third inequality is based on $\sum_{i=1}^{i_{\max}}2^i\leq2^{i_{\max}+1}$, and the fourth inequality is plugging into $2^{i_{\max}}\leq2^{\alpha\log_2(T)+1}\leq2T^\alpha$. Recall that if $T$ is sufficiently large with $\sqrt{T/\log(2dT^2)}\geq20\alpha/(b(1-\alpha))$, term $10\sqrt{2}T/(\min_jB_j')$ can be upper bounded by ${20\sqrt{2}}/{b}$, and the satisficing regret can further be upper bounded by
\begin{equation}
\texttt{Regret}_S\leq 1 + \left(\frac{64C_1}{(1-\alpha)^\beta}+ \frac{640\sqrt{2}}{b}+\frac{10\sqrt{2}\alpha}{b(1-\alpha)} \right)T^{\alpha}\log(8dT)^{\max\{\beta,1/2\}}.
\label{eq:sat_regret-sublinear-strong}
\end{equation}

Finally we deal with $\Delta_S'<\delta(T)$ for constant regret bound, where we bound the satisficing regret with $T^\alpha\cdot\text{polylog}(T)$. $\Delta_S^*-\Delta_S'\leq\delta(T)$ implies $\Delta_S^*<2\delta(T)$. Similar to \eqref{eq:Delta'<delta(T)}, we still have
\begin{equation}
T\leq\frac{1600\alpha^2}{b^2(1-\alpha)^2(\Delta_S^*)^2}\log\left(\frac{800\sqrt{2d}\alpha^2}{b^2(1-\alpha)^2(\Delta_S^*)^2}\right).
\label{eq:Delta'<delta(T)-strong}
\end{equation}
Using the sub-linear satisficing regret bound in \eqref{eq:sat_regret-sublinear-strong} and \eqref{eq:Delta'<delta(T)-strong}, we have
\begin{equation*}
\begin{aligned}
\texttt{Regret}_S\leq&1+\left(\frac{64C_1}{(1-\alpha)^\beta}+\frac{640\sqrt{2}}{b}+\frac{10\sqrt{2}\alpha}{b(1-\alpha)}\right)\left(\frac{1600\alpha^2}{b^2(1-\alpha)^2}\right)^\alpha\frac{1}{(\Delta_S^*)^{2\alpha}}\\
&\cdot\log\left(\frac{800\sqrt{2d}\alpha^2}{b^2(1-\alpha)^2(\Delta_S^*)^2}\right)^\alpha\log\left(\frac{12800d\alpha^2}{b^2(1-\alpha)^2(\Delta_S^*)^2}\log\left(\frac{800\sqrt{2d}\alpha^2}{b^2(1-\alpha)^2(\Delta_S^*)^2}\right)\right)^{\beta'}\\
\leq&1+\left(\frac{64C_1}{(1-\alpha)^\beta}+\frac{640\sqrt{2}}{b}+\frac{10\sqrt{2}\alpha}{b(1-\alpha)}\right)\left(\frac{1600\alpha^2}{b^2(1-\alpha)^2}\right)^\alpha \left(\frac{1}{\Delta_S^*}\right)^{\alpha/(1-\alpha)}\\
&\cdot\log\left(\frac{800\sqrt{2d}\alpha^2}{b^2(1-\alpha)^2(\Delta_S^*)^2}\right)^\alpha\log\left(\frac{12800d\alpha^2}{b^2(1-\alpha)^2(\Delta_S^*)^2}\log\left(\frac{800\sqrt{2d}\alpha^2}{b^2(1-\alpha)^2(\Delta_S^*)^2}\right)\right)^{\beta'},
\end{aligned}
\end{equation*}
where the second inequality is based on $(\Delta_S^*)^{-2\alpha}\leq(\Delta_S^*)^{-\alpha/(1-\alpha)}$ implied by $\alpha\geq1/2$ and $\Delta_S^*\leq1$. 

Combining the two parts of results, \ie, the constant satisficing regret \eqref{eq:sat_regret-const-strong} and the sub-linear satisficing regret \eqref{eq:sat_regret-sublinear-strong}, we have completed the proof of Theorem~\ref{thm:real_bwk_sat} by putting all constant terms into the $\text{polylog}$ term.

\subsection{Reward Bound under Strong Regret Condition} 

Replacing the standard regret condition by the strong regret condition does not change the algorithm or the resource-consumption analysis. Therefore, the resource-consumption guarantees in Lemma~\ref{lemma:high_prob_cost} remain valid. Under Condition~\ref{condition:strong_regret_bound}, we can directly control the expected sum of positive reward deficits along the oracle trajectory, which yields the following sharper bound on $\E\left[\max\{Z'-r(\hat{\pi}_i), 0\}\right]$.

\begin{lemma}
    Under Condition~\ref{condition:strong_regret_bound}, the expected regret of a single pull on distribution $\hat{\pi}_i$ in round $i$ can be upper bounded in expectation as
    \begin{equation*}
        \E\left[\max\{Z'-r(\hat{\pi}_i), 0\}\right]\leq\left(\frac{2C_1}{(1-\alpha)^\beta} + \frac{10\sqrt{2}T}{\min_jB_j'} \right)\gamma_i\log\left(\frac{4d}{\gamma_i} \right)^{\max\{\beta,1/2\}}.
    \end{equation*}
\label{lemma:high_prob_reward}
\end{lemma}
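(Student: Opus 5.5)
The plan follows the proof of Lemma~\ref{lemma:high_prob_reward_weak}, with one change: the step that invokes Lemma~\ref{lemma:positive_standard_regret} will instead use Condition~\ref{condition:strong_regret_bound} together with convexity.

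First I decompose exactly as before. Since $\hat{\pi}_i$ mixes the trajectory average $\bar\pi_i=\frac1{t_i}\sum_{s}\tilde\pi_s$ with the null arm, and $r\in[0,1]$,
\begin{equation*}
\E[\max\{Z'-r(\hat\pi_i),0\}]\leq \E[\max\{Z'-r(\bar\pi_i),0\}]+w_i .
\end{equation*}
The key observation is that the oracle is run as $\texttt{ALG}(t_i,\bm b_i)$ with $\bm b_i=\bm B' t_i/T$. The LP~\eqref{eq:problem_formulation-lp} depends only on the ratio $\bm B/T$, so $Z^*(t_i,\bm b_i)=Z^*(T,\bm B')=Z'$. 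Also $b_{ji}\geq b t_i/2$ whenever $B_j'\geq bT/2$, so Condition~\ref{condition:strong_regret_bound} applies.

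Next I use convexity of $x\mapsto\max\{x,0\}$ and Jensen over the uniform average of the $t_i$ steps, together with linearity of $r(\cdot)$ in the policy:
\begin{equation*}
\max\{Z'-r(\bar\pi_i),0\}\leq \frac1{t_i}\sum_{s=1}^{t_i}\max\{Z'-r(\tilde\pi_s),0\}.
\end{equation*}
Taking expectations and applying Condition~\ref{condition:strong_regret_bound} gives the bound $C_1t_i^{\alpha-1}\log^\beta(t_i)$. This avoids the extra $\frac{T}{\min_j B_j'}t_i^{-1/2}$ term coming from negative standard regret.

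The remaining work is bookkeeping, copied from \eqref{eq:single_pull-weak-sublinear} and \eqref{eq:single_pull-weak-wi}:
\begin{itemize}
\item Since $t_i^{\alpha-1}\leq\gamma_i$ and $\log t_i\leq \log(2/\gamma_i)/(1-\alpha)$, the first term is at most $\frac{2C_1}{(1-\alpha)^\beta}\gamma_i\log(4d/\gamma_i)^{\beta}$.
\item Since $T_i\geq\gamma_i^{-2}$, we have $\sqrt{\log(8dT_i)/T_i}\leq\sqrt2\gamma_i\log(4d/\gamma_i)^{1/2}$, so $w_i\leq \frac{10\sqrt2 T}{\min_jB_j'}\gamma_i\log(4d/\gamma_i)^{1/2}$.
\end{itemize}
Both log powers are bounded by $\log(4d/\gamma_i)^{\max\{\beta,1/2\}}$ because $\log(4d/\gamma_i)\geq1$. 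Summing the two gives the stated bound.

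The main obstacle is the identification $Z^*(t_i,\bm b_i)=Z'$ and checking that the hypothesis $b_{ji}\geq bt_i/2$ of the condition holds. The second point needs $\min_jB'_j\geq bT/2$, which is the large-$T$ regime already assumed in the surrounding proof. If one prefers not to rely on that, note that the condition is invoked only through the ratio of budget to horizon, which equals $B'_j/T$.
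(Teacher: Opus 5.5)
Your proposal is correct and follows the paper's proof. The paper likewise bounds $\E[\max\{Z'-r(\hat\pi_i),0\}]$ by $\frac{1}{t_i}\E[\sum_{s}\max\{Z'-r(\tilde\pi_s),0\}]+w_i$ (your convexity step, written in one line), applies Condition~\ref{condition:strong_regret_bound}, and reuses the same bookkeeping for $C_1t_i^{\alpha-1}\log^\beta(t_i)$ and $w_i$. You are more explicit than the paper about $Z^*(t_i,\bm b_i)=Z'$ and about the budget hypothesis. Your closing aside does not actually remove the need for $\min_jB_j'\geq bT/2$, since the ratio $B_j'/T$ must still be at least $b/2$; the paper also assumes this implicitly through the large-$T$ regime.
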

\begin{proof}
Recall that $\{\tilde{\pi}_s\}_{s\in[t_i]}$ is realized trajectory of running $\texttt{ALG}(t_i,\{b_{ji}\})$. Since $\hat{\pi}_i$ is calculated based on $\{\tilde{\pi}_s\}_{s\in[t_i]}$ in round $i$, we have
\begin{equation*}
\begin{aligned}
    &\E\left[\max\{Z'-r(\hat{\pi}_i), 0\}\right]=\frac{1}{t_i}\E\left[\max\left\{t_iZ'-(\sum_{s=1}^{t_i}r(\tilde{\pi}_s))\cdot(1-w_i), 0\right\}\right]\\
    \leq &\frac{1}{t_i}\E\left[\sum_{s=1}^{t_i}\max\{Z'-r(\tilde{\pi}_s),0\}\right] + \frac{w_i}{t_i}\E\left[\sum_{s=1}^{t_i}r(\tilde{\pi}_s)\right]\\
    \leq &C_1t_i^{\alpha-1}\log (t_i)^{\beta} + w_i.
\end{aligned}
\end{equation*}
For $t_i^{\alpha-1}\log (t_i)^{\beta}$, it can be simply upper bounded by $\frac{2}{(1-\alpha)^\beta}\gamma_i\log(\frac{2}{\gamma_i})^{\beta}$. For $w_i$, it can be upper bounded by $\frac{10\sqrt{2}T}{\min_jB_j'}\gamma_i\log\left({4d}/{\gamma_i}\right)^{1/2}$.

Therefore, the expected satisficing regret of a single pull can be upper bounded by
\begin{equation*}
    \E\left[\max\{Z'-r(\hat{\pi}_i), 0\}\right]\leq\left(\frac{2C_1}{(1-\alpha)^\beta} + \frac{10\sqrt{2}T}{\min_jB_j'} \right)\gamma_i\log\left(\frac{4d}{\gamma_i} \right)^{\max\{\beta,1/2\}}.\qedhere
\end{equation*}
\end{proof}

\subsection{Proof of Proposition~\ref{prop:round_Sat_regret}}\label{sec:prop:round_Sat_regret}
Recall Lemma~\ref{lemma:high_prob_reward} implies that the satisficing regret of a single pull on $\hat{\pi}_i$ can be upper bounded by
\begin{equation}
\begin{aligned}
\E[\max\{S-r(\hat{\pi}_i) ,0\}] &\leq \E\left[\max\{Z'-r(\hat{\pi}_i), 0\}\right] \\
&\leq\left(\frac{2C_1}{(1-\alpha)^\beta} + \frac{10\sqrt{2}T}{\min_jB_j'} \right)\gamma_i\log\left(\frac{4d}{\gamma_i} \right)^{\max\{\beta,1/2\}}.
\label{eq:sat_regret_step-single-strong}
\end{aligned}
\end{equation}
The regret in round $i$ can be divided into three parts. Running $\texttt{ALG}(t_i,\{b_{ji}\})$, forced-sampling, and LCB-UCB test.

Recall that $t_i=\lceil \gamma_i^{-1/(1-\alpha)}\rceil\leq 2\gamma_i^{-1/(1-\alpha)}$, then $\log(t_i)\leq\log(2\gamma_i^{-1/(1-\alpha)})\leq \log(2/\gamma_i)/(1-\alpha)$ and $t_i^\alpha = \lceil2^{i/\alpha}\rceil^\alpha \leq 2^{\alpha(1+i/\alpha)} \leq 2\cdot2^i = 2\gamma_i^{-\alpha/(1-\alpha)}$. The satisficing regret incurred when running $\texttt{ALG}(t_i,\{b_{ji}\})$ is upper bounded by the expected strong regret bound
\begin{equation}
\begin{aligned}
\E\left[\sum_{s=1}^{t_i}\max\{S-r(\tilde{\pi}_s) ,0\} \right] &\leq \E\left[\sum_{s=1}^{t_i} \max\{Z'-r(\tilde{\pi}_s),0\}\right]\leq C_1t_i^\alpha\log(t_i)^\beta\\
&\leq\frac{2C_1}{(1 - \alpha)^{\beta}} \gamma_i^{-\alpha / (1 - \alpha)} \log \left( {2}/{\gamma_i} \right)^{\beta}\\
&\leq\left(\frac{2C_1}{(1-\alpha)^\beta} + \frac{10\sqrt{2}T}{\min_jB_j'} \right)\gamma_i^{-\alpha / (1 - \alpha)}\log(4d/\gamma_i)^{\max\{\beta,1/2\}}.
\label{eq:sat_regret_step-1-strong}
\end{aligned}
\end{equation}

Similar to \eqref{eq:sat_regret_step-2-weak}, the satisficing regret incurred when pulling $\hat{\pi}_i$ in forced sampling for $T_i=\lceil\gamma_i^{-2}\rceil$ time steps is upper bounded by
\begin{equation}
\begin{aligned}
&\lceil\gamma_i^{-2}\rceil\cdot \left(\frac{2C_1}{(1-\alpha)^\beta} + \frac{10\sqrt{2}T}{\min_jB_j'} \right)\gamma_i\log(4d/\gamma_i)^{\max\{\beta,1/2\}}\\
\leq&\frac{2}{\gamma_i^2}\cdot\left(\frac{2C_1}{(1-\alpha)^\beta} + \frac{10\sqrt{2}T}{\min_jB_j'} \right)\gamma_i\log(4d/\gamma_i)^{\max\{\beta,1/2\}}\\
\leq&2\left(\frac{2C_1}{(1-\alpha)^\beta} + \frac{10\sqrt{2}T}{\min_jB_j'} \right)\gamma_i^{-\alpha / (1 - \alpha)}\log(4d/\gamma_i)^{\max\{\beta,1/2\}}
\label{eq:sat_regret_step-2-strong}
\end{aligned}
\end{equation}

If $r(\hat{\pi}_i)<S$ (otherwise the satisficing regret is $0$), for every $k$, round $i$ is not terminated after pulling $\hat{\pi}_i$ for $k$ time steps in addition to first $T_i$ time steps occurs with probability at most
\begin{equation*}
\begin{aligned}
&\Pr\left(\dfrac{\hat{r}^{\text{tot}}_i}{T_i+k}-\sqrt{\dfrac{4\log(T_i+k)}{T_i+k}}>S\right)\\
\leq &\Pr\left(\dfrac{\hat{r}^{\text{tot}}_i}{T_i+k}-\sqrt{\dfrac{4\log(T_i+k)}{T_i+k}}>r(\hat{\pi}_i)\right)\\
\leq&\dfrac{1}{(T_i+k)^8}\leq \dfrac{1}{(k+1)^8}.
\end{aligned}
\end{equation*}
Therefore expected number of arm pulls of $\hat{\pi}_i$ after first pulling it $T_i$ time steps is upper bounded by
\begin{equation}
\begin{aligned}
&\E\left[\text{Number of time steps }\hat{\pi}_i\text{ is pulled in round }i\text{ after the first }T_i \text{ time steps} \,|\, \hat{\pi}_i\right]\\
=&\sum_{s=1}^{\infty}\Pr(k\geq s\,|\, \hat{\pi}_i)
\leq \sum_{s=1}^\infty \dfrac{1}{(T_i+s-1)^8}
\leq \sum_{s=1}^\infty \dfrac{1}{(s+1)^8} \leq 1.
\label{eq:sat_regret_step-3-strong}
\end{aligned}
\end{equation}

Summing up the three parts of satisficing regret, \ie, the satisficing regret from Step 1 \eqref{eq:sat_regret_step-1-strong}, Step 2 \eqref{eq:sat_regret_step-2-strong}, and Step 3 (combination of \eqref{eq:sat_regret_step-3-strong} and \eqref{eq:sat_regret_step-single-strong}), we have that the satisficing regret~of round $i$ is upper bounded by
\begin{equation*}
4\left(\frac{2C_1}{(1-\alpha)^\beta} + \frac{10\sqrt{2}T}{\min_jB_j'} \right)\gamma_i^{-\alpha / (1 - \alpha)}\log(4d/\gamma_i)^{\max\{\beta,1/2\}}.
\end{equation*}

\subsection{Proof of Proposition~\ref{prop:transit_Sat_prob}}\label{sec:prop:transit_Sat_prob}
The reason why round $i$ ends within the time horizon $T$ can be divided into two factors, reward LCB-test is triggered or resource UCB-test is triggered. Since the difference in regret bound (Condition \ref{condition:std_regret_bound} and \ref{condition:strong_regret_bound}) does not affect the costs and, consequently, does not impact the UCB-test, we can still bound the probability that round $i$ terminates by $1/16$. Therefore we only focus on LCB-test, and follow similar proof pattern in Proposition~\ref{prop:transit_Sat_prob_weak} with difference in constant terms.

By Lemma~\ref{lemma:high_prob_reward} we have
\begin{equation*}
\E[\max\{Z'-r(\hat{\pi}_i), 0\}] \leq \left(\frac{2C_1}{(1-\alpha)^\beta} + \frac{10\sqrt{2}T}{\min_jB_j'}\right) \gamma_i \log(4d/\gamma_i)^{\max\{\beta,1/2\}}.
\end{equation*}

According to the condition of this proposition, we have $i$ satisfies the condition
\begin{equation*}
    64\left(\frac{2C_1}{(1-\alpha)^\beta} + \frac{10\sqrt{2}T}{\min_jB_j'} \right)\gamma_i\log(4d/\gamma_i)^{\max\{\beta,1/2\}}\leq\Delta_S',
\end{equation*}
by Markov's inequality, we have
\begin{equation*}
\begin{aligned}
&\Pr\left(Z'-r(\hat{\pi}_i)>\frac{\Delta_S'}{2}\right) = \Pr\left(\max\{Z'-r(\hat{\pi}_i),0\}>\frac{\Delta_S'}{2}\right)\\
\leq&\frac{\E[\max\{Z'-r(\hat{\pi}_i),0\}]}{\Delta_S'/2}\leq\frac{\left(\frac{2C_1}{(1-\alpha)^\beta} + \frac{10\sqrt{2}T}{\min_jB_j'} \right)\gamma_i\log(4d/\gamma_i)^{\max\{\beta,1/2\}}}{64\left(\frac{2C_1}{(1-\alpha)^\beta} + \frac{10\sqrt{2}T}{\min_jB_j'} \right)\gamma_i\log(4d/\gamma_i)^{\max\{\beta,1/2\}}/2}
\leq\frac{1}{32}
\end{aligned}
\end{equation*}

If $r(\hat{\pi}_i)\geq Z'-\Delta_S'/2$, which happens with probability at least $31/32$, then $r(\hat{\pi}_i)\geq S+\Delta_S'/2$ holds. Similar to \eqref{eq:lcb_radius_bound}, and given that $64\gamma_i\log(4d/\gamma_i)^{\max\{\beta,1/2\}}\leq\Delta_S'$ still holds despite a different constant term in the proposition condition, we know for all $k\geq 0$,
\begin{equation*}
\sqrt{\dfrac{{4\log(T_i+k)}}{T_i+k}}\leq 4\sqrt{\dfrac{\log(T_i)}{T_i}}\leq \dfrac{\Delta_S'}{4}.
\end{equation*}
Suppose round $i$ is terminated for some $k$ due to reward LCB-test, then
\begin{equation*}
\dfrac{\hat{r}_i^{\text{tot}}}{T_i+k}-\sqrt{\dfrac{4\log(T_i+k)}{T_i+k}}<S=Z'-\Delta_S'\leq r(\hat{\pi}_i)-\dfrac{\Delta_S'}{2}\leq r(\hat{\pi}_i)-2\sqrt{\dfrac{4\log(T_i+k)}{T_i+k}},
\end{equation*}
or equivalently
\begin{equation*}
\dfrac{\hat{r}_i^{\text{tot}}}{T_i+k}\leq r(\hat{\pi}_i)-\sqrt{\dfrac{4\log(T_i+k)}{T_i+k}}.
\end{equation*}
By the Hoeffding's inequality we have
\begin{equation*}
\Pr\left(\dfrac{\hat{r}_i^{\text{tot}}}{T_i+k}\leq r(\hat{\pi}_i)-\sqrt{\dfrac{4\log(T_i+k)}{T_i+k}}\right)\leq \dfrac{1}{(T_i+k)^8}\leq \dfrac{1}{(k+1)^8}.
\end{equation*}
Therefore conditioned on $r(\hat{\pi}_i)\geq Z'-\Delta_S'/2$ the probability that round $i$ is terminated by the end of the horizon is upper bounded by
\begin{equation*}
\sum_{k=1}^\infty \dfrac{1}{(k+1)^8}\leq \dfrac{1}{32}.\label{eq:prob_bound_2}
\end{equation*}

Combining above and we conclude that the probability that round $i$ terminates by the end of the time horizon due to reward LCB-test is upper bounded by $1/16$ conditioned on $\bigcap_{i'<i}\mathcal{E}_{i'}$. Also, recall in Proposition~\ref{prop:transit_Sat_prob_weak}, the probability that round $i$ terminates by the end of the time horizon due to resource UCB-test is upper bounded by $1/16$ conditioned on $\bigcap_{i'<i}\mathcal{E}_{i'}$. Therefore, combining resource UCB-test and reward LCB-test, the probability that round $i$ terminates by the end of the time horizon is upper bounded by $1/16+1/16=1/8$ conditioned on $\bigcap_{i'<i}\mathcal{E}_{i'}$.

\section{Proof of Theorem~\ref{thm:real_bwk_std}}\label{sec:thm:real_bwk_std}
As $\hat{\pi}_i$ is derived from oracle running $\texttt{ALG}$ under adjusted resource capacities $\bm{B}'$, the cumulative standard regret can be divided into
\begin{equation}
\texttt{Regret} \leq 1+ \E\left[\sum_{t=1}^T(Z'-r(\pi_t)) \right] + T(Z^*-Z').
\label{eq:std_regret_bwk_decompose}
\end{equation}

For the first term in \eqref{eq:std_regret_bwk_decompose}, which shows the cumulative regret with respect to $Z'$, we first bound the standard regret of a single pull on $\hat{\pi}_i$ with respect to $Z'$ by
\begin{equation*}
\begin{aligned}
\E\left[Z'-r(\hat{\pi}_i)\right]=&\frac{1}{t_i}\E\left[t_iZ'-(\sum_{s=1}^{t_i}r(\tilde{\pi}_s))\cdot(1-w_i)\right]\\
\leq &\frac{1}{t_i}\E\left[\sum_{s=1}^{t_i}(Z'-r(\tilde{\pi}_s))\right] + \frac{w_i}{t_i}\E\left[\sum_{s=1}^{t_i}r(\tilde{\pi}_s)\right]\\
\leq &C_1t_i^{\alpha-1}\log (t_i)^{\beta} + w_i,
\end{aligned}
\end{equation*}
where the final inequality is based on Condition~\ref{condition:std_regret_bound} and $r(\pi)\in[0,1]$ for any $\pi$. Following the same calculation in Lemma~\ref{lemma:high_prob_reward_weak}, we can combine \eqref{eq:single_pull-weak-sublinear} and \eqref{eq:single_pull-weak-wi} and bound the standard regret of a single pull on $\hat{\pi}_i$ with respect to $Z'$ by
\begin{equation*}
    \E\left[Z'-r(\hat{\pi}_i)\right]\leq\left(\frac{2C_1}{(1-\alpha)^\beta} + \frac{10\sqrt{2}T}{\min_jB_j'} \right)\gamma_i\log\left(\frac{4d}{\gamma_i} \right)^{\max\{\beta,1/2\}}.
\end{equation*}

The standard regret with respect to $Z'$ in round $i$ can be divided into three parts. Running $\texttt{ALG}(t_i,\{b_{ji}\})$, forced-sampling, and LCB-UCB test.

The standard regret incurred when running $\texttt{ALG}(t_i,\{b_{ji}\})$ can directly bounded as
\begin{equation}
\begin{aligned}
\E\left[\sum_{s=1}^{t_i}(Z'-r(\tilde{\pi}_s)) \right] \leq& C_1t_i^\alpha\log(t_i)^\beta\\
\leq& \frac{2C_1}{(1 - \alpha)^{\beta}} \gamma_i^{-\alpha / (1 - \alpha)} \log \left( {2}/{\gamma_i} \right)^{\beta}\\
\leq& \left(\frac{2C_1}{(1-\alpha)^\beta} + \frac{10\sqrt{2}T}{\min_jB_j'} \right)\gamma_i^{-\alpha / (1 - \alpha)}\log(4d/\gamma_i)^{\max\{\beta,1/2\}},
\label{eq:std_regret_bwk_step1}
\end{aligned}
\end{equation}
where the second inequality is because $t_i=\lceil \gamma_i^{-1/(1-\alpha)}\rceil\leq 2\gamma_i^{-1/(1-\alpha)}$, then $\log(t_i)\leq\log(2\gamma_i^{-1/(1-\alpha)})\leq \log(2/\gamma_i)/(1-\alpha)$ and $t_i^\alpha = \lceil2^{i/\alpha}\rceil^\alpha \leq 2^{\alpha(1+i/\alpha)} \leq 2\cdot2^i = 2\gamma_i^{-\alpha/(1-\alpha)}$.

The standard regret incurred when pulling $\hat{\pi}_i$ in forced sampling for $T_i=\lceil\gamma_i^{-2}\rceil$ time steps is upper bounded by
\begin{equation}
\begin{aligned}
&\lceil\gamma_i^{-2}\rceil\cdot (C_1t_i^{\alpha-1}(\log t_i)^{\beta} + w_i)\\
\leq&\frac{2}{\gamma_i^2}\cdot\left(\frac{2C_1}{(1-\alpha)^\beta} + \frac{10\sqrt{2}T}{\min_jB_j'} \right)\gamma_i\log(4d/\gamma_i)^{\max\{\beta,1/2\}}\\
\leq&2\left(\frac{2C_1}{(1-\alpha)^\beta} + \frac{10\sqrt{2}T}{\min_jB_j'} \right)\gamma_i^{-\alpha / (1 - \alpha)}\log(4d/\gamma_i)^{\max\{\beta,1/2\}}.
\label{eq:std_regret_bwk_step2}
\end{aligned}
\end{equation}

As $r(\hat{\pi}_i)< Z' <Z^* <S$, round $i$ is not terminated after pulling $\hat{\pi}_i$ for $k$ time steps in addition to first $T_i$ time steps occurs with probability at most
\begin{equation*}
\begin{aligned}
&\Pr\left(\dfrac{\hat{r}^{\text{tot}}_i}{T_i+k}-\sqrt{\dfrac{4\log(T_i+k)}{T_i+k}}>S\right)\\
\leq &\Pr\left(\dfrac{\hat{r}^{\text{tot}}_i}{T_i+k}-\sqrt{\dfrac{4\log(T_i+k)}{T_i+k}}>r(\hat{\pi}_i)\right)\\
\leq&\dfrac{1}{(T_i+k)^8}\leq \dfrac{1}{(k+1)^8}.
\end{aligned}
\end{equation*}
Therefore expected number of arm pulls of $\hat{\pi}_i$ after first pulling it $T_i$ time steps is upper bounded by
\begin{equation}
\begin{aligned}
&\E\left[\text{Number of time steps }\hat{\pi}_i\text{ is pulled in round }i\text{ after the first }T_i \text{ time steps} \,|\, \hat{\pi}_i \right]\\
=&\sum_{s=1}^{\infty}\Pr(k\geq s\,|\, \hat{\pi}_i)
\leq \sum_{s=1}^\infty \dfrac{1}{(T_i+s-1)^8}
\leq \sum_{s=1}^\infty \dfrac{1}{(s+1)^8} \leq 1.
\label{eq:std_regret_bwk_step3}
\end{aligned}
\end{equation}

Summing up the three parts of standard regret, \ie, \eqref{eq:std_regret_bwk_step1}, \eqref{eq:std_regret_bwk_step2}, and \eqref{eq:std_regret_bwk_step3}, we have that the standard regret with respect to $Z'$ of round $i$ is upper bounded by
\begin{equation*}
4\left(\frac{2C_1}{(1-\alpha)^\beta} + \frac{10\sqrt{2}T}{\min_jB_j'} \right)\gamma_i^{-\alpha / (1 - \alpha)}\log(4d/\gamma_i)^{\max\{\beta,1/2\}}.
\end{equation*}

For the second term in \eqref{eq:std_regret_bwk_decompose}, which shows the cumulative difference in optimal reward between resource constraints $B_j$ and $B_j'$. Recall the difference between $Z^*$ and $Z'$ is bounded in Proposition~\ref{prop:optimal_gap}, then the cumulative difference can be bounded as 
\begin{equation*}
    T(Z^* - Z') \leq \frac{10\alpha}{b(1-\alpha)} \sqrt{T\log(2dT^2)}.
\end{equation*}

Since by definition of $t_i=\lceil2^{i/\alpha}\rceil$, the number of rounds that end within the time horizon is bounded by $2^{i/\alpha}\leq T$, \ie, $i\leq\lfloor\alpha\log_2(T) \rfloor$ because $i$ is an integer. Then, the maximum number of rounds is bounded by $i_{\max}=\lceil\alpha\log_2(T)\rceil\leq\alpha\log_2(T)+1$. Recall \eqref{eq:gammai_1/T} that $\gamma_i\geq1/(2T)$, total standard regret can be bounded as
$$
\begin{aligned}
\texttt{Regret}&\leq 1 + T(Z^*-Z') + \sum_{i=1}^{i_{\max}} 4\left(\frac{2C_1}{(1-\alpha)^\beta} + \frac{10\sqrt{2}T}{\min_jB'_j}\right) \gamma_i^{-\alpha / (1 - \alpha)}\log(4d/\gamma_i)^{\max\{\beta,1/2\}}\\
&\leq1 + \frac{10\alpha}{b(1-\alpha)} \sqrt{T\log(2dT^2)} + \sum_{i=1}^{i_{\max}}4\left(\frac{2C_1}{(1-\alpha)^\beta} + \frac{10\sqrt{2}T}{\min_jB_j'} \right)2^{i+1}\log(8dT)^{\max\{\beta,1/2\}}\\
&\leq1 + \frac{10\sqrt{2}\alpha}{b(1-\alpha)} T^{1/2}\log(4dT)^{1/2} + 32\left(\frac{2C_1}{(1-\alpha)^\beta} + \frac{10\sqrt{2}T}{\min_jB_j'} \right)T^{\alpha}\log(8dT)^{\max\{\beta,1/2\}}\\
&\leq1 + \left(32\left(\frac{2C_1}{(1-\alpha)^\beta} + \frac{10\sqrt{2}T}{\min_jB_j'} \right)+\frac{10\sqrt{2}\alpha}{b(1-\alpha)} \right)T^{\alpha}\log(8dT)^{\max\{\beta,1/2\}}.
\end{aligned}
$$
where the first inequality follows from the standard regret bound for each round derived above, the second inequality is based on $\Delta_S^*-\Delta_S'\leq\delta(T)$ from Proposition~\ref{prop:optimal_gap} and $2/\gamma_i\leq4T$ from \eqref{eq:gammai_1/T}, the third inequality is based on $\sum_{i=1}^{i_{\max}}2^i\leq2^{i_{\max}+1}$, and the fourth inequality is plugging into $2^{i_{\max}}\leq2^{\alpha\log_2(T)+1}\leq2T^\alpha$. Recall that if $T$ is sufficiently large with $\sqrt{T/\log(2dT^2)}\geq20\alpha/(b(1-\alpha))$, term $10\sqrt{2}T/(\min_jB_j')$ can be upper bounded by ${20\sqrt{2}}/{b}$, and the satisficing regret can further be upper bounded by
$$
\texttt{Regret}\leq1 + \left(\frac{64C_1}{(1-\alpha)^\beta} + \frac{640\sqrt{2}}{b} + \frac{10\sqrt{2}\alpha}{b(1-\alpha)} \right)T^{\alpha}\log(4dT)^{\max\{\beta,1/2\}}.
$$

\section{Strong Regret Bound for Existing BwK Algorithms}\label{sec:strong_regret_bound_existing_algorithm}

In this section, we show that two existing algorithms for bandits with resource constraints satisfy Condition~\ref{condition:strong_regret_bound}. For both algorithms, let $Z^*(T,\bm B)$ denote the corresponding per-round full-information optimal reward, and define
\[
\mathrm{OPT}:=T Z^*(T,\bm B), \qquad
\mathrm{REW}_S:=\E\left[\sum_{t=1}^T \min\left\{r(\pi_t),Z^*(T,\bm B)\right\}\right].
\]
Here, $\min\{r(\pi_t),Z^*(T,\bm B)\}$ is the strong reward collected at time $t$, which truncates any reward above the LP optimal reward. Therefore,
\[
\mathrm{OPT}-\mathrm{REW}_S
=\E\left[\sum_{t=1}^T\max\left\{Z^*(T,\bm B)-r(\pi_t),0\right\}\right],
\]
which is exactly the expected strong regret in Condition~\ref{condition:strong_regret_bound}. We first establish this property for \texttt{BalancedExploration} in \cite{BKS18}, where $Z^*(T,\bm B)=\mathrm{OPT}_{\mathrm{LP}}/T$, and then for a discrete version of \texttt{DPA} in \cite{wang2014close}, where $Z^*(T,B)=p^D\lambda(p^D)$.

\subsection{Bandits with Knapsacks}\label{sec:balanced_exploration_strong}
We consider the bandit setting given in \cite{BKS18}, where there is a finite arm set denoted as $\mathcal{X}$, $d$ resources that will be consumed under a constraint $B$ for each $i\in[d]$, and a total budget $T$ rounds. Again, we assume $B\geq bT$. As in \cite{BKS18}, after the uniform-budget normalization the time horizon is included among the $d$ resources: its budget is $B$ and its deterministic per-round consumption is $B/T$. In each time $t\in[T]$, if we pick an arm $X_t\in \mathcal{X}$, we will receive a reward $r(X_t)$ and consume $c_{i}(X_t)$ amount resources for each resource $i$, both are in expectation and will be observed with a small noise after each time step $t$. Namely for each arm $X_t$, the reward and resource consumption form an outcome vector $(Y_t;C_t^1(X_t),\cdots,C_t^d(X_t)) \in [0,1]^{d+1}$, which is an i.i.d sample from a fixed distribution with mean $(r(X_t);c_{1}(X_t),\cdots,c_{d}(X_t))\in[0,1]^{d+1}$ unknown to the algorithm. We also write $\mu=((r(X);c_{1}(X),\cdots,c_{d}(X)):X\in\mathcal{X})$ as the latent structure of a problem instance.

As the optimal dynamic policy given foreknowledge of outcome distribution is difficult to derive, we introduce a fractional LP-relaxation. That is, we relax the time steps $t$ from discrete set $[T]$ to continuous interval $[0,T]$, and relax the policy to be a time-invariant distribution denoted as $\pi:\,\mathcal{X}\to[0,1]$. Namely, the LP relaxation can be expressed as an optimization problem as
\begin{equation*}
\begin{aligned}
\max_t \quad& t \, r(\pi)\quad \text{in }t\in\mathbb{R} \\
\text{s.t.} \quad & t \, c_i(\pi) \leq B \quad \text{for each resource } i \\
& 0\leq t \leq T,
\end{aligned}
\end{equation*}
where $r(\pi)=\sum_{X\in \mathcal{X}}\pi(X)r(X)$ and $c_i(\pi)=\sum_{X\in \mathcal{X}}\pi(X)c_i(X)$. The optimal reward is called LP-value of $\pi$ as 
\begin{equation*}
    \text{LP}(\pi) = r(\pi)\min_i\frac{B}{c_i(\pi)},
\end{equation*}
which is linear in $T$ according to the optimization problem. So, we can write $\text{LP}(\pi)=C_{\pi}T$ where $C_{\pi}$ is a fixed constant given $\pi$ and $\mu$, and define $\text{OPT}_{\text{LP}}=\sup_{\pi}\text{LP}(\pi)=T\sup_{\pi}C_{\pi}$, which is the optimal value of the fractional relaxation of the optimal dynamic policy. It is easy to check (also proved in \cite{BKS18}) that $\text{OPT}_{\text{LP}}\geq \text{OPT}$. Also, the distribution $\pi^*\in\arg\max_{\pi}\text{LP}(\pi)$ is called LP-optimal, and holds following properties

\begin{proposition}[Claim 3.4 in \cite{BKS18}]
\textit{For any latent structure $\mu$, there exists a distribution $\pi$ over arms which is LP-optimal for $\mu$ and moreover satisfies the following three properties:}

\begin{enumerate}
    \item $c_i(\pi) \leq B/T$ for each resource $i$.
    \item $\pi$ has a support of size at most $d$.
    \item If $\pi$ has a support of size exactly 2 then for some resource $i$ we have $c_i(\pi) = B/T$.
\end{enumerate}

\noindent Such distribution $\pi$ will be called LP-perfect for $\mu$.
\end{proposition}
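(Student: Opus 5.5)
The plan is to rewrite the LP relaxation as a standard-form linear program over expected pull counts, take a basic (vertex) optimal solution, and read the three properties off the combinatorics of a vertex. This uses the fact that the time horizon is one of the $d$ resources, with deterministic per-round consumption $B/T$, and that a null arm with zero reward and zero non-time consumption is available, as in the model of Section~\ref{sec:prob_formulate}.

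\textbf{Step 1: reformulation.} For nonnegative weights $x=(x_X)_{X\in\mathcal X}$, consider
\begin{equation*}
\max_{x\geq 0}\ \sum_{X}x_X r(X)\quad\text{s.t.}\quad \sum_X x_X c_i(X)\leq B\ \ \text{for each non-time resource } i,\qquad \sum_X x_X = T.
\end{equation*}
The time resource appears here as the equality $\sum_X x_X=T$. I will show its optimal value equals $\mathrm{OPT}_{\mathrm{LP}}$.

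Given any distribution $\pi$, set $t=\min\{T,\min_i B/c_i(\pi)\}$, so that $\mathrm{LP}(\pi)=t\,r(\pi)$. Then put $x=t\pi+(T-t)\bm 1_0$, padding with the null arm. This $x$ is feasible and has objective $\mathrm{LP}(\pi)$, since the null arm adds no reward and no non-time consumption.

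Conversely, any feasible $x$ yields $\pi=x/T$. This $\pi$ satisfies $c_i(\pi)\leq B/T$ for every $i$, so $\mathrm{LP}(\pi)=T\,r(\pi)=\sum_X x_Xr(X)$. The two optimal values therefore coincide.

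\textbf{Step 2: existence of a vertex optimum.} The feasible region is a nonempty polytope: it is bounded because $\sum_X x_X=T$, and $x=T\bm 1_0$ is feasible. The number of arms is finite, so an optimum is attained at a basic feasible solution $x^*$. Let $\pi=x^*/T$.

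By Step 1, $\pi$ is LP-optimal. Property 1 holds by feasibility of $x^*$, since $c_i(\pi)\leq B/T$ for every resource including time.

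\textbf{Step 3: support size.} The polytope is cut out by $d$ general constraints: $d-1$ resource inequalities plus the time equality. At a vertex, the columns of the nonzero variables must be linearly independent within the system of active general constraints. Hence the number of nonzero coordinates is at most the number of active general constraints, which is at most $d$. This gives Property 2.

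\textbf{Step 4: the support-two case.} If the support of $\pi$ has size exactly $2$, the vertex condition requires at least two linearly independent active general constraints. Only one of them can be the time equality, so some non-time resource $i$ satisfies $\sum_X x^*_X c_i(X)=B$, that is, $c_i(\pi)=B/T$. This gives Property 3.

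\textbf{Main obstacle.} I expect the main care to be needed in Step 1 and in the vertex argument. First, the reduction must handle the degenerate case $c_i(\pi)=0$, where $B/c_i(\pi)$ is interpreted as $+\infty$; the cap at $T$ coming from the time resource handles this. Second, the reduction relies on the null arm to make the time constraint an equality without changing the value. If the null arm should not be counted in the support, I would first argue with the time constraint as the inequality $\sum_X x_X\leq T$, where it may be slack. The vertex count still gives at most $d$ nonzero variables, and in the support-two case two constraints must be active, at least one of them a genuine resource constraint.
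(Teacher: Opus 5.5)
Your argument is correct and is essentially the standard proof behind Claim 3.4 in \cite{BKS18}, which the paper imports without proof: pass to the LP over expected pull counts, take a basic optimal solution, and read Properties 2 and 3 off the count of active constraints, with the null arm supplying feasibility, the equality $\sum_X x_X=T$, and hence Property 1. Since time is one of the $d$ resources with $c_{\text{time}}(\pi)=B/T$ for every $\pi$, Property 3 as stated holds automatically, so your Step 4 proves the stronger fact that some non-time resource is tight; and in your inequality-constraint fallback you would still need to pad $x^*/T$ with the null arm to keep Property 1, which costs nothing because a slack time constraint leaves at most $d-1$ active constraints, so the padded support is still at most $d$.
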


We also follow the same high-probability event given in \cite{BKS18} as
\begin{proposition}[Theorem 2.1 in \cite{BKS18}]
    With confidence radius defined to be
    \begin{equation*}
        \text{rad}(\nu,N) = \sqrt{\frac{C_{\text{rad}}\nu}{N}} + \frac{C_{\text{rad}}}{N},
    \end{equation*}
    with $C_{\text{rad}}=\Theta(\log(dT|\mathcal{X}|))$. Then, let $\hat{\nu}$ be the average of $N$ independent samples from a distribution with expectation $\nu$, then
    \begin{equation*}
        \Pr \left[ |\nu - \hat{\nu}| \leq \text{rad}(\hat{\nu}, N) \leq 3 \text{rad}(\nu, N) \right] \geq 1 - e^{-\Omega(C_{\text{rad}})}, \quad \text{for each } C_{\text{rad}} > 0.
    \end{equation*}
\end{proposition}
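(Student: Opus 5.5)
This is a purely statistical statement about i.i.d.\ $[0,1]$-valued samples; the bandit structure plays no role. I would derive it from a Bernstein-type multiplicative concentration bound and then convert between the true mean $\nu$ and the empirical mean $\hat\nu$. Throughout I fix a large absolute constant $K$ and set $c:=C_{\text{rad}}/K$.

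\textbf{Step 1 (concentration around $\nu$).} Since the samples lie in $[0,1]$, their variance is at most $\nu(1-\nu)\le\nu$. Bernstein's inequality then gives, with probability at least $1-2e^{-c}$,
\begin{equation*}
|\hat\nu-\nu|\le\sqrt{\frac{2c\,\nu}{N}}+\frac{2c}{3N}.
\end{equation*}
Call this event $\mathcal G$. Because $c=C_{\text{rad}}/K$, we have $\Pr(\mathcal G)\ge 1-e^{-\Omega(C_{\text{rad}})}$, which is the claimed probability. All remaining steps are deterministic consequences of $\mathcal G$.

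\textbf{Step 2 (comparing $\nu$ and $\hat\nu$).} On $\mathcal G$, view the inequality as a quadratic in $\sqrt\nu$ and use $\sqrt{2c\nu/N}\le \nu/2+c/N$ (AM--GM). This yields two-sided comparisons
\begin{equation*}
\nu\le 2\hat\nu+O\!\left(\frac{c}{N}\right),\qquad \hat\nu\le 2\nu+O\!\left(\frac{c}{N}\right),
\end{equation*}
with absolute constants in the $O(\cdot)$.

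\textbf{Step 3 (the two inequalities).}
\begin{itemize}
\item For $|\nu-\hat\nu|\le \text{rad}(\hat\nu,N)$: substitute the first comparison of Step 2 into the bound of Step 1. This gives $|\hat\nu-\nu|\le \sqrt{O(c)\hat\nu/N}+O(c/N)$. Taking $K$ large makes $O(c)\le C_{\text{rad}}$, so the right side is at most $\sqrt{C_{\text{rad}}\hat\nu/N}+C_{\text{rad}}/N=\text{rad}(\hat\nu,N)$.
\item For $\text{rad}(\hat\nu,N)\le 3\,\text{rad}(\nu,N)$: use the second comparison together with $\sqrt{x+y}\le\sqrt x+\sqrt y$. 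This gives
\begin{equation*}
\sqrt{C_{\text{rad}}\hat\nu/N}\le \sqrt2\,\sqrt{C_{\text{rad}}\nu/N}+\sqrt{C_{\text{rad}}\,O(c)}/N.
\end{equation*}
For $K$ large the last term is at most $C_{\text{rad}}/N$. Adding the additive term $C_{\text{rad}}/N$ then gives at most $\sqrt2\,\sqrt{C_{\text{rad}}\nu/N}+2C_{\text{rad}}/N\le 3\,\text{rad}(\nu,N)$.
\end{itemize}

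\textbf{Main obstacle.} The delicate part is the constant bookkeeping in Step 2 in the small-$\nu$ regime $\nu\lesssim C_{\text{rad}}/N$, where the additive term dominates and the multiplicative comparison degenerates. I would handle it by splitting into the cases $\nu\ge C_{\text{rad}}/N$ and $\nu< C_{\text{rad}}/N$. In the second case both radii are $\Theta(C_{\text{rad}}/N)$, and $\hat\nu=O(C_{\text{rad}}/N)$ on $\mathcal G$, so both inequalities follow directly.
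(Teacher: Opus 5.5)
Your proof is correct and takes a different route from the paper only in the sense that the paper gives no proof at all. It imports the statement from \cite{BKS18} as a black box, so your proposal is a self-contained derivation rather than a reconstruction of an in-paper argument. What your route buys is that Bernstein's inequality with $\sigma^2\le\nu(1-\nu)\le\nu$ yields the mixed deviation $\sqrt{2c\nu/N}+2c/(3N)$ in a single step.

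Your Step~2 comparisons work out to $\nu\le 2\hat\nu+\tfrac{10c}{3N}$ and $\hat\nu\le\tfrac32\nu+\tfrac{5c}{3N}$. Both hold for every $\nu\ge 0$, because the additive $O(c/N)$ terms already absorb the small-mean regime. So the case split in your ``Main obstacle'' paragraph is unnecessary, and Step~3 goes through uniformly. Taking $K=4$ suffices for both bullets:
\begin{itemize}
\item the first bound becomes $\sqrt{C_{\text{rad}}\hat\nu/N}+0.82\,C_{\text{rad}}/N$;
\item the second becomes $1.23\sqrt{C_{\text{rad}}\nu/N}+1.65\,C_{\text{rad}}/N$.
\end{itemize}

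Two minor remarks. First, the $[0,1]$-valuedness you assume is genuinely needed, and it is only implicit in the paper's restatement; it comes from the BwK outcome vectors lying in $[0,1]^{d+1}$. Second, $1-2e^{-C_{\text{rad}}/K}$ has the form $1-e^{-\Omega(C_{\text{rad}})}$ only once $C_{\text{rad}}$ exceeds an absolute constant. This limitation is inherent to the statement itself: its literal ``for each $C_{\text{rad}}>0$'' reading already fails for a fair coin with $N=1$ and $C_{\text{rad}}<0.1$.
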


We recall \texttt{BalancedExploration} given in \cite{BKS18}, which is summarized in Algorithm~\ref{alg:balanced-exploration}.  $\Delta_q$'s are sets of all distribution $\pi$'s in each phase $q$ that are LP-perfect for some latent structure $\mu'\in I_q$. For phase $q=0$, let $I_0$ be the set of feasible latent structures; for each $q\geq1$, the confidence interval $I_q$ is the set of all $\mu'\in I_{q-1}$ that satisfy
\begin{equation*}
\begin{aligned}
|r(x,\mu')-\hat{r}_q(x)| &\leq rad(\hat{r}_q(x),N_q(x)),\\
|c_i(x,\mu')-\hat{c}_{q,i}(x)| &\leq rad(\hat{c}_{q,i}(x),N_q(x)),\quad\forall i\in[d].
\end{aligned}
\end{equation*}

\begin{algorithm}
\caption{\texttt{BalancedExploration}}
\label{alg:balanced-exploration}
\begin{algorithmic}
\For{each phase $q = 0, 1, 2, \ldots$}
    \State Recompute the set $\Delta_q$ of potentially perfect distributions over arms.
    \For{each $X\in \mathcal{X}$, over the next $|\mathcal{X}|$ rounds}
        \State Pick any distribution $\pi_{q,X} \in \Delta_q$ such that $\pi_{q,X}(X) \geq \frac{1}{2} \max_{\pi' \in \Delta_q} \pi'(X)$.
        \State Pull an arm as an independent sample from $\pi_{q,X}$.
        \State \textbf{Halt} if time horizon is met or one of the resources is exhausted.
    \EndFor
\EndFor
\end{algorithmic}
\end{algorithm}

For this standalone subsection, the expected strong reward is $\text{REW}_S=\E[\sum_{t=1}^T\min\{r(\pi_t),\text{OPT}_{\text{LP}}/T\}]$.
\begin{theorem}
Algorithm \texttt{BalancedExploration} given in \cite{BKS18} has an expected strong regret bound the same as its standard regret bound in expectation to be
\begin{equation*}
    \text{OPT}_{\text{LP}} - \text{REW}_S \leq C_1 T^{1/2}\log(T)^2,
\end{equation*}
where $C_1=O((1+b^{-1/2})\sqrt{d|\mathcal{X}|}\sup_\pi C_{\pi})$ is a constant with respect to $T$ for fixed $d,b$.
\label{thm:strong_BwK}
\end{theorem}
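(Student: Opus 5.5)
The plan is to reuse the structure of the original analysis of \texttt{BalancedExploration} in \cite{BKS18}, and to modify only the step where per-round rewards are aggregated. In that analysis the standard regret bound is obtained in three steps. First, on the clean event of Theorem~2.1 in \cite{BKS18}, which has probability at least $1-e^{-\Omega(C_{\text{rad}})}$ and so contributes $O(1)$ to the expectation, every $\mu'\in I_q$ is consistent with the data. Second, the distribution $\pi_t$ chosen in phase $q$ is LP-perfect for some $\mu'\in I_q$; hence its \emph{optimistic} value satisfies $\mathrm{LP}_{\mu'}(\pi_t)/T\geq \mathrm{OPT}_{\mathrm{LP}}/T$. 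Third, the gap between the optimistic and true reward/consumption of $\pi_t$ is controlled by the confidence radii $\sum_X \pi_t(X)\,\mathrm{rad}(\cdot,N_t(X))$. Summing these radii over $t$ with the balanced-exploration property, $\pi_t(X)\geq\frac12\max_{\pi'\in\Delta_q}\pi'(X)$, gives $O(\sqrt{C_{\text{rad}}\,|\mathcal X|\,T\,\mathrm{OPT}_{\mathrm{LP}}/T}+C_{\text{rad}}|\mathcal X|\log T)$. I will re-derive these per-round inequalities rather than the final aggregated bound.

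The key step is a pointwise statement. On the clean event, for every $t$,
\[
\max\{\mathrm{OPT}_{\mathrm{LP}}/T-r(\pi_t),0\}\;\le\;\kappa\sum_{X}\pi_t(X)\bigl(\mathrm{rad}_r(X,t)+\textstyle\sum_i\mathrm{rad}_{c_i}(X,t)\bigr),
\]
with $\kappa=O(\sup_\pi C_\pi/b)$ or similar. To show this, write $\mathrm{OPT}_{\mathrm{LP}}/T\le C_{\pi_t}(\mu')=r(\pi_t,\mu')\min_i\frac{B/T}{c_i(\pi_t,\mu')}$. There are two cases.
\begin{itemize}
\item If $\pi_t$ satisfies $c_i(\pi_t,\mu')\le B/T$ for all $i$, which holds by property 1 of LP-perfectness, and the time resource makes the minimum equal to $1$, then $C_{\pi_t}(\mu')=r(\pi_t,\mu')\le r(\pi_t)+\sum_X\pi_t(X)\mathrm{rad}_r$. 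This is the needed bound directly.
\item Otherwise, the ratio factor in $C_{\pi_t}$ is at most $1$ under $\mu'$. Since the time constraint is normalized as one of the resources, the factor $\min_i\frac{B/T}{c_i}\le 1$ always, so $C_{\pi_t}(\mu')\le r(\pi_t,\mu')$ and the same bound holds.
\end{itemize}
So in fact, in the normalized LP, $\mathrm{OPT}_{\mathrm{LP}}/T\le r(\pi_t,\mu')$ for every chosen $\pi_t$, and no consumption radii are needed for the reward deficit.

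This pointwise bound is the main point. The deficit is bounded per step by a nonnegative radius sum, so taking positive parts costs nothing: the sum of positive parts is bounded by the same sum of radii that \cite{BKS18} already bounds. I expect the main obstacle to be verifying that the chosen $\pi_t$ really has optimistic value $\ge\mathrm{OPT}_{\mathrm{LP}}/T$. This requires $\mu\in I_q$ on the clean event, so that $\sup_\pi C_\pi(\mu')\ge\sup_\pi C_\pi(\mu)$ for the $\mu'$ that makes $\pi_t$ LP-perfect. That inequality is false in general: $\pi_t$ is only perfect for \emph{some} $\mu'\in I_q$, not the most optimistic one. To handle this, I will instead compare with $\pi^*$. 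Since $\pi^*$ is perfect for $\mu\in I_q$, we have $\pi^*\in\Delta_q$, and \cite{BKS18} bounds the gap between any two members of $\Delta_q$ via the width of $I_q$, i.e.\ $C_{\pi^*}(\mu)-C_{\pi_t}(\mu)\lesssim$ radius sums of both $\pi^*$ and $\pi_t$ weighted by $\max_{\pi'\in\Delta_q}\pi'(X)\le2\pi_t(X)$. I will redo that comparison with the explicit factor $1/b$ arising from $\min_i B/c_i$, which is where $b^{-1/2}$ enters $C_1$.

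Finally, I will sum the radii over $t\le T$ using $\sum_t\pi_t(X)/\sqrt{N_t(X)}\lesssim\sqrt{N_T(X)}$ up to martingale concentration, followed by Cauchy--Schwarz over arms. I then add $T\cdot e^{-\Omega(C_{\text{rad}})}\le 1$ for the bad event, and also account for early halting: after a resource is exhausted the per-step deficit is $\mathrm{OPT}_{\mathrm{LP}}/T$. Halting before $T$ happens only if realized consumption exceeds $B$. On the clean event this costs at most the consumption radii times $T/B\le 1/b$, exactly as in the stopping-time argument of \cite{BKS18}. Together these give $C_1T^{1/2}\log(T)^2$ with the stated $C_1$.
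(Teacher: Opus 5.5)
Your architecture is the paper's. With time normalized as a resource of per-round consumption $B/T$, one has $\min_i (B/T)/c_i(\pi)\le 1$, hence $\mathrm{LP}(\pi)\le T\,r(\pi)$. So a per-distribution bound $\mathrm{LP}(\pi_{q,X})\ge\mathrm{OPT}_{\mathrm{LP}}-O(\Phi_q)$ becomes a per-step lower bound on $r(\pi_{q,X})$ that survives truncation at $\mathrm{OPT}_{\mathrm{LP}}/T$. The clean event and the early-halting argument (Lemma 4.17 with Corollary 4.16 of \cite{BKS18}) also match the paper. You were right to drop the first attempt $\mathrm{OPT}_{\mathrm{LP}}/T\le C_{\pi_t}(\mu')$ and compare with $\pi^*\in\Delta_q$ instead. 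That comparison is exactly Claim 4.13 of \cite{BKS18}, which the paper cites rather than re-derives.

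The step that fails is your re-derivation of that comparison. You charge the $\pi^*$-weighted radii to the played distribution via $\max_{\pi'\in\Delta_q}\pi'(X)\le 2\pi_t(X)$ for every arm $X$, then sum with $\sum_t\pi_t(X)/\sqrt{N_t(X)}\lesssim\sqrt{N_T(X)}$ and Cauchy--Schwarz. Balanced exploration only guarantees $\pi_{q,X}(X)\ge\frac12\max_{\pi'\in\Delta_q}\pi'(X)$ for the single arm $X$ designated in that round. The distribution played at a given step may put zero mass on the support of $\pi^*$, so the $\pi^*$-terms are not controlled by a self-normalized sum over the played weights.

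What balanced exploration actually gives is a phase-level sample count. Because the sets $\Delta_q$ are nested, by phase $q$ every arm has $N_q(X)\gtrsim\frac q2\max_{\pi'\in\Delta_q}\pi'(X)$, up to concentration. Combined with support size at most $d$, this yields $\sum_X\pi'(X)\,\mathrm{rad}(\nu,N_q(X))\lesssim\mathrm{rad}(\nu,q/d)$ uniformly over $\pi'\in\Delta_q$. That is where $\Phi_q$ and the factor $\sqrt d$ in $C_1$ come from. The resulting per-step bound depends only on the phase index and is summed over complete phases via Claim 4.6(fg), as in the paper.

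A related point about constants: putting a multiplier $\kappa=O(\sup_\pi C_\pi/b)$ in front of $\sum_i\mathrm{rad}_{c_i}$ would give $d/b$-type constants. The stated $(1+b^{-1/2})\sqrt{d|\mathcal X|}$ needs the worst-resource form and the $\sqrt{\nu}$-dependence in the term $\mathrm{OPT}_{\mathrm{LP}}(T/B)\,\mathrm{rad}(B/T,q/d)$ of $\Phi_q$.
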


\begin{proof}
We first work on a clean execution in the sense of \cite{BKS18}. The core of our proof is based on the fact that LP value of each phase $q$ designated for arm $x$ can be lower bounded by $\text{OPT}_{\text{LP}}$ as 
\begin{proposition}[Claim 4.13 in \cite{BKS18}]
    On clean execution events, for every distribution $\pi_{q,X}$ chosen by the algorithm in phase $q$,
    \begin{equation*}
    \text{LP}(\pi_{q,X}) \geq \text{OPT}_{\text{LP}} - O\left(O(T) \, \text{rad} \left(\text{OPT}_{\text{LP}} / T, \frac{q}{d} \right) + O \left( \text{OPT}_{\text{LP}} \frac{T}{B} \right) \text{rad} \left( \frac{B}{T}, \frac{q}{d} \right)\right),
    \end{equation*}
    \label{Proposition:BwK_Dpx}
\end{proposition}

We then introduce some notations to simplify our proof: $\Phi_q := O(T) \, \text{rad} \left(\text{OPT}_{\text{LP}} / T, \frac{q}{d} \right) + O \left( \text{OPT}_{\text{LP}} \frac{T}{B} \right) \text{rad} \left( \frac{B}{T}, \frac{q}{d} \right)$, \ie, Proposition~\ref{Proposition:BwK_Dpx} implies $\text{LP}(\pi_{q,X}) \geq \text{OPT}_{\text{LP}} - O(\Phi_q)$. Let $p$ be the last phase in the execution and $T_0$ be the stopping time. Letting $m:=|\mathcal{X}|$, we have $pm<T_0\leq(p+1)m$.

For strong reward $\text{REW}_S$, it can be bounded by neglecting the final incomplete phase and only focusing on the first $p$ rounds as
\begin{equation}
\sum_{t=1}^T\min\left\{r(\pi_t),\frac{1}{T}\text{OPT}_{\text{LP}}\right\} \geq \sum_{q<p} \sum_{X\in \mathcal{X}}\min\left\{r(\pi_{q,X}), \frac{1}{T}\text{OPT}_{\text{LP}} \right\}
\label{eq:BwK_REW_def}
\end{equation}
where $\E_{X'\sim \pi_{q,X}}\left[r(X')\right]=\sum_{X'\in \mathcal{X}}\pi_{q,X}(X')r(X')=r(\pi_{q,X})$ and $T_0 \leq T$ is embedded in the second inequality.

Recall that $\text{LP}(\pi_{q,X})=r(\pi_{q,X})\min_iB/c_i(\pi_{q,X})\leq r(\pi_{q,X})\cdot T$, then
\begin{equation*}
    r(\pi_{q,X})\geq\frac{1}{T}\text{LP}(\pi_{q,X})\geq\frac{1}{T}(\text{OPT}_{\text{LP}} - O(\Phi_q)),
\end{equation*}
where the second inequality is based on Proposition~\ref{Proposition:BwK_Dpx}. Note that RHS is irrelevant to which arm $X$ is designated for. We plug this into Equation~\ref{eq:BwK_REW_def}, and get
\begin{equation}
\begin{aligned}
    \sum_{t=1}^T\min\left\{r(\pi_t),\frac{1}{T}\text{OPT}_{\text{LP}}\right\} &\geq \sum_{q<p} \sum_{X\in \mathcal{X}}\min\left\{\frac{1}{T}(\text{OPT}_{\text{LP}} - O(\Phi_q)), \frac{1}{T}\text{OPT}_{\text{LP}} \right\}\\
    &= m\sum_{q<p} \left(\frac{1}{T}\text{OPT}_{\text{LP}} - \frac{1}{T}O(\Phi_q)\right)\\
    &\geq \frac{pm}{T}(\text{OPT}_{\text{LP}} - O(\Phi_p\log p))
    \label{eq:BwK_REW_1}
\end{aligned}
\end{equation}
where the last inequality follows from Claim 4.6(fg) in \cite{BKS18}, which is used to average the confidence radii in $\Phi_q$ over all phases $q<p$.

Till now, we get exactly the same lower bound expression on $\text{REW}_S$ as the one in proving standard reward in Equation (22), in \cite{BKS18}. As the difference between strong and standard reward has nothing to do with the consumption part, we directly use the bound on $pm/T$ given in \cite{BKS18} as
\begin{proposition}[Proof of Lemma 4.17 with Corollary 4.16 in \cite{BKS18}]
    The number of complete phases $p$ can be lower bounded by
    \begin{equation*}
        \frac{pm}{T} \geq 1 - O \left( \frac{pm \log p}{B} \right) \text{rad} \left( \frac{B}{T}, \frac{p}{d}\right).
    \end{equation*}
    \label{Proposition:BwK_pm_T}
\end{proposition}

Plug Proposition~\ref{Proposition:BwK_pm_T} into Equation~\eqref{eq:BwK_REW_1}, we conclude:
\begin{equation*}
\begin{aligned}
\sum_{t=1}^T\min\left\{r(\pi_t),\frac{1}{T}\text{OPT}_{\text{LP}}\right\} &\geq \text{OPT}_{\text{LP}} - O \left( \frac{pm \log p}{B} \right) \text{rad} \left( \frac{B}{T}, \frac{p}{d} \right) \text{OPT}_{\text{LP}} - O \left( \frac{pm \log p}{T} \right) \cdot O(\Phi_p) \\
&\geq \text{OPT}_{\text{LP}} - O \left( \frac{pm \log p}{T} \right) \left( O(\Phi_p) + \frac{T}{B} \text{rad} \left( \frac{B}{T}, \frac{p}{d} \right) \text{OPT}_{\text{LP}} \right) \\
&\geq \text{OPT}_{\text{LP}} - \frac{pm \log p}{T} O(\Phi_p),
\end{aligned}
\end{equation*}
where the final inequality is because $\Phi_q = O(T) \, \text{rad} \left(\text{OPT}_{\text{LP}} / T, \frac{q}{d} \right) + O \left( \text{OPT}_{\text{LP}} \frac{T}{B} \right) \text{rad} \left( \frac{B}{T}, \frac{q}{d} \right)\leq O(\Phi_q)$.

Note that $p \Phi_p \log p$ is increasing in $p$ (by definition of $\Phi_p$), and plug in the upper bound $p \leq \frac{T}{m}$, and we have
\begin{equation*}
\sum_{t=1}^T\min\left\{r(\pi_t),\frac{1}{T}\text{OPT}_{\text{LP}}\right\} \geq \text{OPT}_{\text{LP}} - O \left( \log \frac{T}{|\mathcal{X}|} \right) \Phi_{T / |\mathcal{X}|}.
\end{equation*}

Plugging into the definition of $\Phi_{T/|\mathcal{X}|}$, on the clean event we obtain
\begin{equation}
\begin{aligned}
\text{OPT}_{\text{LP}} - \sum_{t=1}^T \min\left\{r(\pi_t),\frac{1}{T}\text{OPT}_{\text{LP}}\right\}
&\leq O(\log(T)\log(T/m)) \left(\sqrt{dm\,\text{OPT}_{\text{LP}}} + \text{OPT}_{\text{LP}}\sqrt{\frac{dm}{B}} \right)\\
&\leq O\left(\sqrt{dm}\left(1+b^{-1/2}\right) T^{1/2}\log(T)^2 \right),
\end{aligned}
\end{equation}
where the second inequality follows from $\text{OPT}_{\text{LP}}\leq T$ and $B\geq bT$. Finally, the clean event fails with probability at most $T^{-2}$, and the strong regret is always at most
$\text{OPT}_{\text{LP}}\leq T$. Therefore, taking expectation gives
\begin{equation}
\begin{aligned}
\text{OPT}_{\text{LP}}-\text{REW}_S 
&\leq O\left(\sqrt{dm}\left(1+b^{-1/2}\right) T^{1/2}\log(T)^2 \right) +\frac{1}{T}\\
&= O\left(\sqrt{dm}\left(1+b^{-1/2}\right) T^{1/2}\log(T)^2 \right).
\end{aligned}
\end{equation}
\end{proof}

\subsection{Dynamic Pricing with Resource Constraints}\label{sec:discrete_dpa_strong}

We consider a discrete-time version of bandit setting given in \cite{wang2014close}. Similar to \cite{wang2014close}, we consider a seller selling a single product with a fixed inventory at beginning to be $B$ within a time budget $T$. Instead of customers arriving according to a Poisson process with an instantaneous demand rate $\lambda_t$ at any time $t\in[0,T]$ in \cite{wang2014close}, we consider customers arriving according to a Bernoulli process with an arriving probability $\lambda_t$ at each time step $t\in[T]$. 

The same as \cite{wang2014close}, the arriving probability is solely determined by the price offered at $t$, namely we can express $\lambda_t=\lambda(p_t)$ where $p_t$ is the offered price at time $t$. Then, the revenue rate function can be defined as $r_t = r(p_t) = p_t\lambda(p_t)$. We also denote the inverse function of $\lambda$ as $p=\gamma(\lambda)$, then revenue can also be expressed in $\lambda$ as $r(\lambda)=\lambda\gamma(\lambda)$. We assume feasible prices exist in an interval $[\underline{p},\bar{p}]$, $\lambda(p)$ is decreasing in $p$, and $r$ is concave in $\lambda$. The stop price $p_{\infty}$ corresponds to stopping selling, namely $\lambda(p_{\infty})=0$. We also follow assumptions on $\lambda(p)$ and $r(\lambda)$ in \cite{wang2014close} as
\begin{proposition}[Assumption 1 in \cite{wang2014close}]
    For some positive constants $M$, $K$, $m_L$ and $m_U$,
    \begin{enumerate}
        \item Boundedness: $|\lambda(p)| \leq M$ for all $p \in [\underline{p}, \overline{p}]$;
        \item \textit{Lipschitz continuity:} $\lambda(p)$ and $r(\lambda(p))$ are Lipschitz continuous with respect to $p$ with a factor $K$. Also, the inverse demand function $p = \gamma(\lambda)$ is Lipschitz continuous in $\lambda$ with a factor $K$;
        \item \textit{Strict concavity and differentiability:} $r''(\lambda)$ exists and $-m_L \leq r''(\lambda) \leq -m_U < 0$ for all $\lambda$ in the range of $\lambda(p)$ for $p \in [\underline{p}, \overline{p}]$.
    \end{enumerate}
    \label{proposition:CtG_assumption}
\end{proposition}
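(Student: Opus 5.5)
This statement is not a claim derived inside the paper. It restates the standing regularity hypotheses of \cite{wang2014close} (Assumption 1 there), which the discrete \texttt{DPA} analysis inherits. So there is nothing to deduce from earlier results: the conditions are imposed on the demand model. My plan is therefore to treat the proposition as a hypothesis and to do two things with it: show that it is consistent and non-vacuous on the instances the paper actually uses, and record precisely where each of its three items will be invoked.

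\textbf{Checking the hypotheses on the paper's instance.} I will verify the three items for the linear demand of Section~\ref{sec:synthetic_experiment}, namely $\lambda(p)=1-0.1p$ on $[\underline p,\bar p]\subseteq[0,10)$.
\begin{itemize}
\item \emph{Boundedness} holds with $M=1$, since $0\le\lambda(p)\le 1$ on this interval.
\item \emph{Lipschitz continuity.}
\begin{itemize}
\item The demand $\lambda(p)$ is Lipschitz with constant $0.1$.
\item The inverse demand is $\gamma(\lambda)=10(1-\lambda)$, which is Lipschitz with constant $10$.
\item The revenue is $r(\lambda(p))=p-0.1p^2$, whose derivative is bounded by $1$ on $[0,10]$, so it is Lipschitz with constant $1$.
\end{itemize}
Taking $K=10$ covers all three requirements.
\item \emph{Strict concavity and differentiability.} In the demand variable the revenue is $r(\lambda)=\lambda\gamma(\lambda)=10\lambda-10\lambda^2$. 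Hence $r''(\lambda)=-20$, and the item holds with $m_L=m_U=20$.
\end{itemize}
The same computation applies to any linear demand $\lambda(p)=a-cp$ with $a,c>0$ on an interval where $\lambda>0$. There $r''\equiv -2/c$, so the curvature constants are explicit, and the Lipschitz constants are $c$ and $1/c$, plus a bound for the derivative of revenue of the form $\max|a-2cp|$. More generally, for a $C^2$ demand the three items can be checked directly as follows.
\begin{itemize}
\item Boundedness holds because a continuous function on a compact interval is bounded.
\item Lipschitz continuity follows from bounded derivatives of $\lambda$, $r\circ\lambda$ and $\gamma$; the last requires $|\lambda'|$ bounded away from zero.
\item Strict concavity means bounding $r''(\lambda)=2\gamma'(\lambda)+\lambda\gamma''(\lambda)$ above and below by negative constants.
\end{itemize}

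\textbf{Where each item is used.} In the strong-regret argument for discrete \texttt{DPA}, the items enter as follows.
\begin{itemize}
\item Item 1 controls the Bernoulli variance and the confidence widths.
\item Item 2 turns price-grid discretization error and price-estimation error into revenue and demand error linearly.
\item Item 3 gives the quadratic-in-price growth of the per-step loss around $p^D$. This is what allows the per-step deficit $\max\{Z^*-r(p_t),0\}$ to be bounded by squared estimation errors, and hence yields Condition~\ref{condition:strong_regret_bound}.
\end{itemize}

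\textbf{Main obstacle.} The real difficulty is not in this proposition but downstream: replacing the Poisson arrivals of \cite{wang2014close} with Bernoulli arrivals without losing these constants. For the proposition itself, the only delicate point is the Lipschitz condition on $\gamma$, which needs $|\lambda'|$ bounded below. I would handle this by restricting $[\underline p,\bar p]$ to a range where demand is strictly decreasing with slope bounded away from zero.
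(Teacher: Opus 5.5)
Your reading matches the paper: it gives no proof of this statement and simply imports Assumption~1 of \cite{wang2014close} as a standing hypothesis on the demand model for the \texttt{Discrete DPA} analysis. So treating it as a hypothesis, together with your correct non-vacuity check on linear demand ($M=1$, $K=10$, $r(\lambda)=10\lambda-10\lambda^2$, $m_L=m_U=20$), is exactly appropriate.

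Two small corrections to your usage notes. First, the quadratic-in-width deficit bound appears only in the $p^u$ branch; after switching to Step~2, the paper's bound $g(p_{i,j}^c)\geq p^D\lambda(p^D)-C(\bar p_i^c-\underline p_i^c)$ is linear in the interval width, and it is the Lipschitz item rather than the curvature item that drives it. Second, item~1 holds automatically with $M=1$ once arrivals are Bernoulli, because $\lambda(p)$ is then a probability.
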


In order to evaluate the algorithm, \cite{wang2014close} continues to use the expected revenue introduced by \cite{besbes2009dynamic}, defined as $\int_0^Tp(t)\lambda(p(t))\mathrm{d}t$ subject to the inventory constraint $\int_0^T\lambda(p(t))\mathrm{d}t\leq B$. This expected revenue is compared to a full-information deterministic optimization problem where demand $\lambda$ is known, and the optimal (deterministic) solution is given as
\begin{proposition}[According to \cite{wang2014close}, \cite{besbes2009dynamic}]
    \begin{equation*}
        p(t) = p^D = \max\{p^u, p^c \}
    \end{equation*}
    where $p^u = \arg\max_{p\in[\underline{p},\bar{p}]}\{r(\lambda(p)) \}$ and $p^c = \arg\min_{p\in[\underline{p},\bar{p}]}|\lambda(p) - B/T |$
\end{proposition}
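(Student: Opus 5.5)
We will prove the proposition by reparametrizing the fluid problem in terms of the demand rate and then applying Jensen's inequality to the concave revenue function. The fluid problem of \cite{besbes2009dynamic} is
\[
\max_{p(\cdot)} \int_0^T r(\lambda(p(t)))\,\mathrm{d}t \quad\text{s.t.}\quad \int_0^T \lambda(p(t))\,\mathrm{d}t\le B .
\]
Since $\lambda(\cdot)$ is continuous and decreasing on $[\underline p,\bar p]$ (Proposition~\ref{proposition:CtG_assumption}), its range is an interval $\Lambda=[\lambda(\bar p),\lambda(\underline p)]$. Because $\gamma$ is the inverse of $\lambda$, choosing a price path is equivalent to choosing a rate path $\lambda(t)\in\Lambda$. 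We work with $r(\lambda)=\lambda\gamma(\lambda)$, which is strictly concave on $\Lambda$ by part 3 of Proposition~\ref{proposition:CtG_assumption}.

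\textbf{Step 1 (reduction to a constant rate).} Take any feasible path and let $\bar\lambda=\frac1T\int_0^T\lambda(t)\,\mathrm{d}t$. Then $\bar\lambda\in\Lambda$, and the inventory constraint gives $\bar\lambda\le B/T$. By Jensen's inequality for the concave $r$,
\[
\int_0^T r(\lambda(t))\,\mathrm{d}t\le T\,r(\bar\lambda).
\]
The constant path $\lambda(t)\equiv\bar\lambda$ is feasible and attains $T\,r(\bar\lambda)$. Hence the fluid problem reduces to the scalar problem $\max\{r(\lambda):\lambda\in\Lambda,\ \lambda\le B/T\}$, and some constant price is optimal.

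\textbf{Step 2 (solving the scalar problem).} Let $\lambda^u=\lambda(p^u)$ be the unconstrained maximizer of $r$ on $\Lambda$; it is unique by strict concavity. There are two cases.
\begin{itemize}
\item If $\lambda^u\le B/T$, then $\lambda^u$ is feasible and hence optimal. Moreover $\lambda(p^c)$ is the point of $\Lambda$ closest to $B/T$, so $\lambda(p^c)\ge\lambda^u$. Since $\lambda$ is decreasing, this gives $p^c\le p^u$, and therefore $p^D=\max\{p^u,p^c\}=p^u$.
\item If $\lambda^u>B/T$, then $r$ is nondecreasing on $\Lambda\cap(-\infty,B/T]$ by concavity. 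The optimum is therefore at the largest feasible rate, which is $\lambda(p^c)$. In this case $\lambda(p^c)\le B/T<\lambda^u$, so $p^c>p^u$, and again $p^D=\max\{p^u,p^c\}$.
\end{itemize}
Translating the optimal rate back through $\gamma$ gives the claimed price $p^D$.

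\textbf{Main obstacle.} The delicate point is the boundary of $\Lambda$ and the stop price $p_\infty$, for which $\lambda=0$ typically lies outside $\Lambda$.
\begin{itemize}
\item If $B/T\ge\lambda(\bar p)$, then $p^c$ attains $B/T$ whenever $B/T\in\Lambda$, and otherwise $B/T>\lambda(\underline p)$ and $p^c=\underline p$; in both situations the case analysis above goes through.
\item Allowing $p_\infty$ on part of the horizon adds rate $0$. In the case $\lambda^u\le B/T$, constant $p^u$ dominates any such mixture: by Step 1 the selling portion can be replaced by its average rate, and $r(0)=0\le r(\lambda^u)$. In the case $\lambda^u>B/T\ge\lambda(\bar p)$, I would first try the following argument. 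Suppose a fraction $\theta$ of the horizon is spent at $\lambda=0$, with average rate $\lambda'$ on the remainder. Using the concavity of $r$ on $[0,\lambda(\underline p)]$ (extended through the origin by $r(0)=0$) together with monotonicity, compare $(1-\theta)\,r(\lambda')$ with $r\big((1-\theta)\lambda'\big)\le r(B/T)$.
\item If $B/T<\lambda(\bar p)$, the constant price $p^c=\bar p$ is infeasible, and the optimal policy instead sells at some price and stops. The proposition implicitly excludes this regime, so I would state it as a standing assumption, consistent with Assumption~\ref{assum:linear-resource} and the setting of \cite{wang2014close}.
\end{itemize}
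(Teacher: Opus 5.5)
The paper gives no proof of this statement. It imports the result from \cite{besbes2009dynamic} and \cite{wang2014close}, which rest on the classical Gallego--van Ryzin argument for regular demand. Your proposal supplies that argument, and it is correct. Jensen's inequality for the concave $r(\lambda)$ reduces any price path to a constant rate. The same reduction applies to any distribution over prices, which is the LP form $Z^*(T,B)$ that the paper equates with $p^D\lambda(p^D)$. Your two-case scalar analysis then correctly yields $\max\{p^u,p^c\}$. In the first case $B/T\ge\lambda^u\ge\lambda(\bar p)$ holds automatically, so the boundary issue only arises in the second case.

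The $p_\infty$ step you left tentative does close. The one thing to verify is that the chord extension $\tilde r(\lambda)=\bar p\lambda$ on $[0,\lambda(\bar p)]$ preserves concavity. It does, because the right derivative of $r$ at $\lambda(\bar p)$ is $\bar p+\lambda(\bar p)\gamma'\le\bar p$. You can also bypass the extension entirely:
\begin{itemize}
\item If $\lambda'\le B/T$, then $(1-\theta)r(\lambda')\le r(\lambda')\le r(B/T)$, since $r$ is increasing below $\lambda^u$.
\item If $\lambda'>B/T$, then $(1-\theta)\lambda'\gamma(\lambda')\le (B/T)\gamma(\lambda')\le (B/T)\gamma(B/T)$, since $\gamma$ is decreasing.
\end{itemize}

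What your route buys over the citation is that it surfaces a hypothesis the paper uses silently. The identity $\mathrm{OPT}=Tp^D\lambda(p^D)$ in the Discrete DPA analysis requires $\lambda(p^D)\le B/T$, i.e.\ $B/T\ge\lambda(\bar p)$ (or $\lambda(\bar p)=0$). Two corrections to how you handle this regime.
\begin{itemize}
\item It is a genuinely additional assumption, not merely ``consistent with'' the setting: $B\ge bT$ in Assumption~\ref{assum:linear-resource} does not imply $B/T\ge\lambda(\bar p)$.
\item The regime $B/T<\lambda(\bar p)$ is not outside the proposition under the ``post $p^D$ until stock-out'' reading that DPA's Step 3 uses. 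There $p^c=\bar p=p^D$, and $\int_0^T p(t)\lambda(p(t))\,\mathrm{d}t\le\bar p\int_0^T\lambda(p(t))\,\mathrm{d}t\le\bar pB$. This bound is attained by posting $\bar p$ until the inventory is exhausted.
\end{itemize}
So your ``sells at some price and stops'' should read ``sells at $p^D$ and stops.'' What fails in that regime is only the constant-path reading and the formula $Tp^D\lambda(p^D)$, not the identity $p^D=\max\{p^u,p^c\}$.
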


Different to \cite{wang2014close}, we define a discrete version of strong reward function for each policy $\pi$ as $\text{REW}_S^{\pi}(B,T;\lambda) = \E[\sum_{t=1}^T\min\{p(t)\lambda(p(t)), p^D\lambda({p^D})\}]$ where the expectation is over price trajectory (for a stochastic policy), and it will be compared to the optimal (strong) reward $\text{REW}_S^D(B,T;\lambda)=Tp^D\lambda(p^D)$, which is reward of applying full-information price $p^D$ at each time step $t$. While in \cite{wang2014close} the asymptotic analysis is based on $B_n=nB$ and $\lambda_n(p)=n\lambda(p)$ with large $n$ and fixed $T=1$, we consider inventory $B$ that grows linearly with total budget $T$ following Assumption~\ref{assum:linear-resource}, \ie, there exists $b$ such that $B\geq bT$, and $\lambda(p)$ remains the same. For notational simplicity, we define $\text{REW}_S := \text{REW}_{S}^\pi(B,T;\lambda)$, $\text{OPT}:=\text{REW}_{S}^D(B,T;\lambda)$, and we want to bound the regret $\text{OPT} - \text{REW}_S$. In particular, ${\rm OPT}=Tp^D\lambda(p^D)$. Therefore
\[
\text{OPT}-\text{REW}_S
=\E\!\left[\sum_{t=1}^T\max\{Z^*(T,B)-r(p(t)),0\}\right],
\]
which is exactly the expected strong regret in Condition~\ref{condition:strong_regret_bound}.

We will use and analyze the discrete version of Algorithm DPA given in \cite{wang2014close}. Algorithm DPA can be divided into three steps. In the first step, learn $p^u$ or determine $p^c>p^u$. If $p^c>p^u$, explore $p^c$ in the second step. Otherwise, skip the second step. After exploration in $p^u$ and $p^c$ (if needed), repeat the estimated best price in the final step until exhausting time budget or resources.

To be specific, in both exploration stages, price $p^u$ or $p^c$ are estimated round by round. In each round, learning period $\tau_i^u$ or $\tau_i^c$ and price confidence interval $I_i^u:=[\underline{p}_i^u,\bar{p}_i^u]$ or $I_i^c:=[\underline{p}_i^c,\bar{p}_i^c]$ are both equally divided into $\kappa_i^u$ or $\kappa_i^c$ interval. Because we propose a discrete version, the length of divided time period is rounded up as $\Delta_i^u:=\lceil T\tau_i^u/\kappa_i^u \rceil$. The left endpoints of each price intervals are repeated for corresponding learning period. Namely, at round $i$ for $j=1,2,\ldots,\kappa_i^u$ in exploring $p^u$, we set price and time endpoints to be
\begin{equation}
\begin{aligned}
    p_{i,j}^u &= \underline{p}_i^u + (j-1)\cdot\frac{\bar{p}_i^u-\underline{p}_i^u}{\kappa_i^u}\\
    t_{i,j}^u &= t_{i-1}^u + j \Delta_i^u,
    \label{eq:CtG_interval_divide}
\end{aligned}
\end{equation}
and at round $i$ for $j=1,2,\ldots,\kappa_i^c$ in exploring $p^c$, we set price and time endpoints to be
\begin{equation}
\begin{aligned}
    p_{i,j}^c &= \underline{p}_i^c + (j-1)\cdot\frac{\bar{p}_i^c-\underline{p}_i^c}{\kappa_i^c}\\
    t_{i,j}^c &= t_{i-1}^c + j \Delta_i^c + t_{i_0}^u.
    \label{eq:CtG_interval_divide_c}
\end{aligned}
\end{equation}
In each round $i$, after exploring $\kappa_i^u$ or $\kappa_i^c$ prices, the demand of $j$-th price is estimated to be
\begin{equation*}
     \hat{d}(p_{i,j}^u) = \frac{\text{total demand over } [t_{i,j-1}^u, t_{i,j}^u]}{\Delta_i^u}.
\end{equation*}

Then, estimate $p^u$ (and $p^c$), check if explore enough, and update price interval for round $i+1$. This is slightly different between exploring $p^u$ and $p^c$. In exploring $p^u$, $p^u$ and $p^c$ are estimated to be
\begin{equation}
\begin{aligned}
     \hat{p}_i^u &= \arg\max_{p\in\{p_{i,j}^u:j\in[\kappa_i^u]\}} p \hat{d}(p) \\
     \hat{p}_i^c &= \arg\min_{p\in\{p_{i,j}^u:j\in[\kappa_i^u]\}} \left|\hat d(p)-B/T\right|,
     \label{eq:CtG_price_estimate_u}
\end{aligned}
\end{equation}
and price interval $I_{i+1}^u$ for round $i+1$ is updated to be
\begin{equation}
\begin{aligned}
    \underline{p}_{i+1}^u &= \hat{p}_i - \frac{\log(T)}{3} \cdot \frac{\bar{p}_i^u - \underline{p}_i^u}{\kappa_i^u}\\
    \bar{p}_{i+1}^u &= \hat{p}_i + \frac{2 \log(T)}{3} \cdot \frac{\bar{p}_i^u - \underline{p}_i^u}{\kappa_i^u}.
    \label{eq:CtG_interval_update_u}
\end{aligned}
\end{equation}
In exploring $p^c$, $p^c$ is estimated to be
\begin{equation}
    \hat q_i \in \arg\min_{p\in\{p_{i,j}^c:j\in[\kappa_i^c]\}} \left|\hat d(p)-B/T\right|,
    \label{eq:CtG_price_estimate_c}
\end{equation}
and price interval $I_{i+1}^c$ for round $i+1$ is updated to be
\begin{equation}
\begin{aligned}
    \underline{p}_{i+1}^c &= \hat{q}_i - \frac{\log(T)}{2} \cdot \frac{\bar{p}_i^c - \underline{p}_i^c}{\kappa_i^c}\\
    \bar{p}_{i+1}^c &= \hat{q}_i + \frac{\log(T)}{2} \cdot \frac{\bar{p}_i^c - \underline{p}_i^c}{\kappa_i^c}.
    \label{eq:CtG_interval_update_c}
\end{aligned}
\end{equation}

We summarize the Discrete DPA in Algorithm~\ref{alg:DPA}.
\begin{algorithm}[!ht]
\caption{\texttt{Discrete DPA}}
\label{alg:DPA}
\begin{algorithmic}
\State \textbf{input:} a sequence of $\tau_i^u$, $\kappa_i^u$ for $i=1,2,\ldots,N^u$ and $\tau_i^c$, $\kappa_i^c$ for $i=1,2,\ldots,N^c$
\State \textit{Step 1:} \Comment{Learn $p^u$ or determine $p^c>p^u$}
\For{$i=1,2,\ldots,N^u$}
    \State Set $p_{i,j}^u$ and $t_{i,j}^u$ for $j = 0, 1, \ldots, \kappa_i^u$ according to Equations~\eqref{eq:CtG_interval_divide}
    \For{$j=1,2,\ldots,\kappa_i^u$}
        \State Apply $p_{i,j}^u$ for $\Delta_i^u=\lceil T\tau_i^u/\kappa_i^u \rceil$ steps; Or apply $p_{\infty}$ until time $T$ if inventory runs out
    \EndFor
    \State Estimate $\hat{p}_i^u$ and $\hat{p}_i^c$ according to Equations~\eqref{eq:CtG_price_estimate_u}
    \If{$ \hat{p}_i^c > \hat{p}_i^u + 2 \sqrt{\log (T)} \cdot \frac{\bar{p}_i^u - \underline{p}_i^u}{\kappa_i^u}$}
    \State Set $i_0=i$. Go to Step 2.
    \Else
    \State Set $\hat{p}_i = \max\{\hat{p}_i^c, \hat{p}_i^u\}$ and update $I_{i+1}^u$ according to Equations~\eqref{eq:CtG_interval_update_u}
    \EndIf
    \If{$i=N^u$}
    \State Go to Step 3 with learned price $\tilde{p} = \hat{p}_{N^u} + 2 \sqrt{\log (T)} \cdot \frac{\bar{p}_{N^u}^u - \underline{p}_{N^u}^u}{\kappa_{N^u}^u}$
    \EndIf
\EndFor
\State \textit{Step 2:} \Comment{Learn $p^c$ when $p^c>p^u$}
\For{$i=1,2,\ldots,N^c$}
    \State Set $p_{i,j}^c$ and $t_{i,j}^c$ for $j = 0, 1, \ldots, \kappa_i^c$ according to Equations~\eqref{eq:CtG_interval_divide_c}
    \For{$j=1,2,\ldots,\kappa_i^c$}
        \State Apply $p_{i,j}^c$ for $\Delta_i^c=\lceil T\tau_i^c/\kappa_i^c \rceil$ steps; Or apply $p_{\infty}$ until time $T$ if inventory runs out
    \EndFor
    \State Estimate $\hat{q}_i$ according to Equations~\eqref{eq:CtG_price_estimate_c} and update $I_{i+1}^c$ according to Equations~\eqref{eq:CtG_interval_update_c}
    \If{$i=N^c$}
    \State Go to Step 3 with learned price $\tilde{q} = \hat{q}_{N^c}$
    \EndIf
\EndFor
\State \textit{Step 3:} \Comment{Apply learned price}
\State Apply the learned price ($\tilde{p}$ or $\tilde{q}$) until inventory or time budget run out
\end{algorithmic}
\end{algorithm}

Algorithm \ref{alg:DPA} depends on pre-designed sequences of $\tau_i^u$, $\kappa_i^u$, $\tau_i^c$, $\kappa_i^c$, and value $N^u$, $N^c$. We use the same design as the one given in \cite{wang2014close} (Equations 18 to 31) so that some of the results can be directly borrowed. Note that an important difference between Algorithm \ref{alg:DPA} and \texttt{DPA} given in \cite{wang2014close} is that each price $p_{i,j}^u$ is applied for $\lceil T\tau_i^u/\kappa_i^u\rceil$ time steps instead of a continuous time interval $\tau_i^u/\kappa_i^u$. In the implementation, each displayed $\kappa_i^u$ and $\kappa_i^c$ is rounded to a positive integer. This changes all bounds below by constant factors only, so we keep the same notation.
\[
\tau_1^u = T^{-\frac{1}{2}}\cdot \log (T)^{3.5} \quad \text{and} \quad \tau_i^u = T^{-\frac{1}{2} \cdot \left(\frac{3}{5}\right)^{i-1}} \cdot \log (T)^5, \quad \forall i = 2, \ldots, N^u,
\]
\[
\kappa_i^u = T^{\frac{1}{10} \cdot \left(\frac{3}{5}\right)^{i-1}} \cdot \log (T), \quad \forall i = 1, 2, \ldots, N^u,
\]
\[
\bar{p}_i^u - \underline{p}_i^u = T^{-\frac{1}{4} \left(1 - \left(\frac{3}{5}\right)^{i-1}\right)}, \quad \forall i = 1, 2, \ldots, N^u,
\]
$$
\text{where } N^u:=\min_l\left\{l:\left(\frac{\bar{p}_l^u - \underline{p}_l^u}{\kappa_l^u} \right)^2\sqrt{\log (T)}<\tau_1^u  \right\}.
$$

Similarly, for $p^c$, we have
\[
\tau_1^c = T^{-\frac{1}{2}} \cdot \log (T)^{2.5} \quad \text{and} \quad \tau_i^c = T^{-\frac{1}{2} \cdot \left(\frac{2}{3}\right)^{i-1}} \cdot \log (T)^3, \quad \forall i = 2, \ldots, N^c, 
\]
\[
\kappa_i^c = T^{\frac{1}{6} \cdot \left(\frac{2}{3}\right)^{i-1}} \cdot \log (T), \quad \forall i = 1, 2, \ldots, N^c.
\]
\[
\bar{p}_i^c - \underline{p}_i^c = T^{-\frac{1}{2} \left( 1 - \left( \frac{2}{3} \right)^{i-1} \right)}, \quad \forall i = 1, \ldots, N^c.
\]
$$
\text{where } N^c:=\min_l\left\{l:\frac{\bar{p}_l^c - \underline{p}_l^c}{\kappa_l^c} \sqrt{\log (T)}<\tau_1^c  \right\}.
$$

For $T<3$, define \texttt{Discrete DPA} to post $p_\infty$ throughout; below we analyze $T\geq3$.

\begin{theorem}
    Algorithm~\ref{alg:DPA} \texttt{Discrete DPA}  has an expected strong regret bound as
    \begin{equation}
    \text{OPT} - \text{REW}_S\leq C\sqrt T\log(T)^6.
    \end{equation}
\label{thm:strong_dpa}
\end{theorem}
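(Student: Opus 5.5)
Since every term of the strong regret is $\max\{p^D\lambda(p^D)-r(p(t)),0\}$, I will bound it period by period, split by the phase of \texttt{Discrete DPA} in which the period falls: Step 1 (learning $p^u$), Step 2 (learning $p^c$, if entered), and Step 3 (exploitation). The positive part removes the compensation between periods that the original analysis of \cite{wang2014close} relies on. It therefore suffices to bound, for each phase, the number of periods times the worst per-period deficit in that phase. The deficit is at most $\bar p M$, and it is at most $K|p(t)-p^D|$ by Lipschitz continuity of $r(\lambda(p))$. I work on a clean event $\mathcal G$ on which every demand estimate $\hat d(p_{i,j})$ lies within $\sqrt{\log T/\Delta_i}$ of $\lambda(p_{i,j})$. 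By Hoeffding's inequality and a union bound over the at most $\mathrm{polylog}(T)\cdot T^{1/10}$ tested prices, $\Pr(\mathcal G^c)\le T^{-1}$. The complement therefore contributes $O(1)$ to the strong regret.

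\textbf{Exploration phases.} In round $i$ of Step 1 the algorithm runs for $\sum_j\Delta_i^u\le T\tau_i^u+\kappa_i^u$ periods, and every price used lies in $I_i^u$. On $\mathcal G$, Lemmas 4--6 of \cite{wang2014close} (their discrete analogues follow because $\Delta_i^u$ is a rounded-up continuous duration) give $p^u,p^D\in I_i^u$, so the per-period deficit is at most $K(\bar p_i^u-\underline p_i^u)$. Round $i$ thus contributes at most $K(T\tau_i^u+\kappa_i^u)(\bar p_i^u-\underline p_i^u)$. With the schedule above, $T\tau_i^u(\bar p_i^u-\underline p_i^u)=T^{1-\frac12(3/5)^{i-1}-\frac14(1-(3/5)^{i-1})}\log(T)^5$. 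Taking exponents, the exponent is $\tfrac34-\tfrac14(3/5)^{i-1}\le \tfrac12$ for $i=1$, which is the $\tau_1^u$ term, but it grows beyond $1/2$ for $i\ge2$.

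That would break a $\sqrt T$ bound, so instead of the crude Lipschitz bound I will exploit the curvature assumption $-m_L\le r''\le -m_U$. Near $p^u$, the deficit is quadratic, $O((\bar p_i^u-\underline p_i^u)^2)$, whenever $p^D=p^u$. This yields $T\tau_i^u(\bar p_i^u-\underline p_i^u)^2=T^{1/2}\log(T)^5$ exactly by the design of the exponents; the schedule of \cite{wang2014close} is built for this cancellation. When $p^D=p^c>p^u$, every Step-1 price satisfies $p\le p^c$, with demand at least $B/T$. I then compare against $p^c$ using concavity in $\lambda$ together with the definition of $p^D$, and again get a deficit that is quadratic in the interval width plus a term linear in $|\lambda(p)-B/T|$. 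For Step 2 the schedule $\tau_i^c(\bar p_i^c-\underline p_i^c)=T^{-1/2}\log(T)^3$ makes even the linear Lipschitz bound sum to $O(\sqrt T\log(T)^3)$. Summing over $N^u,N^c=O(\log\log T)$ rounds keeps the total within $O(\sqrt T\log(T)^6)$.

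\textbf{Exploitation phase and the main obstacle.} In Step 3 the learned price satisfies $|\tilde p-p^D|=O(\sqrt{\log T}\,T^{-1/4}\cdot\text{(final grid)})$. By the definition of $N^u$ and $N^c$, on $\mathcal G$ this gives a per-period deficit $O(\tau_1^u)$ or $O(\tau_1^c)$, i.e.\ $O(T^{-1/2}\mathrm{polylog}T)$. Over at most $T$ periods this is $O(\sqrt T\,\mathrm{polylog}T)$, provided inventory lasts until $T$.

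The hard part is exactly that proviso. If inventory runs out at time $\hat T<T$, every remaining period posts $p_\infty$ and incurs the full deficit $p^D\lambda(p^D)$. In standard regret this loss is partly offset by earlier over-selling at high revenue; under strong regret it is not. I would handle it by the argument of \cite{wang2014close}, Lemma 8: $\tilde p$ is shifted upward (the $+2\sqrt{\log T}\cdot$grid correction in Step 1, and the choice $\tilde q\approx p^c$ in Step 2), so $\lambda(\tilde p)\le B/T+O(T^{-1/2}\mathrm{polylog}T)$. Then Azuma's inequality on cumulative sales gives $\E[(T-\hat T)^+]=O(\sqrt T\,\mathrm{polylog}T)$, since inventory consumed during exploration exceeds its proportional share by at most $O(\sqrt T\,\mathrm{polylog}T)$. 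Multiplying by the per-period deficit bound $\bar pM$ shows this term is also $O(\sqrt T\log(T)^6)$. Collecting all terms proves Theorem~\ref{thm:strong_dpa}.
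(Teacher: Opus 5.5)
There is a genuine gap in your Step-1 analysis for the constrained case $p^D=p^c>p^u$. First, the premise that every Step-1 price satisfies $p\le p^c$ is false. $I_1^u=[\underline p,\bar p]$ is the whole price range. For $i\ge 2$ the update places $I_i^u$ around $\hat p_{i-1}=\max\{\hat p_{i-1}^c,\hat p_{i-1}^u\}\approx p^D$ and extends it $\frac{2\log T}{3}$ grid steps above that point. So prices above $p^c$, with demand below $B/T$ and $r(p)<r(p^c)$, are posted.

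Second, and more seriously, a per-period deficit of the form ``quadratic in the width plus a term linear in $|\lambda(p)-B/T|$'' does not suffice even if the premise held. On $I_i^u$ the quantity $|\lambda(p)-B/T|$ can be of order $K(\bar p_i^u-\underline p_i^u)$. Round $i\ge 2$ then contributes order $T\tau_i^u(\bar p_i^u-\underline p_i^u)=T^{3/4-(3/5)^{i-1}/4}\log(T)^5$, which is exactly the quantity you showed exceeds $\sqrt T$.

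The same issue affects your Step-3 claim on $B_{N^u+1}$. Write $h_i=(\bar p_i^u-\underline p_i^u)/\kappa_i^u$ for the Step-1 grid step. When $p^D=p^c$, the estimate $|\tilde p-p^D|=O(\sqrt{\log T}\,h_{N^u})$ alone gives only a linear per-period deficit of order $T^{-1/4}\mathrm{polylog}(T)$, hence $T^{3/4}$ in total, not the $O(\tau_1^u)$ you assert.

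The missing ingredient is the Step-1 switching test, which your argument never uses. On the clean event, $\hat p_i^u$ is within $O(\sqrt{\log T}\,h_i)$ of $p^u$, and $\hat p_i^c$ is within $O(h_i)$ of $p^c$. So \emph{not} switching, i.e. $\hat p_i^c\le\hat p_i^u+2\sqrt{\log T}\,h_i$, forces $p^c-p^u=O(\sqrt{\log T}\,h_i)$. This is $o(\log T\,h_i)=o(\bar p_{i+1}^u-\underline p_{i+1}^u)$. Hence, while Step 1 runs, $p^u$ (not only $p^D$) stays in $I_i^u$, and on $B_{N^u+1}$ the price $\tilde p$ is within $O(\sqrt{\log T}\,h_{N^u})$ of $p^u$.

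Now compare with $p^u$ rather than $p^c$. Since $r(p^D)\le r(p^u)$, every Step-1 price satisfies $\max\{r(p^D)-r(p),0\}\le r(p^u)-r(p)\le C(p-p^u)^2\le C(\bar p_i^u-\underline p_i^u)^2$, by the same curvature bound you already use when $p^D=p^u$. Likewise $r(p^D)-r(\tilde p)\le C\log T\,h_{N^u}^2\le C\sqrt{\log T}\,\tau_1^u$ by the definition of $N^u$. This case-free quadratic bound is exactly the local estimate $g(p_{i,j}^u)\ge p^D\lambda(p^D)-C(\bar p_i^u-\underline p_i^u)^2$ "for every reached $u$-round" that the paper imports from Lemmas 4--7 of \cite{wang2014close}. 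With it, your sum $T\tau_i^u(\bar p_i^u-\underline p_i^u)^2=\sqrt T\log(T)^5$ closes.

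The rest of your plan follows the paper's proof: the clean event, the linear bound in Step 2, and charging stockout to the inventory overrun of the discrete Lemmas 5 and 7. The paper expresses that last term as lost sales weighted by $\rho(\tilde p)$ rather than lost periods.
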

\begin{proof}
We first bound the round number $N^u$ and $N^c$ by
\begin{proposition}[Lemma 2 in \cite{wang2014close}]
    $N^u$ and $N^c$ defined above satisfy: $N^u\leq3\log (T)$ and $N^c\leq3\log (T)$, for $T\geq3$.
\end{proposition}

In order to make sure that $p^D$ is always contained in the price interval with high probability, a concentration inequality should first be derived as
\begin{proposition}[Lemma 11 in \cite{wang2014close}]
    The arrival rate $\hat{\lambda}$ for each price $p_{i,j}^u$ can be estimated with high accuracy to be
    \begin{equation*}
        \Pr \left( \left| \hat{\lambda}(p_{i,j}^u) - \lambda(p_{i,j}^u) \right| > C \sqrt{\log (T)} \cdot \sqrt{\frac{\kappa_i^u}{T \tau_i^u}} \right) \leq \frac{1}{T^2}.
    \end{equation*}
    The same inequality holds for $p_{i,j}^c$ with $\kappa_i^c$ and $\tau_i^c$.
\end{proposition}
Although this has exactly the same form as the one in \cite{wang2014close}, it is worth mentioning that its derivation is slightly different. In original paper, the analysis is based on estimating $n\lambda(p_{i,j}^u)$ within a time period $\tau_i^u/\kappa_i^u$ asymptotic in $n$. But in our algorithm, we estimate $\lambda(p_{i,j}^u)$ within a time $\lceil T\tau_i^u/\kappa_i^u\rceil$ asymptotic in $T$, which can lead to a stronger bound than estimating within a time $T\tau_i^u/\kappa_i^u$ (which is equivalent to the one in original paper). We loosen this stronger bound and retain the same form. Then, we propose the high probability event as
\begin{proposition}[Lemma 3 in \cite{wang2014close}]
    Assume $p^D$ is the optimal price for the deterministic problem and Assumption 1 holds for $\Gamma = \Gamma(M, K, m_L, m_U)$. Define $A_0$ to be the event satisfying the following conditions:
\begin{enumerate}
    \item \textit{If we never enter Step 2, then $p^D \in I_i^u$ for all $i = 1, 2, \ldots, N^u$;}
    \item \textit{If Step 1 stops at $i_0$ and the algorithm enters Step 2, then $p^c \geq p^u$, $p^D \in I_i^u$ for all $i = 1, 2, \ldots, i_0$ and $p^D \in I_j^c$ for all $j = 1, 2, \ldots, N^c$.}
\end{enumerate}
Then $\Pr(A_0) = 1 - O\left(\frac{1}{T}\right)$.
\end{proposition}
\noindent We also use the events defined in \cite{wang2014close}: define $B_k:=\{i_0=k \}$ for $k\in[N^u]$ to denote the event that Step 1 jumps to Step 2 at round $k$, and define $B_{N^u+1}=\{\text{Algorithm never enters Step }2 \}$; $A_1 := \left\{ \omega \in A_0 : \sum_{i=1}^{N^u} \sum_{j=1}^{\kappa_i^u} \lambda(p_{i,j}^u)\Delta_i^u< B \right\}$, $A_2 = \left\{ \omega \in A_0 : \sum_{i=1}^{N^u} \sum_{j=1}^{\kappa_i^u} \lambda(p_{i,j}^u)\Delta_i^u + \sum_{i=1}^{N^c} \sum_{j=1}^{\kappa_i^c} \lambda(p_{i,j}^c)\Delta_i^c < B \right\}$.

We also provide some important relationships among the parameters
\begin{proposition}[Equations 18 to 25 in \cite{wang2014close}]
    When entering Step 3, the length of updated price intervals can be bounded by 
    \begin{equation*}
    \begin{aligned}
        &(\bar{p}_{N^u+1}^u - \underline{p}_{N^u+1}^u)^2\leq C\tau_1^u\\
        &\bar{p}_{N^c+1}^c - \underline{p}_{N^c+1}^c\leq C\tau_1^c
    \end{aligned}
    \end{equation*}
    \label{prop:bounded_final_interval}
\end{proposition}

Denote the clipped strong reward $g(p):=\min\{p\lambda(p),p^D\lambda(p^D)\}$ and then $\rho(p):=g(p)/\lambda(p)\cdot\bm{1}(\lambda(p)>0)$. Then $0\leq\rho(p)\leq p\leq\bar p$ and $\rho(p)\lambda(p)=g(p)$. For each exploration block, let $Y_{i,j}^u$ and $Y_{i,j}^c$ denote the corresponding total Bernoulli arrivals. Let $\widehat Y^u$ denote the Bernoulli arrivals that would occur during the final $\tilde p$ period, and let $\widehat Y_l^c$ denote the arrivals that would occur during the final $\tilde q$ period on $B_l$. Then, the total number of exploration steps is upper bounded by $T \sum_i(\tau_i^u+\tau_i^c) + \sum_i(\kappa_i^u+\kappa_i^c) = o(T)$. Hence, because each Bernoulli arrival consumes at most one unit and $B\geq bT$, for all sufficiently large $T$ inventory cannot be exhausted during exploration. We prove the bound below for such $T$ and all smaller horizons are absorbed by increasing the final constant $C_1$.

On $A_0$, the deterministic comparisons used in Lemmas 4--7 of \cite{wang2014close} give
\begin{proposition}[Lemmas 4--7 in \cite{wang2014close}] 
On $A_0$, the deterministic comparisons give
\begin{equation}
\begin{aligned}
g(p_{i,j}^u) &\geq p^D\lambda(p^D)-C(\bar p_i^u-\underline p_i^u)^2,
&&\text{for every reached $u$-round},\\
g(\tilde p) &\geq p^D\lambda(p^D)-C\tau_1^u,
&&\text{on }B_{N^u+1},\\
g(p_{i,j}^c) &\geq p^D\lambda(p^D)-C(\bar p_i^c-\underline p_i^c),
&&\text{after switching to Step 2},\\
g(\tilde q) &\geq p^D\lambda(p^D)-C\tau_1^c,
&&\text{on }\bigcup_{l=1}^{N^u}B_l.
\end{aligned}
\label{eq:CtG_local_strong_bounds}
\end{equation}
\end{proposition}

\begin{proposition}[Discrete counterparts of Lemmas 5 and 7 in \cite{wang2014close}]
The inventory-overrun terms satisfy
\begin{equation}
\begin{aligned}
&\E\left[\left( \widehat Y^u + \sum_{i=1}^{N^u}\sum_{j=1}^{\kappa_i^u}Y_{i,j}^u-B \right)^ + I(B_{N^u+1})\right]
\leq C\sqrt{T}\log^6 T,\\
&\sum_{l=1}^{N^u} \E\left[\left(\widehat Y_l^c + \sum_{i=1}^{l}\sum_{j=1}^{\kappa_i^u}Y_{i,j}^u + \sum_{i=1}^{N^c}\sum_{j=1}^{\kappa_i^c}Y_{i,j}^c-B \right)^+ I(B_l) \right]
\leq C\sqrt{T}\log^6 T.
\end{aligned}
\label{eq:CtG_discrete_overrun}
\end{equation}
\end{proposition}
The predictable-mean part is the same as in the proofs of Lemmas 5 and 7, because it only uses the demand probabilities and the DPA price path. Replacing Poisson arrivals by Bernoulli arrivals changes the stochastic part to a martingale with conditional variance at most $T$, so its expected absolute fluctuation is at most $\sqrt T$. Finally, the ceiling operation contributes at most $C\sum_i(\kappa_i^u+\kappa_i^c)\leq C T^{1/6}\log^2T$. All three terms are bounded by the displayed right-hand side.

Similar to Equation (32) in \cite{wang2014close}, the strong reward can be bounded by
\begin{equation}
\begin{aligned}
\text{REW}_S
\geq& \E \left[ \sum_{i=1}^{N^u} \sum_{j=1}^{\kappa_i^u} g(p_{i,j}^u)\Delta_i^u I \left( \bigcup_{l=i}^{N^u+1} B_l \right) \right] \\
+& \E \left[ \rho(\tilde p)\min\left\{\widehat Y^u,\left(B-\sum_{i=1}^{N^u}\sum_{j=1}^{\kappa_i^u}Y_{i,j}^u\right)^+\right\}I(B_{N^u+1})\right]\\
+& \E \left[ \sum_{i=1}^{N^c} \sum_{j=1}^{\kappa_i^c} g(p_{i,j}^c)\Delta_i^c I \left( \bigcup_{l=1}^{N^u} B_l \right) \right] \\
+& \sum_{l=1}^{N^u}\E \left[ \rho(\tilde q)\min\left\{\widehat Y_l^c,\left(B-\sum_{i=1}^{l}\sum_{j=1}^{\kappa_i^u}Y_{i,j}^u-\sum_{i=1}^{N^c}\sum_{j=1}^{\kappa_i^c}Y_{i,j}^c\right)^+\right\}I(B_l)\right].
\end{aligned}
\label{eq:Close_the_Gap}
\end{equation}
The first term of Equation~\eqref{eq:Close_the_Gap} is the cumulative strong reward in exploring $p^u$, denoted as $\E[\text{Explore }p^u\text{ Reward}]$. The second term is reward in repeating $\tilde{p}$, denoted as $\E[\text{Exploit }\tilde{p}\text{ Reward}]$. The third term is reward in exploring $p^c$, denoted as $\E[\text{Explore }p^c\text{ Reward}]$. The fourth term is reward in repeating $\tilde{q}$, denoted as $\E[\text{Exploit }\tilde{q}\text{ Reward}]$. Namely, the total reward can be divided into
\begin{equation}
\begin{aligned}
    \text{REW}_S \geq& \E[\text{Explore }p^u\text{ Reward}] + \E[\text{Exploit }\tilde{p}\text{ Reward}]\\
    &+ \E[\text{Explore }p^c\text{ Reward}] + \E[\text{Exploit }\tilde{q}\text{ Reward}].
\label{eq:Close_the_Gap_simple}
\end{aligned}
\end{equation}
We will bound these four terms one by one.

\noindent\textbf{First term of Equation~\eqref{eq:Close_the_Gap_simple}} is the cumulative strong reward in exploring $p^u$. We have
\begin{equation}
\begin{aligned}
\E[\text{Explore }p^u\text{ Reward}]
&\geq \sum_{i=1}^{N^u}T\tau_i^u p^D\lambda(p^D)\Pr\left(\bigcup_{l=i}^{N^u+1}B_l\right)-CT\sum_{i=1}^{N^u}\tau_i^u(\bar p_i^u-\underline p_i^u)^2-C\\
&=\sum_{i=1}^{N^u}T\tau_i^u p^D\lambda(p^D)\Pr\left(\bigcup_{l=i}^{N^u+1}B_l\right)-C\left(\sqrt{T}\log^{3.5}T+(N^u-1)\sqrt{T}\log^5T\right)-C\\
&\geq \sum_{i=1}^{N^u}T\tau_i^u p^D\lambda(p^D)\Pr\left(\bigcup_{l=i}^{N^u+1}B_l\right)-C\sqrt{T}\log^6T.
\end{aligned}
\label{eq:Close_the_Gap_1_bound}
\end{equation}
The first inequality follows from $\Delta_i^u\geq T\tau_i^u/\kappa_i^u$, Equation~\eqref{eq:CtG_local_strong_bounds}, and $\Pr(A_0^c)=O(1/T)$. The equality follows from $T\tau_1^u(\bar p_1^u-\underline p_1^u)^2=\sqrt{T}\log^{3.5}T$ and $T\tau_i^u(\bar p_i^u-\underline p_i^u)^2=\sqrt{T}\log^5T$ for $i\geq2$. The last inequality follows from $N^u\leq3\log T$.

\noindent\textbf{Second term of Equation~\eqref{eq:Close_the_Gap_simple}} is the cumulative strong reward in exploiting $\tilde p$. We have
\begin{equation}
\begin{aligned}
\E[\text{Exploit }\tilde p\text{ Reward}]
&\geq \E\left[\rho(\tilde p)\widehat Y^u I(B_{N^u+1})\right]-\bar p\,\E\left[\left(\widehat Y^u+\sum_{i,j}Y_{i,j}^u-B\right)^+I(B_{N^u+1})\right]\\
&=\E\left[g(\tilde p)(T-t_{N^u}^u)I(B_{N^u+1})\right]-\bar p\,\E\left[\left(\widehat Y^u+\sum_{i,j}Y_{i,j}^u-B\right)^+I(B_{N^u+1})\right]\\
&\geq (p^D\lambda(p^D)-C\tau_1^u)\left(T\left(1-\sum_{i=1}^{N^u}\tau_i^u\right)-\sum_{i=1}^{N^u}\kappa_i^u\right)\Pr(B_{N^u+1})\\
&\quad-C-\bar p\,\E\left[\left(\widehat Y^u+\sum_{i,j}Y_{i,j}^u-B\right)^+I(B_{N^u+1})\right]\\
&\geq T\left(1-\sum_{i=1}^{N^u}\tau_i^u\right)p^D\lambda(p^D)\Pr(B_{N^u+1})-C\sqrt T\log^6T.
\end{aligned}
\label{eq:Close_the_Gap_2_bound}
\end{equation}
The first inequality follows from $\min\{a,(B-c)^+\}\geq a-(a+c-B)^+$ and $\rho(\tilde p)\leq\bar p$. The equality follows by conditioning on the history before the final period and using $\rho(\tilde p)\lambda(\tilde p)=g(\tilde p)$. The second inequality follows from Equation~\eqref{eq:CtG_local_strong_bounds}, $\Pr(A_0^c)=O(1/T)$, and $t_{N^u}^u\leq\sum_{i=1}^{N^u}(T\tau_i^u+\kappa_i^u)$. The last inequality follows from Equation~\eqref{eq:CtG_discrete_overrun}, $T\tau_1^u=\sqrt T\log^{3.5}T$, and $\sum_{i=1}^{N^u}\kappa_i^u\leq CT^{1/10}\log^2T$.

\noindent\textbf{Third term of Equation~\eqref{eq:Close_the_Gap_simple}} is the cumulative strong reward in exploring $p^c$. We have
\begin{equation}
\begin{aligned}
\E[\text{Explore }p^c\text{ Reward}]
&\geq \sum_{i=1}^{N^c}T\tau_i^c\left(p^D\lambda(p^D)-C(\bar p_i^c-\underline p_i^c)\right)\Pr\left(\bigcup_{l=1}^{N^u}B_l\right)-C\\
&=\sum_{i=1}^{N^c}T\tau_i^cp^D\lambda(p^D)\Pr\left(\bigcup_{l=1}^{N^u}B_l\right)\\
&\quad-C\left(\sqrt T\log^{2.5}T+(N^c-1)\sqrt T\log^3T\right)-C\\
&\geq \sum_{i=1}^{N^c}T\tau_i^cp^D\lambda(p^D)\Pr\left(\bigcup_{l=1}^{N^u}B_l\right)-C\sqrt T\log^4T.
\end{aligned}
\label{eq:Close_the_Gap_3_bound}
\end{equation}
The first inequality follows from $\Delta_i^c\geq T\tau_i^c/\kappa_i^c$, Equation~\eqref{eq:CtG_local_strong_bounds}, and $\Pr(A_0^c)=O(1/T)$. The equality follows from $T\tau_1^c(\bar p_1^c-\underline p_1^c)=\sqrt T\log^{2.5}T$ and $T\tau_i^c(\bar p_i^c-\underline p_i^c)=\sqrt T\log^3T$ for $i\geq2$. The last inequality follows from $N^c\leq3\log T$.

\noindent\textbf{Fourth term of Equation~\eqref{eq:Close_the_Gap_simple}} is the cumulative strong reward in exploiting $\tilde q$. We have
\begin{equation}
\begin{aligned}
\E[\text{Exploit }\tilde q\text{ Reward}]
&\geq \sum_{l=1}^{N^u}\E\left[\rho(\tilde q)\widehat Y_l^c I(B_l)\right]-\bar p\sum_{l=1}^{N^u}\E\left[\left(\widehat Y_l^c+\sum_{i=1}^{l}\sum_jY_{i,j}^u+\sum_{i=1}^{N^c}\sum_jY_{i,j}^c-B\right)^+I(B_l)\right]\\
&=\sum_{l=1}^{N^u}\E\left[g(\tilde q)(T-t_{N^c}^c-t_l^u)I(B_l)\right] \\
&\quad - \bar p\sum_{l=1}^{N^u}\E\left[\left(\widehat Y_l^c+\sum_{i=1}^{l}\sum_jY_{i,j}^u+\sum_{i=1}^{N^c}\sum_jY_{i,j}^c-B\right)^+I(B_l)\right]\\
&\geq \sum_{l=1}^{N^u}(p^D\lambda(p^D)-C\tau_1^c)\left(T\left(1-\sum_{i=1}^{l}\tau_i^u-\sum_{i=1}^{N^c}\tau_i^c\right)-\sum_{i=1}^{l}\kappa_i^u-\sum_{i=1}^{N^c}\kappa_i^c\right)\Pr(B_l)-C\\
&\qquad-\bar p\sum_{l=1}^{N^u}\E\left[\left(\widehat Y_l^c+\sum_{i=1}^{l}\sum_jY_{i,j}^u+\sum_{i=1}^{N^c}\sum_jY_{i,j}^c-B\right)^+I(B_l)\right]\\
&\geq \sum_{l=1}^{N^u}Tp^D\lambda(p^D)\left(1-\sum_{i=1}^{l}\tau_i^u-\sum_{i=1}^{N^c}\tau_i^c\right)\Pr(B_l)-C\sqrt T\log^6T.
\end{aligned}
\label{eq:Close_the_Gap_4_bound}
\end{equation}
The first inequality follows from $\min\{a,(B-c)^+\}\geq a-(a+c-B)^+$ and $\rho(\tilde q)\leq\bar p$. The equality follows by conditioning on the history before the final period and using $\rho(\tilde q)\lambda(\tilde q)=g(\tilde q)$. The second inequality follows from Equation~\eqref{eq:CtG_local_strong_bounds}, $\Pr(A_0^c)=O(1/T)$, and $t_l^u+t_{N^c}^c\leq T\left(\sum_{i=1}^{l}\tau_i^u+\sum_{i=1}^{N^c}\tau_i^c\right)+\sum_{i=1}^{l}\kappa_i^u+\sum_{i=1}^{N^c}\kappa_i^c$. The last inequality follows from Equation~\eqref{eq:CtG_discrete_overrun}, $T\tau_1^c=\sqrt T\log^{2.5}T$, and $\sum_i(\kappa_i^u+\kappa_i^c)\leq CT^{1/6}\log^2T$.

\noindent\textbf{Combining all terms:} Combining Equations~\eqref{eq:Close_the_Gap_1_bound}--\eqref{eq:Close_the_Gap_4_bound}, we have
\begin{equation}
\begin{aligned}
\text{REW}_S
&\geq Tp^D\lambda(p^D)\Bigg[
\sum_{i=1}^{N^u}\tau_i^u\Pr\left(\bigcup_{l=i}^{N^u+1}B_l\right)
+\left(1-\sum_{i=1}^{N^u}\tau_i^u\right)\Pr(B_{N^u+1})\\
&\quad
+\left(\sum_{i=1}^{N^c}\tau_i^c\right)\Pr\left(\bigcup_{l=1}^{N^u}B_l\right)
+\sum_{l=1}^{N^u}\left(1-\sum_{i=1}^{l}\tau_i^u-\sum_{i=1}^{N^c}\tau_i^c\right)\Pr(B_l)
\Bigg]-C\sqrt T\log^6T\\
&=Tp^D\lambda(p^D)\sum_{l=1}^{N^u+1}\Pr(B_l)-C\sqrt T\log^6T\\
&=\text{OPT}-C\sqrt T\log^6T.
\end{aligned}
\label{eq:Close_the_Gap_final_reward}
\end{equation}
The first equality follows by collecting the coefficient of each $\Pr(B_l)$, which is one for every $l=1,\ldots,N^u+1$, and the second equality follows because $B_1,\ldots,B_{N^u+1}$ form a partition and $\text{OPT}=Tp^D\lambda(p^D)$. Therefore,
\begin{equation}
\text{OPT}-\text{REW}_S\leq C\sqrt T\log^6T.
\label{eq:Close_the_Gap_final}
\end{equation}

\end{proof}

\end{appendices}

\end{document}